\def\BibTeX{{\rm B\kern-.05em{\sc i\kern-.025em b}\kern-.08em
    T\kern-.1667em\lower.7ex\hbox{E}\kern-.125emX}}
\DeclareMathAlphabet{\mathpzc}{OT1}{pzc}{m}{it}
\newcommand{\R}{\mathbb{R}}
\theoremstyle{definition}
\newtheorem{definition}{Definition}
\newtheorem{lemma}{Lemma}
\newtheorem{theorem}{Theorem}
\newtheorem{remark}{Remark}
\newcommand{\norm}[1]{\left\lVert #1\right\rVert}
\DeclareMathOperator*{\argmin}{arg\,min}
\let\argmax\relax
\DeclareMathOperator*{\argmax}{arg\,max}
\newcommand{\doi}[1]{\href{http://dx.doi.org/#1}{\normalsize{\textsc{doi:}}~\nolinkurl{#1}}}
\newcommand{\arxiv}[1]{\href{http://arxiv.org/abs/#1}{\normalsize{\textsc{arxiv:}}~\nolinkurl{#1}}}
\newcommand{\HRule}{\noindent\rule{\linewidth}{0.1mm}\newline}
\renewcommand{\phi}{\varphi}
\newcommand{\N}{\mathbb{N}}
\newcommand{\cbf}{C}
\newcommand{\cert}{\cbf}
\newcommand{\LfC}{L_{f} \cert}
\newcommand{\LgC}{L_{g} \cert}
\newcommand{\gpaugdomain}{\bar{\mathcal{X}}}
\newcommand{\gpvar}{x}
\newcommand{\gpvaraug}{\bar{x}}
\newcommand{\gpmuB}{\mu}
\newcommand{\gpsigmaB}{\sigma}
\newcommand{\gpGramB}{\Sigma}
\newcommand{\socc}{\widehat{L_g \cert}(x|\dataset)}
\newcommand{\soccsimple}{\widehat{L_g \cert}(x)}
\newcommand{\socconline}{\widehat{L_g \cert}(x|\datasetonline)}
\newcommand{\dataset}{\mathbb{D}_N}
\newcommand{\datasetonline}{\mathbb{D}_M}
\newcommand{\xquery}{x_{*}}
\newcommand{\uquery}{u_{*}}
\newcommand{\xuquery}{x_{*}, u_{*}}
\newcommand{\yquerycol}{\left[\!\!\begin{array}{c} 1 \\ \uquery \\\end{array}\!\!\right]}
\newcommand{\yqueryrow}{\left[1 \;\; \uquery^\top \right]}
\newcommand{\nqi}{n_{i}}
\newcommand{\nnqi}{n_{i}'}
\newcommand{\kadp}{\mathbf{k}}
\newcommand{\kadpu}{\mathbf{k}^{u}}
\newcommand{\Kadp}{K_{\dataset}}
\newcommand{\Kadponline}{K_{\datasetonline}}
\newcommand{\Diag}{\mathrm{Diag}}
\newcommand{\dataselectobjective}{F_{\datasetonline}}
\newcommand{\oneucol}{\left[\!\!\begin{array}{c} 1 \\ u \\\end{array}\!\!\right]}
\newcommand{\KqueryY}{K_{*U}}
\newcommand{\U}{U_{N}}
\newcommand{\revision}[1]{{\color{black} #1}}
\newcommand{\ours}{Ours}
\newcommand{\criticalconstraints}{system-critical constraints}
\newcommand{\criticalconstraint}{system-critical constraint}
\newcommand{\safetyfilter}{certifying filter}
\newcommand{\uref}{u_{\text{ref}}}
\definecolor{customblue}{rgb}{0.106, 0.588, 0.953}
\definecolor{custommagenta}{rgb}{0.937, 0.004, 0.584}
\definecolor{customorange}{rgb}{0.965, 0.529, 0.255}
\definecolor{customgreen}{rgb}{0.043, 0.694, 0.639}
\definecolor{customyellow}{rgb}{0.9373, 0.8941, 0.0235}
\definecolor{custompurple}{rgb}
{0.627, 0.125, 0.941}
\definecolor{customgrey}{rgb}
{0.4, 0.4, 0.4}
\newcommand{\textmagenta}[1]{{\color{custommagenta} #1}}
\newcommand{\textorange}[1]{{\color{customorange} #1}}
\newcommand{\textteal}[1]{{\color{customgreen} #1}}
\newcommand{\textblue}[1]{{\color{customblue} #1}}
\newcommand{\textpurple}[1]{{\color{custompurple} #1}}
\newcommand{\textgrey}[1]{{\color{customgrey} #1}}
\begin{document}

\title{Constraint-Guided Online Data Selection for Scalable Data-Driven Safety Filters in \\ Uncertain Robotic Systems}
\author{Jason J. Choi*, Fernando Castañeda*, Wonsuhk Jung*, Bike Zhang, Claire J. Tomlin, and Koushil Sreenath
\thanks{*The first three authors contributed equally to the work. J. J. Choi, F. Castañeda, B. Zhang, C. J. Tomlin, and K. Sreenath are with the University of California, Berkeley, CA, 94720, USA. W. Jung is with Georgia Institute of Technology, GA, 30332, USA.}}

\markboth{IEEE Transactions on Robotics,~Vol.~XX, No.~X, September~2023}%
{How to Use the IEEEtran \LaTeX \ Templates}

\maketitle

\begin{abstract}
As the use of autonomous robots expands in tasks that are complex and challenging to model, the demand for robust data-driven control methods that can certify safety and stability in uncertain conditions is increasing. However, the practical implementation of these methods often faces scalability issues due to the growing amount of data points with system complexity, and a significant reliance on high-quality training data. In response to these challenges, this study presents a scalable data-driven controller that efficiently identifies and infers from the most informative data points for implementing data-driven safety filters. Our approach is grounded in the integration of a model-based certificate function-based method and Gaussian Process (GP) regression, reinforced by a novel online data selection algorithm that reduces time complexity from quadratic to linear relative to dataset size. Empirical evidence, gathered from successful real-world cart-pole swing-up experiments and simulated locomotion of a five-link bipedal robot, demonstrates the efficacy of our approach. Our findings reveal that our efficient online data selection algorithm, which strategically selects key data points, enhances the practicality and efficiency of data-driven certifying filters in complex robotic systems, significantly mitigating scalability concerns inherent in nonparametric learning-based control methods.
\end{abstract}

%% TODO: Have to explain the methodology in novice terms. 
% Needs to include
% 1. Novel Data selection objective that is guided by the "feasibility of the certifying chance constraint at the current state" (this term might distract readers)
% 2. Maximizing the data selection objective is NP-hard, hence we relax the problem that can be computed efficiently by upper-bounding the self-correlation of the dataset, which can be achieved by a simple preprocessing.
% Want to include: quantification of the information, transductive learning, kernel space, knowledge distillation

\begin{IEEEkeywords}
Safety-critical systems, Learning-based control, Safety filters, Safety, Stability
\end{IEEEkeywords}

\section{Introduction}
\label{sec:introduction}
\subsection{Motivation}
% TODO: Replace all "safety" to the "certifying" here
\IEEEPARstart{A}{s} autonomous robots become increasingly prevalent in our everyday lives, incidents such as fatal accidents involving self-driving cars have highlighted the importance of ensuring various system-critical constraints. Examples of these constraints include safety requirements for self-driving cars to prevent accidents, or stability conditions for legged robots to avoid falling over. Failure to meet these constraints can lead to catastrophic consequences.

Learning-based control methods have gained considerable attention in recent years due to their capacity to accomplish complex tasks by leveraging vast amounts of data.
However, a fundamental challenge in deploying these emerging methods for real-world robots is ensuring their adherence to the considered system-critical constraints.

One way of addressing this challenge is through \emph{a-posteriori verification}, i.e., analyzing these policies after they have been synthesized. A significant body of recent work on neural network verification follows this approach \cite{liu2021algorithms}. If the neural policy does not pass the certification test, the designer typically modifies the learning setup and reiterates this process until a satisfying policy is found.
This can be a time-consuming and computationally expensive process, although there are ongoing efforts in the community to overcome this challenge \cite{chang2019neural, tonkens2023patching}.

An alternative to the a-posteriori certification approach is using a model-based \textit{certifying filter}, supplemented by data to address the limitations imposed by imperfect models. In this scheme, a certifying filter is designed using available mathematical models to ensure compliance with system-critical constraints, and then the filter is enhanced by incorporating data from the actual system to address discrepancies arising from the imperfect model. This concept is often referred to as the \textit{data-driven safety filter} \cite{Wabersich2023}. 

\revision{The data-driven component of the filter learns how to address the effect of the discrepancy between the model and the actual system on the feasible control input that satisfies the desired constraints.} Many of these methods employ nonparametric learning techniques, such as Gaussian Process regression \cite{williams2006gaussian}, because it provides not only predictions but also a quantification of possible prediction errors of the learned model. This can be used to maintain the desired level of safety despite any errors the learned model may make.

However, there are two primary challenges associated with the data-driven certifying filter. First, nonparametric methods generally do not scale well with an increasing number of data points, which can significantly limit their application to more complex systems. As system complexity increases, characterizing the effect of the imperfect model requires more data, and controllers need to operate at higher frequencies to effectively manage rapidly changing system dynamics. Second, the success of a data-driven certifying filter in achieving its objective, like any other learning-based control approach, relies on the quality of the available training data \cite{lederer2021impact, castaneda2021pointwise}. If the information derived from the data about the real system is insufficient, the data-driven controller can fail to adhere to the system-critical constraints.

These two challenges are tightly coupled; when a large dataset is available, it is crucial to investigate which data points are most critical for meeting the certifying filter's objective to properly address the scalability challenge.
This raises the motivating question of the paper: by focusing on the most relevant data for system-critical constraints, can we extend the applicability of data-driven certifying filters to more complex and uncertain real-world robotic systems?

\subsection{Contributions}
\label{sec:contribution}

In this paper, we introduce an efficient approach for determining the most relevant data points for deploying data-driven certifying filters on real-world robotic systems. Our method expands on our earlier work \cite{GPCLFSOCP, castaneda2022probabilistic}, where we designed a data-driven filter that combines a model-based control method rooted in certificate functions, such as Control Barrier Functions (CBFs \cite{ames2017cbf}) and Control Lyapunov Functions (CLFs \cite{artstein}), with a Gaussian Process (GP) regression for the data-driven component that learns the effect of model uncertainty. \revision{The control input is filtered by solving a second-order cone program (SOCP), and is guaranteed to satisfy system-critical constraints with high probability when the SOCP is feasible. Such feasibility can be secured if the data is rich enough; however, with a large amount of data, the GP regression will slow down significantly due to its poor computational complexity. To resolve this issue, we delve into understanding which data points are critical for ensuring the feasibility of the SOCP filter and subsequently, for meeting the system-critical constraints.} Guided by this understanding, we develop an efficient online data selection algorithm for the filter. Each time the SOCP controller is executed, this algorithm selects only the most relevant data points to secure the SOCP's feasibility, which considerably enhances the time complexity of the GP-based safety filter.

Using the proposed approach, we showcase the applicability of Gaussian Process-based safety filters to high-dimensional and real robotic systems handling large datasets, overcoming the scalability constraints that previously limited the use of such filters to simple toy systems \cite{GPCLFSOCP, castaneda2021pointwise, taylor2020towards, castaneda2022probabilistic, dhiman2020control, brunke2022barrier, kumar2021probf, greeff2021learning}. We demonstrate the successful deployment of our method in a real cart-pole experiment to ensure the cart remains within its position limit and a 10-dimensional bipedal robot in simulation that attempts stable walking while subjected to model errors.

\subsection{Notations}

\footnotesize
\noindent$B$: binary correlation indicator matrix \eqref{eq:binary_matrix}

\noindent$\cert$: certificate function (Definition \ref{def:certificate})

\noindent$\dataset:=\{\gpvaraug_j,\bar{z}_j\}_{j=1}^{N}$: entire dataset for GP regression

\noindent$\datasetonline(\subset \dataset)$: online dataset

\noindent$f, g$: true plant vector fields (\eqref{eq:system})

\noindent$\tilde{f}, \tilde{g}$: nominal model vector fields (\eqref{eq:nominal-model})

\noindent$\dataselectobjective$: objective function of the data selection algorithm (\eqref{eq:data_select_objective})

\noindent$k_f, k_{g_1}\cdots , k_{g_m}$: individual kernels (Definition \ref{def:adpkernel})

\noindent$\kadp$: Affine Dot Product (ADP) kernel (Definition \ref{def:adpkernel})

\noindent$\kadpu$: ADP kernel that captures only the control vector field-relevant part (\eqref{eq:adpkernel_u})

\noindent$\Kadp, \Kadponline$: kernel matrix whose $(i, j)^{th}$ element is $k(\gpvaraug_i, \gpvaraug_j)$

\noindent$K_{*}\!:=\![k(\gpvaraug_*, \gpvaraug_1),\ \cdots \ ,k(\gpvaraug_*, \gpvaraug_N)]\in\R^N$

\noindent$\KqueryY$: \eqref{eq:KqueryY}

\noindent$\widehat{L_f \cert}(x|\mathbb{D}_{M, N})$, $\widehat{L_g \cert}(x|\mathbb{D}_{M, N})$: GP mean-based estimate of the Lie derivatives of $\cert$ (\eqref{eq:mean-based-estimate})

\noindent$m$: control input dimension

\noindent$M$: number of online data points

\noindent$n$: state dimension

\noindent$\nqi(x, u)$: kernel-based alignment measure (\eqref{eq:normalized_kernel_metric})

\noindent$N$: number of entire data points

\noindent$u$: control input

\noindent$\uref$: reference controller

\noindent$\mathcal{U}$: control input bound

\noindent$x$: state

\noindent$\gpvaraug:=(x, u)$: input for GP regression

\noindent$\mathcal{X}$: state domain

\noindent$\gpaugdomain = \mathcal{X} \times \R^{m}$: GP input domain

\noindent$\bar{z}_j$: noisy measurement of $\Delta$ at query $\gpvaraug_j$

\noindent$\mathbf{z}$: vector consisting of the dataset outputs, $\bar{z}_j$

\noindent$\beta$: constant in Assumption \ref{asump:well-calibrated}

\noindent$\gamma$: comparison function in Definition \ref{def:certificate}

\noindent$\delta$: probability level in Assumption \ref{asump:well-calibrated}

\noindent$\Delta$: model uncertainty term (\eqref{eq:mismatch_cbf})

\noindent$\epsilon$: constant in Theorem \ref{thm:main}

\noindent$\gpmuB (x,u|\mathbb{D}_{M, N})$: GP posterior mean

\noindent$\mathcal{M}(x|\dataset)$: \eqref{eq:mu_adp}

\noindent$\sigma^{2}(x,u|\mathbb{D}_{M, N})$: GP posterior variance

\noindent$\sigma_n^2$: measurement noise variance

\noindent$\Sigma(x|\dataset)$: Gram matrix of GP posterior variance (\eqref{eq:sigma_adp})

\noindent$\boldsymbol{\phi}(x, u)$: ADP kernel's feature vector
\normalsize
\section{Related Work}
\label{sec:related work}

Control Barrier Functions (CBFs, \cite{ames2017cbf}) and Control Lyapunov Functions (CLFs, \cite{artstein}) are model-based certificate functions that can be used to design policy filters to enforce safety and stability, respectively, of a controlled system. While initially conceived for systems with perfectly known dynamics, early results showed how to extend these filters to robust \cite{nguyen2017robust, jankovic2018robust, nguyen2021robust, choi2021robust} and adaptive \cite{nguyen2015l1adaptive, taylor2020adaptive, lopez2020robust} control settings to address the issue of imperfect models.

The integration of these certificate filters with data-driven methods has become increasingly popular for systems with uncertain dynamics. 
% \cite{taylor2020cbf, taylor2019clflearning, choi2020reinforcement, GPCLFSOCP, castaneda2021pointwise, dhiman2021control, greeff2021learning}. 
Several studies employ neural networks to learn the model mismatch terms \cite{taylor2019clflearning, taylor2020cbf, choi2020reinforcement}. Despite their practicality and effectiveness, verifying the accuracy of neural network predictions can be challenging. Alternative approaches, upon which our work builds, use nonparametric regression techniques \cite{GPCLFSOCP, castaneda2021pointwise, castaneda2022probabilistic, taylor2020towards, dhiman2020control, brunke2022barrier, greeff2021learning, kumar2021probf, kazemian2022random}.
Most notably, Gaussian Process (GP) regression models provide a probabilistic assurance of prediction quality under certain assumptions \cite{gpucb, lederer2019uniform}. 

The GP research community has a rich history in developing methods to improve the computational complexity of GP inference, commonly referred to as Sparse GP regression \cite{snelson2005sparse, quinonero2005unifying, liu2020gaussian}. The work in \cite{kazemian2022random} uses one of these methods (random features approximation) to speed up GP inference for data-driven safety filters.
Additionally, existing approaches that quantify the importance of data for system identification mostly focus on optimizing information-theoretic metrics, such as the information gain, when developing exploration strategies \cite{krause2008near, pukelsheim2006optimal, kim2018emi, alpcan2015information, contal2014gaussian, koller2018learning}.
However, these general-purpose methods lack awareness of any control objective. \revision{Instead of aiming for a good global approximation quality of the GP inference, the GP inference in our method is approximated by a small set of data points selected at each state and is tailored for the purpose of determining a safe control input for each state.} Obtaining the best subset of data, in fact, constitutes a combinatorial optimization problem that would be more computationally demanding than performing exact GP inference. For this reason, we instead present a control-informed efficient approximate data selection method that effectively serves to reduce the inference time of data-driven safety filters. This enables the deployment of these filters on real robotic systems.

The authors of \cite{lederer2020training, lederer2021impact} propose a method to evaluate the importance of data for maintaining the stability of data-driven closed-loop systems. As such, they study the connection between data and the performance of a particular given policy. Additionally, they introduce a greedy data selection strategy for GP inference based on an importance measure they propose. However, these selection strategies are still too computationally expensive to run online. \revision{Our work instead tackles the problem of robust control design by examining which data is most relevant to the control input that can achieve the desired certification property.} Furthermore, our approach characterizes this relevance in the control input space, emphasizing the richness of each data point for the specific certification objective, rather than relying on data density measures. This is a similar objective to the one of \cite{capone2020data}, where an algorithm to select the most useful data points for successfully performing multiple control tasks is presented. However, this method suffers from the scalability issue that we would like to tackle in this work.

\section{Certifying Filters for Uncertain Systems}
\label{sec:background}
\subsection{Uncertain Dynamics and Certifying Filter}

In this work, we examine a system described by
\begin{equation}\label{eq:system}
\dot{x} = f(x) + g(x)u,
\end{equation}
which is control-affine. Here, $x \in \mathcal{X} \subset \R^n$ represents the state, and $u \in \mathbb{R}^m$ denotes the control input. This form is suitable for representing various robotic systems, including those with Lagrangian dynamics. We assume that both $f$ and $g$ are locally Lipschitz continuous, and without loss of generality, we consider $f(0) = 0$ so that $x=0$ is an equilibrium, and assume $0 \in \mathcal{X}$. Throughout the paper, we will refer to the system described in \eqref{eq:system} as the \textit{true plant}.

This paper addresses the challenge of ensuring critical system constraints for the true plant \eqref{eq:system}, such as safety and stability, when its dynamics $f$ and $g$ are unknown, while trying to accomplish a desired task. We assume that a controller for the desired task has already been designed and is provided as a \textit{reference controller} $\uref: \mathcal{X} \to \R^m$. In the absence of the reference controller, we can consider $\uref(x)\equiv0$. This controller is often unaware of the system's constraints that are vital for preventing catastrophic failure, which we refer to as \textit{\criticalconstraints}. Common examples of these constraints include safety constraints, which can be expressed as constraints in the system's state space, and stability constraints that maintain the system's stability around a desired equilibrium point.

We aim to design a \textit{\safetyfilter} that operates between the reference controller $\uref$ and the true plant, ensuring the control applied to the true plant is filtered to satisfy the relevant \criticalconstraint. When the reference controller $\uref$ adheres to the constraint, the \safetyfilter~simply passes $\uref(x)$ to the true plant. However, if $\uref$ violates the constraint, the filter minimally overrides it with a safe control signal to prevent system failure. This filtering structure is known by various names, most notably as a \textit{safety filter} \cite{Wabersich2023, hsu2023safety}. \revision{Since this structure decouples the design process for safety assurance (i.e. the design of the filter) from the design procedure for achieving performance (i.e. the design of $\uref$), it reduces the complexity of the control system design. As a result, it has been demonstrated to be an effective control architecture for numerous real-world applications \cite{hobbs2023runtime}.}

For this purpose, we assume access to an approximate \textit{nominal model} of the true plant's dynamics, represented by $\tilde{f}: \mathcal{X} \to \R^n$ and $\tilde{g}: \mathcal{X} \to \R^{n\times m}$:
\begin{equation}\label{eq:nominal-model}
\dot{x} = \tilde{f}(x) + \tilde{g}(x)u.
\end{equation}
Consequently, the true plant dynamics in \eqref{eq:system} are uncertain, and only a nominal model \eqref{eq:nominal-model} is available. This nominal model serves as the starting point for the design steps of the \safetyfilter, which will be discussed subsequently, and represents the designer's best estimate of the true plant. % The degree of accuracy required for approximating the true plant with the nominal model, in the design steps we undertake, will be discussed in Remark \ref{rmk:designing_cfs}.

\subsection{Certificate Function-based Design}

A vital step in designing the \safetyfilter~involves utilizing the concept of \textit{certificate functions} \cite{dawson2023safe}, which are also known by various names, such as safety index in \cite{liu2014control} or energy function in \cite{wei2019safe}. Informally, a certificate function is a scalar function of the state, and its value and gradient can be used to establish a sufficient condition for a control input $u$ to satisfy the desired \criticalconstraint. This condition can then be employed as a \textit{certifying constraint} in the \safetyfilter~for the control input. If $\uref(x)$ fails to meet the constraint, it is overridden with an appropriate control input $u$ that satisfies the constraint. % The idea of using such scalar functions traces back to Lyapunov functions, energy-like functions that certify the stability of an equilibrium \cite{sastry2013nonlinear}. 

In this paper, we focus on Control Barrier Functions (CBFs) \cite{ames2017cbf} and Control Lyapunov Functions (CLFs) \cite{artstein} as specific examples of certificate functions, since they are the most prevalent choices for ensuring the satisfaction of safety and stability constraints in a system, respectively \cite{dawson2023safe}. We put forth a definition of a certificate function that unifies both CBFs and CLFs under a single definition, as similarly proposed in \cite{taylor2020towards}:

\begin{definition}
\label{def:certificate}
A function $\cert :\mathcal{X} \rightarrow \R$ is a \textit{certificate function} for the true plant \eqref{eq:system} with an extended class $\mathcal{K}_\infty$ function $\gamma:\R \rightarrow \R$ (called comparison function) if 
\begin{enumerate}
    \item for all $x\in \mathcal{X}$, there exists $u\in \R^m$ such that 
\begin{equation}
    \dot{\cert}(x, u) + \gamma(\cert(x)) \ge 0,
    \label{eq:certificate_condition}
\end{equation}
where $\dot{\cert}(x, u)$ is the \revision{time derivative of $\cert$} (with respect to the true plant \eqref{eq:system}), that is,
\begin{equation}\dot{\cert}(x, u) = 
    \underbrace{\nabla \cert(x) \cdot f(x)}_{\LfC(x)}+\underbrace{\nabla \cert(x) \cdot g(x)}_{\LgC(x)} u,
\end{equation}
    \item and if $u(t)$ satisfying \eqref{eq:certificate_condition} for all $t\ge0$ is a sufficient condition for $x(t)$ satisfying a desired \criticalconstraint~for all $t\ge0$.
\end{enumerate}
\end{definition}

The \criticalconstraint~for the CBF is that the trajectory stays inside the zero-superlevel set of $\cert$ indefinitely, i.e., $x(t) \in \mathcal{C} := \{x \in X\;|\;\cert(x)\ge 0\}$ for all $t\ge 0$ \cite{ames2017cbf}. The \criticalconstraint~for the CLF is that the trajectory is asymptotically stable to the equilibrium $x=0$ \cite{ames2014rapidly}. Both CBFs and CLFs satisfy the aforementioned definition of certificate functions. Note that to align with the inequality form in \eqref{eq:certificate_condition}, we need to negate the CLF. This adjustment ensures that both CBFs and CLFs can be used within the same framework to satisfy the desired \criticalconstraints.

Given a reference controller $u_{\text{ref}}: \mathcal{X} \to \R^m$, the condition in \eqref{eq:certificate_condition} can be used to formulate a minimally-invasive \safetyfilter \cite{ames2017cbf}:

{\small
\HRule
\noindent \textbf{Certificate Function-based Quadratic Program (CF-QP)}:
\begin{subequations}
\label{eq:cbf-qp-all}
\begin{align}
u^{*}(x) & = & & \underset{u\in \R^{m}}{\argmin}  \quad \norm{u-u_{\text{ref}}(x)}_2^2 \label{eq:cbf-qp-cost}\\
& \text{s.t.} & & \LfC(x) + \LgC(x)u + \gamma (\cert(x)) \geq 0. \label{eq:cbf-constraint} \vspace{-.5em}
\end{align}
\end{subequations}
\hrule
\normalsize
\vspace{2mm}}

\noindent The constraint \eqref{eq:cbf-constraint} is affine in $u$, which means that the optimization problem is a quadratic program (QP). This problem is solved pointwise in time to obtain a filtered control law $u^* : \mathcal{X} \to \R^m$ that only deviates from the reference controller $\uref$ when the condition \eqref{eq:cbf-constraint} is violated. We will refer to \eqref{eq:cbf-constraint} as the \textit{true certifying constraint} and \eqref{eq:cbf-qp-all} as the \textit{oracle CF-QP}. When specifically using CBFs or CLFs in place of $\cert$, we may refer to \eqref{eq:cbf-qp-all} as the oracle CBF-QP and CLF-QP, respectively.

The primary assumption we make in this paper is that we have access to the certificate function $\cert$ that is valid for the true plant. The nominal model \eqref{eq:nominal-model} can be used to design such certificate functions, which is discussed further in Remark \ref{rmk:designing_cfs}. This assumption ensures that a control policy exists to keep the true plant \eqref{eq:system} in compliance with the \criticalconstraint. However, even when a valid certificate function is available, obtaining such a control policy is not straightforward with no direct access to $f$ and $g$ in the true certifying constraint \eqref{eq:cbf-constraint}. This is the main problem this work aims to solve.

\revision{\begin{remark}
\label{rmk:designing_cfs} \textit{(Finding valid certificate functions)} Designing certificate functions for uncertain systems is far from trivial and is, in fact, an active area of research \cite{dawson2022safe, qin2022sablas, jagtap2020control, lindemann2024learning, jin2020neural, wei2023presistently}. Our contribution runs parallel to this line of research, and in fact, our work complements these efforts, as only when the design of the certificate function and the design of the certifying filter are combined, can the \safetyfilter~for uncertain systems be effectively implemented. In our work, we employ the nominal model to find CBFs and CLFs to be used as certificate functions. Thus, this procedure implicitly assumes that the nominal model is sufficiently accurate in its approximation of the true plant to enable the identification of a valid CBF or CLF. This assumption is also present in prior works that most closely align with our research \cite{taylor2020cbf,taylor2020towards, choi2020reinforcement, dhiman2020control, greeff2021learning}. This approach is considered reasonable for feedback linearizable systems with known relative degree, owing to the inherent robustness properties of CBFs and CLFs \cite{xu2015robustness, kolathaya2018input}. Indeed, the practice of using first-principle nominal models for designing CBFs is widely studied and adopted for numerous complex robotics systems \cite{wu2015safety, khazoom2022humanoid, molnar2023safety, cohen2024safety}.
\end{remark}
} % end of revision

The first naive approach we can take is to use the nominal model and replace $\LfC(x)$ and $\LgC(x)$ with $L_{\tilde{f}}\cert(x)$ and $L_{\tilde{g}}\cert(x)$ respectively, the Lie derivatives of $\cert$ with respect to the nominal model. We call this a \textit{nominal model-based CF-QP}. However, due to the mismatch between the true plant dynamics and the nominal model, the nominal model-based CF-QP does not provide any guarantee that the \criticalconstraint~will be met under the filtered control input. To examine this, the true certifying constraint in \eqref{eq:cbf-constraint} is expressed using the nominal model as follows:
\begin{equation}
    \label{eq:cbf-constraint-uncertainty}
    \underbrace{L_{\tilde{f}}\cert(x) + L_{\tilde{g}}\cert(x)u}_{\widetilde{\dot{\cert}}(x, u)} + \Delta(x, u) + \gamma(\cert(x)) \ge 0,
\end{equation}
where $\widetilde{\dot{\cert}}(x, u)$ is the estimated time derivative of $\cert$ based on the nominal model, and the \textit{model uncertainty term} $\Delta$ \cite{nguyen2021robust, choi2020reinforcement}, is defined for each $x \in \mathcal{X}$, $u \in \R^m$ as
\begin{align}
    \Delta(x, u) := & (\LfC \!-\!L_{\tilde{f}}\cert)(x) + (L_g \cert\!-\!L_{\tilde{g}}\cert)(x)u  \nonumber \\
    = & \left[L_{\Delta f}\cert(x)\quad L_{\Delta g}\cert(x) \right] \left[\!\!\begin{array}{c} 1 \\ u \\\end{array}\!\!\right]. \label{eq:mismatch_cbf}
    % \vspace{-1em}
\end{align}
Note that like the original constraint \eqref{eq:cbf-constraint}, $\Delta$ is also affine in the control input $u$.

In the next section, we introduce a method developed in our prior work to estimate $\Delta$ from data collected from the true plant \cite{GPCLFSOCP, castaneda2021pointwise}. Note that the data for $\Delta$ can be gathered from state trajectories without needing access to $f$ and $g$. This can be achieved by evaluating $\dot{\cert}(x, u)$ along the trajectories using numerical differentiation and subtracting $\widetilde{\dot{\cert}}(x, u)$. By employing the estimate of $\Delta$ learned from the data, we can design a data-driven \safetyfilter~that offers a high probability of satisfying \eqref{eq:cbf-constraint}.

\section{Data-driven Certifying Filters}

The data-driven certifying filters we introduce in this section employ Gaussian Process (GP) regression to learn the estimate of $\Delta$ from data. We first provide a brief background on GP regression.

\subsection{Gaussian Process Regression}
\revision{A Gaussian Process is a random process where any finite collection of samples has a joint Gaussian distribution. It is characterized by the mean function $q: \gpaugdomain \rightarrow \R$, which determines the mean value of the sample, and the covariance (or kernel) function $k: \gpaugdomain \times \gpaugdomain \rightarrow \R$, which determines the covariance of the joint Gaussian distribution of the samples, where $\gpaugdomain$ represents the input domain of the process. A sample process of the GP is denoted as $h \sim \mathcal{GP}(q, k)$.}

\revision{Given the measurement data of an unknown function $h:\gpaugdomain \rightarrow \R$, a regression problem can be formulated by taking a Bayesian approach for inferring a posterior distribution of $h$, by assuming that $h$ is a sample from a GP.} This implies that the prior distribution of $h(\gpvaraug_*)$, where $\gpvaraug_* \in \gpaugdomain$ is an unseen query point, is given by $\mathcal{N}(q(\gpvaraug_*), k(\gpvaraug_*, \gpvaraug_*))$. For our application, $\gpaugdomain = \mathcal{X} \times \R^{m}$, where $\R^{m}$ represents the control input space, $\gpvaraug_* = (\gpvar_*, u_*)$, and the unknown function we aim to regress is $\Delta$ defined in \eqref{eq:mismatch_cbf}. We assume the mean function $q \equiv 0$ since the prior information, which is based on the nominal model, is already captured in the term $\widetilde{\dot{\cert}}(x, u)$ in \eqref{eq:cbf-constraint-uncertainty}.

With the dataset of noisy measurements of $\Delta$, denoted by $\bar{z}_j$ at query $\gpvaraug_j$, given as $\dataset:=\{\gpvaraug_j,\bar{z}_j\}_{j=1}^{N} = \{(\gpvar_j,u_j),\ \Delta(\gpvar_j,u_j)+\epsilon_j\}_{j=1}^{N}$, a prediction of $\Delta$ at $\gpvaraug_*$ is derived from the joint distribution of $[\Delta(\gpvaraug_1), \cdots, \Delta(\gpvaraug_N), \Delta(\gpvaraug_{*})]^{\top}$ conditioned on the dataset $\dataset$. \revision{Here, the distribution of the measurement noise, $\epsilon_j$, has zero mean and its variance is $\sigma_{n}^2$, with $\sigma_{n}>0$.} The conditional distribution at the query point $\gpvaraug_*$ is called the \textit{GP posterior}, whose mean and variance of the prediction of $\Delta(\gpvaraug_*)$ are expressed as
\begin{equation}
\label{eq:gpposteriormu}
        \mu(\gpvaraug_*|\dataset) = \mathbf{z}^{\top} (\Kadp + \sigma_n^2 I )^{-1} K_{*}^{T},
\end{equation}
\begin{equation}
\label{eq:gpposteriorsigma}
    \sigma^{2}(\gpvaraug_*|\dataset) = k\left(\gpvar_{*}, \gpvar_{*}\right)-K_{*}  (\Kadp + \sigma_n^2 I )^{-1} K_{*}^{T},
\end{equation}
where $\Kadp\in\R^{N\times N}$ is the kernel matrix, whose $(i, j)^{th}$ element is $k(\gpvaraug_i, \gpvaraug_j)$, $K_{*}\!:=\![k(\gpvaraug_*, \gpvaraug_1),\ \cdots \ ,k(\gpvaraug_*, \gpvaraug_N)]\in\R^N$, and $\mathbf{z}\in \R^N$ is the vector consisting of the dataset outputs, $\bar{z}_j$.

The kernel value, $k(\gpvaraug, \gpvaraug')$, quantifies the correlation between two query points $\gpvaraug$ and $\gpvaraug'$. A higher kernel value indicates a stronger correlation between the points, suggesting that the corresponding values of $\Delta(\gpvaraug)$ and $\Delta(\gpvaraug')$ are more likely to be similar to each other. Thus, the choice of kernel $k$ determines properties of the target function like its smoothness or Lipschitz constant \cite{lederer2019uniform}. For example, the square exponential kernel, which is among the most popular choices for kernels in the GP regression literature \cite{williams2006gaussian}, attributes a higher correlation to points that are closer together in the input space, and the resulting samples of the GP are infinitely differentiable functions. The kernel can also capture prior structural knowledge of the target function \cite{duvenaud2014automatic}.
In our case, we mainly want to exploit the fact that the target function $\Delta$ from \eqref{eq:mismatch_cbf} is control-affine. For this, we use the Affine Dot Product compound kernel presented in \cite{GPCLFSOCP}.

\begin{definition} \label{def:adpkernel}
\emph{Affine Dot Product Compound Kernel \cite{GPCLFSOCP}: }
Define $\kadp:\gpaugdomain \times \gpaugdomain \rightarrow \R$ given by
\vspace{-1em}

\small
\begin{align}
    & \kadp \! \left((x, u), (x', u')\right)  \nonumber \\
    & :=  [1 \; u^{\top}] \Diag(k_f(x, x'), k_{g_1}(x, x') \cdots\!, k_{g_{m}}(x, x')) \!\left[\!\!\begin{array}{c} 1 \\ u' \\\end{array}\!\!\right],
\label{eq:adpkernel}
\end{align}
% \begin{align}
%     & \kadp \! \left(\!\left[\!\!\begin{array}{c} x \\ u \\\end{array}\!\!\right], \left[\!\!\begin{array}{c} x' \\ u' \\\end{array}\!\!\right]\!\right) \\
%     & :=  [1 \; u^{\top}]^{\top} \! \Diag([k_1(x, x'), \cdots\!, k_{m+1}(x, x')]) \!\left[\!\!\begin{array}{c} 1 \\ u' \\\end{array}\!\!\right]
% \label{eq:adpkernel}
% \end{align}
% }
\normalsize

\noindent where $\Diag(\cdot)$ indicates the diagonal matrix whose diagonal terms consist of the entities in the paranthesis, as the \emph{Affine Dot Product} (ADP) compound kernel of $(m\!+\!1)$ individual kernels $k_f, k_{g_1},\cdots,$ $k_{g_{m}}:\mathcal{X}\times \mathcal{X} \rightarrow \R$.
\end{definition}

When $L_{\Delta f}\cert(x)$ and each element of $L_{\Delta g}\cert(x)$ in \eqref{eq:mismatch_cbf} is a sample from a GP defined by the individual kernels $k_f, k_{g_1}\cdots ,$ $k_{g_{m}}$, respectively, the model uncertainty term $\Delta (x, u)$ is a sample from a GP defined by the ADP kernel $\kadp$. Thus, using the ADP compound kernel, from \eqref{eq:gpposteriormu} and \eqref{eq:gpposteriorsigma}, the GP posterior at a query point $(\xuquery)$ is given as
\begin{equation}
\label{eq:mu_adp}
    \mu(\xuquery|\dataset) = \underbrace{\mathbf{z}^{\top} (\Kadp + \sigma_n^2 I )^{-1} \KqueryY^{\top}}_{=:\ \mathcal{M}(\xquery|\dataset)} \yquerycol,
\end{equation}
\vspace{-5pt}
{\small
\begin{equation}
\label{eq:sigma_adp}
    \sigma^{2}(\xuquery|\dataset) \!= \!\yqueryrow\!\underbrace{\left(K_{**} \!-\!\KqueryY\!(\Kadp + \sigma_n^2 I )^{-1} \KqueryY^{T}\! \right)}_{=:\ \Sigma(\xquery|\dataset)}\! \yquerycol,
\end{equation}
\vspace{-6pt}}

\noindent where $K_{**} \!=\! \!\Diag \left(k_f(\xquery, \xquery), \cdots, k_{g_m}(\xquery , \xquery)\right) \!\in\! \R^{(m+1)\times(m+1)}$, and $\KqueryY\in\R^{(m+1)\times N}$ is given by

{\small
\begin{equation}
    \KqueryY\!:=\!\begin{bmatrix} k_f(\xquery, x_1)\; \cdots\; k_f(\xquery, x_N) \\ k_{g_{1}}(\xquery, x_1)\; \cdots\; k_{g_{1}}(\xquery, x_N) \\ \vdots \\k_{g_{m}}(\xquery, x_1)\; \cdots\; k_{g_{m}}(\xquery, x_N)
    \end{bmatrix}\!\circ\! \begin{bmatrix} \mathbf{1}^{1\times N} \\ \U
    \end{bmatrix},
    \label{eq:KqueryY}
\end{equation}}

\noindent where $\circ$ indicates the element-wise product, and $\U :=[u_1\; \cdots \; u_N]\in \R^{m \times N}$. Note that $\Sigma(\xquery|\dataset)$ is positive definite when the individual kernels $k_f, k_{g_1}\cdots ,$ $k_{g_{m}}$ are positive definite kernels and $\sigma_n > 0$ \cite{GPCLFSOCP}. 

\revision{A significant feature of the GP regression is that it generates predictions of the target function value as a probability distribution rather than deterministically. This allows for obtaining probabilistic \textit{uniform error bounds} of the GP prediction using $\mu(\xuquery|\dataset)$ and $\sigma(\xuquery|\dataset)$, defined as below:
\begin{definition} 
\label{asump:well-calibrated}
\emph{Uniform Error Bound \cite{lederer2019uniform}:} For a given $\delta \in (0, 1)$, if there exists a constant $\beta > 0$ such that 
{\small\begin{multline}
\label{eq:well_calibrated_assumption}
    \mathbb{P}\bigg\{\ \bigg| \mu(\xuquery|\dataset) - \Delta(\xuquery) \bigg| \leq \beta \sigma(\xuquery|\dataset)\bigg\}\geq 1-\delta
\vspace{-3pt}
\end{multline}}
\noindent \!\!\!holds \textit{for all} $\gpvar_{*} \in \mathcal{X},\ u_* \in \R^m$, we define the \textit{uniform error bound} of the GP model (for the target function $\Delta(\cdot)$) as $\begin{bmatrix}\mu(\cdot|\dataset) - \beta \sigma(\cdot|\dataset), & \mu(\cdot|\dataset) + \beta \sigma(\cdot|\dataset)\end{bmatrix}$.
\end{definition}

Numerous existing works have determined various sets of conditions under which such uniform error bounds exist \cite{gpucb, chowdhury2017newgpucb, Fisac2018, lederer2019uniform, fiedler2021practical, capone2022gaussian}, along with the corresponding values of $\beta$. Since the conditions vary across different works---for instance, some assume the target function is sourced from a Reproducing Kernel Hilbert Space \cite{gpucb, chowdhury2017newgpucb, fiedler2021practical}, while others assume it is a sampled random process from the GP $\mathcal{GP}(0, k)$ \cite{Fisac2018, lederer2019uniform}---we refer readers to the aforementioned literature for more details. Here, we assume that the designer has chosen and verified an appropriate set of assumptions and the corresponding $\beta$ that serves as a valid uniform error bound of the GP model. 

It is important to note that the subsequent technical development in this paper is not bound to any specific methods used to characterize the uniform error bound, as long as it is a valid bound. Additionally, it is worth noting that the uniform error bound can be characterized even when the hyperparameters of the kernel function are not accurately known, allowing our approach to have robustness against certain degrees of model misspecifications \cite{fiedler2021practical, capone2022gaussian}.
} % end of revision

\subsection{Second-order Cone Program-based Certifying Filters}

With the bound provided in \eqref{eq:well_calibrated_assumption}, we can now present a data-driven certifying filter that offers a high probability guarantee of satisfying \eqref{eq:cbf-constraint} based on the learned GP model of $\Delta$. By employing the lower bound of $\Delta(x, u)$, we construct a \textit{certifying chance constraint} that can be evaluated without explicit knowledge of the true plant's dynamics:
\begin{equation}
    L_{\tilde{f}}\cert(x) + L_{\tilde{g}}\cert(x) u+\gpmuB(x,u|\dataset)\!-\!\beta \gpsigmaB(x,u|\dataset)\!+\!\gamma (\cert(x))\!\geq\!0. \label{eq:socp-cbf-constraint}
\end{equation}
If the constraint \eqref{eq:socp-cbf-constraint} is satisfied, from Definition \ref{asump:well-calibrated}, we have a guarantee that the true certifying constraint in \eqref{eq:cbf-constraint} is satisfied with a probability of at least $1 - \delta$. 

The affine structure of the mean expression in \eqref{eq:mu_adp} gives
\begin{equation*}
    \gpmuB (x,u|\mathbb{D}_N)\!=\!\mathcal{M}(x|\mathbb{D}_N)\!\oneucol = \left[\widehat{L_{\Delta f}\cert}(x)\quad \widehat{L_{\Delta g}\cert}(x) \right]\!\oneucol, \label{eq:mu_feas}
\end{equation*}
where
\begin{align*}
    \widehat{L_{\Delta f}\cert}(x):=\!\mathcal{M}(x|\dataset)_{[1]}, \quad
    \widehat{L_{\Delta g}\cert}(x):=\!\mathcal{M}(x|\dataset)_{[2:(m+1)]}.
\end{align*}
Next, we define 
\begin{align}
    \widehat{L_f \cert}(x|\dataset) &:= L_{\tilde{f}}\cert(x) + \widehat{L_{\Delta f}\cert}(x) \in \R, \nonumber \\
    \socc &:= L_{\tilde{g}}\cert(x) + \widehat{L_{\Delta g}\cert}(x) \in \R^{1\times m}.
    \label{eq:mean-based-estimate}
\end{align}
Using these expressions, \eqref{eq:socp-cbf-constraint} can be represented as
\begin{equation}
\label{eq:socp-cbf-constraint2} 
    \beta \gpsigmaB(x,u|\dataset)  \le \left[\widehat{\LfC}(x|\dataset)\!+\!\gamma(\cert(x)) \quad \widehat{\LgC}(x|\dataset)\right]\!\oneucol.
\end{equation}
From the quadratic structure of the variance expression in \eqref{eq:sigma_adp} and $\Sigma(x|\dataset)$ being positive definite, we can conclude that \eqref{eq:socp-cbf-constraint2} is a second-order cone constraint \cite{GPCLFSOCP}. This constraint is then incorporated into a chance-constrained reformulation of the CF-QP:

{\small
% \HRule
\vspace{2mm}
\hrule
\vspace{2mm}
\noindent \textbf{GP-CF-SOCP} \cite{GPCLFSOCP, castaneda2021pointwise}:
\begin{align}
\label{eq:gp-cbf-socp}
&u^{*}(x) =   \underset{u\in U}
{\argmin}\norm{u-u_{\text{ref}}(x)}_2^2 \ \  \text{s.t.}
 \\
& L_{\tilde{f}}\cert(x) + L_{\tilde{g}}\cert(x) u \!+\!\gpmuB(x,u|\dataset)\!-\!\beta \gpsigmaB(x,u|\dataset)\!+\!\gamma (\cert(x)) \geq 0, \nonumber
\end{align}
% \end{subequations}
% \HRule
\hrule
% \normalsize
\vspace{2mm}}
\noindent wherein by leveraging the control-affine structure in the GP regression, we obtain a convex second-order cone program (SOCP) that can be solved efficiently at high-frequency rates using modern solvers. When specifically using CBFs or CLFs in place of $\cert$, we refer to \eqref{eq:gp-cbf-socp} as GP-CBF-SOCP and GP-CLF-SOCP, respectively. 

\revision{The guarantee that the true certifying constraint will be satisfied with high probability exists when the SOCP filter in \eqref{eq:gp-cbf-socp} is feasible \cite{GPCLFSOCP, castaneda2021pointwise, castaneda2022probabilistic}. An adequate amount of data can be collected, so that the feasibility of the SOCP filter is secured across the state domain where the controller will be deployed. Once the SOCP is feasible, solving the optimization is not a challenge since it is a convex program. However, the main challenge in executing the filter online lies in the computationally demanding evaluation of $\gpsigmaB$ when the size of the dataset is large. Such a large dataset might be required to secure the feasibility of the filter, especially for higher dimensional systems or highly uncertain systems.} % end of revision

The time complexity of the matrix inverse in \eqref{eq:sigma_adp}, $(\Kadp + \sigma_n^2 I )^{-1}$, is $\mathcal{O}(N^3)$, while the remaining matrix multiplication involved in evaluating $\Sigma(\xquery|\dataset)$ has a time complexity of $\mathcal{O}(N^2)$. Although the matrix inversion can be performed offline, when the dataset is large, the $\mathcal{O}(N^2)$ complexity still remains challenging. This issue primarily motivates the constraint-guided online data selection algorithm proposed in the next section.

\revision{\begin{remark} \textit{(Infeasibility of the SOCP filter)}
\eqref{eq:gp-cbf-socp} can become infeasible for two reasons. Firstly, there might not be any $u$ such that the prediction uncertainty in the left-hand side of \eqref{eq:socp-cbf-constraint2} is adequately small compared to its right-hand side. This suggests that the dataset for the GP regression is insufficient to characterize the model uncertainty term $\Delta$ with high confidence. In this case, it may be necessary to collect more data to reduce the prediction uncertainty and ensure the feasibility of \eqref{eq:socp-cbf-constraint2}. A detailed analysis of the conditions under which \eqref{eq:gp-cbf-socp} is feasible is conducted in \cite{castaneda2021pointwise}. Secondly, most robotic systems have bounded control input limits, either due to their physical actuation limits or safety concerns. This input bound, represented as $\mathcal{U} \subset \R^m$, further constrains the feasible set of \eqref{eq:gp-cbf-socp} and may render it infeasible. During the deployment of the SOCP filter on real-world systems, it is often impossible to perfectly eliminate infeasibility. However, an effective strategy to address cases when infeasibility occurs is to use a backup control input computed by the following second-order cone program, which is always feasible:
\begin{equation}
\label{eq:backup_controller}
u^{*}(x) = \underset{u\in \mathcal{U}}{\argmin} \left(\beta \gpsigmaB(x,u|\dataset) - \widehat{\LgC}(x|\dataset)u\right).
\end{equation}
This selects a control input within the input bound that minimizes the violation of the constraint \eqref{eq:socp-cbf-constraint2}.
\end{remark}
} % end of revision

% Polysys System Definition & Uncertainty Setting
\subsection{Running Example: 2D Polynomial System (Polysys)}
\label{subsec:running_example_intro}

We now introduce a simple running example, referred to as Polysys, which is utilized throughout the paper. It is important to note that this low-dimensional toy example is not intended to showcase the computational advantage of our method or to represent a realistic setting. Instead, its purpose is to provide a walk-through of the inner workings of our approach for the readers. To achieve this, we have access to the true plant dynamics, allowing us to compare our method with the ideal oracle certifying filter. % Moreover, the dataset constructed for the data-driven certifying filters is not meant to represent a realistic dataset. Instead, it is a simplistic dataset designed for easy comprehension by the readers.

The dynamics of the system, whose vector fields are polynomial functions of the state $x$, are given by:
\begin{equation}
\label{eq:polysys_dynamics}
    \dot x = 
        \begin{bmatrix} f_{1}^{T}v \\ f_{2}^{T}v\end{bmatrix} + 
        \begin{bmatrix} 1+g_{11}^{T}v & g_{12}^{T}v  \\ g_{21}^{T}v & 1+g_{22}^{T}v\end{bmatrix} u,
\end{equation}
where $x = [x_1 \; x_2]^\top$ is the state, $u=[u_1 \; u_2]^\top$ is the control input, $v = [x_1 \; x_2 \; x_1^2 \; x_1x_2 \; x_2^2 \; x_1^3 \; x_1^2x_2 \; x_1x_2^2 \; x_2^3] \in \R^9$ is a vector that aggregates the monomials of the state, and each of $f_1, f_2, \cdots, g_{22}$ are randomly generated coefficient vectors in $\R^9$. We introduce model uncertainty to the true plant by perturbing the coefficient vectors from the nominal model.

In this example, we aim to design a control policy that stabilizes the system to the zero equilibrium point. 
To achieve this by using the certifying filter, we design the CLF $V(x) = x^\top Px$ as the certificate function, where $P$ is the solution of the Algebraic Riccati Equation for the linearized system of the nominal model \eqref{eq:polysys_dynamics} around $x=0$. 
\revision{Notably, the random coefficients for $x_1$ and $x_2$ in $f_1$ and $f_2$ are not perturbed, ensuring that the designed CLF is locally valid for both the nominal model and true plant after linearization.}
We set $\uref(x)\equiv0$ in this example. As shown in Figure \ref{fig:polysys_result}, we can see that the oracle CLF-QP (\textblue{blue}) is able to stabilize the state to the equilibrium, confirming that the CLF is a valid certificate function for the true plant. However, due to the model uncertainty we introduce to the true plant, the nominal model-based CLF-QP (\textmagenta{magenta}) fails to stabilize the system.

We next show the application of the GP-CLF-SOCP in \eqref{eq:gp-cbf-socp} on this example. We first construct the dataset $\dataset$ in order to apply GP regression to $\Delta$. We partition the subspace of the state space $[-2, 2] \times [-2, 2]$ into the coarse state grid of size $(10, 10)$. At every vertex $x_{j}$ of the state grid, we apply the randomly sampled control input $u_{j}$ to simulate the system \eqref{eq:polysys_dynamics} for a sampling time $\Delta t$ and collect a single data point $(\gpvaraug_j = (\gpvar_j, u_j),\bar{z}_j)$. We account for the numerical differentiation error in obtaining $\bar{z}_j$ as measurement noise.
In addition to the data from the coarse grid, we also incorporate some densely populated data points centered at a selected state and action pairs, $(x_{a}, u_{a})$. Around this point, a dense state-control grid is created by gridding up $[x_{a, 1}\!-\!\psi, x_{a, 1}\!+\!\psi]$~$\!\times [x_{a, 2}\!-\!\psi, x_{a, 2}\!+\!\psi]$~$\!\times [u_{a, 1}\!-\!\psi, u_{a, 1}\!+\!\psi]$~$\!\times [u_{a, 2}\!-\!\psi, u_{a, 2}\!+\!\psi]$ in (2, 2, 2, 2) grid, where we set $\psi=0.1$.
This results in a total of 80 data points collected additionally. In the subsequent sections describing the Polysys example, we refer to the data points generated from a single dense grid as the \textit{data cluster}. Combined together, we get in total $N\!=\!180$ data points, visualized by their projection to the state space in Figure \ref{fig:polysys_result} and \ref{fig:data_selection}.

We use GP regression to fit $\Delta$ from the dataset presented above, using the ADP compound kernel with isotropic squared exponential kernels as components. Then, we apply the GP-CLF-SOCP of \eqref{eq:gp-cbf-socp} to control the system. As shown in Figure \ref{fig:polysys_result}, the GP-CLF-SOCP using the full dataset (black dashed line) is able to stabilize the system to near the origin despite the uncertainty in the true plant dynamics. 

\begin{figure}
\centering
\includegraphics[width=.8\columnwidth]{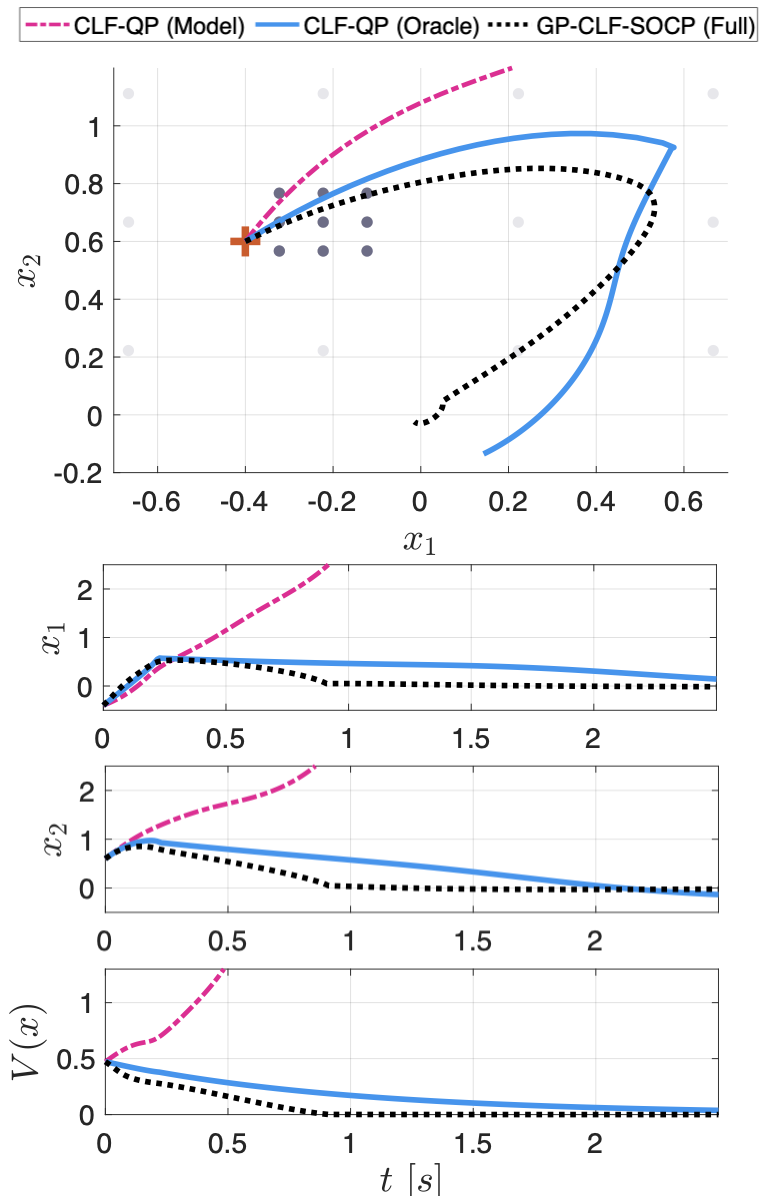}
\caption{\revision{The simulation result of the Polysys example under various controllers:} the nominal model-based CLF-QP (\textmagenta{magenta}), the oracle CLF-QP (\textblue{blue}), the GP-CLF-SOCP using full data (black).
The topmost plot illustrates the trajectory's progression in the state space for 2.5 seconds, with an initial state of $x_0=[-0.4\; 0.6]^\top$, while the data is depicted as grey dots. The three subplots on the bottom show the state $x_1$, $x_2$, and the CLF values respectively. While the trajectory quickly diverges under the nominal model-based CLF-QP, the GP-CLF-SOCP successfully stabilizes the trajectory to the origin.}
\label{fig:polysys_result}
\end{figure}
\section{Constraint Guided Online Data Selection}
\label{sec:sparse_gp}

In this section, we present the core contribution of our paper: a constraint-guided online data selection algorithm that improves the time complexity of GP inference for the GP-CF-SOCP from $\mathcal{O}(N^2)$ to $\mathcal{O}(N)$.

Using the entire dataset to evaluate $\sigma^{2}(\xuquery|\dataset)$ would yield minimal uncertainty for any query point $\xuquery$, as we would utilize all available information, but this comes at the cost of high computational demand. One way to mitigate this computational burden involves constructing an offline model that approximates the precise Gaussian Process (GP) inference, with the goal of making accurate predictions for any new query points encountered during runtime. \revision{However, our approach, similar to many existing Sparse GP methods \cite{quinonero2005unifying, seo2000gaussian}, is based on the idea that it is not necessary to reduce the uncertainty globally.} Instead, we aim to reduce the uncertainty for specific input classes relevant to our problem. The former approach, known as \textit{induction}, aims to regress the function with high quality across the entire input space. In contrast, our approach, which is called \textit{transduction}, focuses on learning only for specific test points that we care about. \revision{Revisiting the learning objective in our problem, we seek to ensure that the certifying chance constraint \eqref{eq:socp-cbf-constraint} is feasible, so that we can find a feasible $u^{*}(x)$ in \eqref{eq:gp-cbf-socp}.} Therefore, our data selection algorithm is designed to efficiently achieve this goal.

To facilitate the presentation of our algorithm, we first introduce some simplified notations and preliminaries that will be used in this section. We also present a sufficient condition for the feasibility of GP-CF-SOCP, from which we derive the main control input direction we want to characterize. This control input direction is the foundation upon which we apply the concept of transduction in our data selection algorithm.

\subsection{Preliminaries}

\subsubsection{Simplified notations for kernels}
\label{subsec:simplified_notations}
We use
\begin{equation*}
\kadp_{ij} := \kadp
\left((\gpvar_i, u_i), (\gpvar_j, u_j)\right),
\end{equation*}
% \begin{equation}
%     \kadp_{ii} := \kadp([\gpvar_i\;\augu_i]^{\top}, [\gpvar_i\;\augu_i]^{\top}), \quad \kadp_{ij} := \kadp([\gpvar_i\;\augu_i]^{\top}, [\gpvar_j\;\augu_j]^{\top}),
% \end{equation}
\begin{equation*}
    \kadp_{**}(x, u) : = \kadp\left((x, u), (x, u)\right),
\end{equation*}
\begin{equation*}
    \kadp_{*i}(x, u) : = \kadp\left((x, u), (\gpvar_i, u_i)\right),
\end{equation*}
and $\kadp_{i}:=\kadp_{ii}$ as simplified notations, where $(\gpvar_i, u_i)$ is an input point in $\dataset$. We also consider the compound kernel that captures only the control vector field-relevant part:
\begin{align}
    \kadpu \! \left((x, u), (x', u')\right) :=  u^\top \! \Diag(k_{g_1}(x, x'),\!\cdots\!, k_{g_{m}}(x, x')) u'.
\label{eq:adpkernel_u}
\end{align}
Note that $\kadp \! \left((x, u), (x', u')\right) = k_f(x, x') + \kadpu\left((x, u), (x', u')\right)$ from Definition \ref{def:adpkernel}. Similarly, we define
\begin{equation*}
    \kadpu_{**}(x, u) : = \kadpu\left((x, u), (x, u)\right),
\end{equation*}
\begin{equation*}
    \kadpu_{*i}(x, u) : = \kadpu\left((x, u), (\gpvar_i, u_i)\right).
\end{equation*}

\subsubsection{Alternative expression for the GP posterior variance \eqref{eq:sigma_adp}}
Using simplified notations, we can express
\begin{equation*}
    \yqueryrow \KqueryY = \left[ \kadp_{*1}(\xquery, \uquery) \; \cdots \; \kadp_{*N}(\xquery, \uquery) \right],
\end{equation*} 
and \eqref{eq:sigma_adp} becomes
\begin{align}
\label{eq:sigma_adp2} 
\sigma^{2}&(\xuquery|\dataset)\\
& = \kadp_{**}(\xquery, \uquery)  - \left[ \kadp_{*1} \; \cdots \; \kadp_{*N} \right] (\Kadp + \sigma_n^2 I )^{-1} \begin{bmatrix} \kadp_{*1} \\ \vdots \\ \kadp_{*N},
\end{bmatrix} \nonumber
\end{align}
with $(\xquery, \uquery)$ dropped in $\kadp_{*i}$ for simplicity. Note that the first term on the right-hand side is contributed from the GP prior, and the choice of the data only affects the second term.

\subsubsection{Sufficient Condition for Pointwise Feasibility of GP-CF-SOCP}

The expression of the certifying chance constraint in \eqref{eq:socp-cbf-constraint2} highlights the \revision{comparison} required to evaluate its feasibility, between the prediction uncertainty of the GP regression on the left-hand side and the mean-estimate of the true certifying constraint on the right-hand side. This structure is useful for verifying the following sufficient condition for the feasibility of \eqref{eq:socp-cbf-constraint2}.

\begin{lemma} 
\label{lemma:suffcient_cond} Given a dataset $\mathbb{D}_N$, for a point $x \in \mathcal{X}$, If there exists a constant $\alpha > 0$ such that the following inequality holds,
\begin{equation}
\label{eq:suffcient_cond}
    \beta \; \gpsigmaB\!\left(x, \alpha \socc^\top\big|\mathbb{D}_N\right) < \alpha \norm{\socc}^2
\end{equation}
% \begin{equation}
% \label{lemma:suffcient_cond}
%     \beta \norm{\gpGramUB^{1/2}(x|\mathbb{D}_N) \socc } < \norm{\socc}^2
% \end{equation}
then the GP-CF-SOCP in \eqref{eq:gp-cbf-socp} is feasible. The feasible control input can be found by taking $u = \alpha' \socc^\top$ with sufficiently large $\alpha' > 0$.
\end{lemma}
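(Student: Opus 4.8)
The plan is to exhibit an explicit feasible point for the GP-CF-SOCP in \eqref{eq:gp-cbf-socp}, namely a control input of the form $u = \alpha' \socc^\top$ for a suitably large scalar $\alpha' > 0$, and to verify that it satisfies the certifying chance constraint \eqref{eq:socp-cbf-constraint2}. First I would recall that, by the reformulation \eqref{eq:socp-cbf-constraint2}, feasibility at $x$ is equivalent to the existence of some $u$ with
\begin{equation*}
    \beta \gpsigmaB(x, u|\dataset) \le \widehat{\LfC}(x|\dataset) + \gamma(\cert(x)) + \socc\, u.
\end{equation*}
Substituting $u = \alpha' \socc^\top$, the right-hand side becomes $\widehat{\LfC}(x|\dataset) + \gamma(\cert(x)) + \alpha' \norm{\socc}^2$, which grows linearly in $\alpha'$. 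So the crux is to control the growth of the left-hand side $\beta \gpsigmaB(x, \alpha' \socc^\top|\dataset)$ in $\alpha'$.

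The key observation is the homogeneity structure of $\gpsigmaB$ coming from the quadratic form \eqref{eq:sigma_adp}: since $\sigma^2(x,u|\dataset) = \yqueryrow \Sigma(x|\dataset) \yquerycol$ and the argument vector is $[1 \ u^\top]^\top = [1 \ \alpha'\socc]^\top$, the variance is a quadratic polynomial in $\alpha'$ whose leading coefficient is $\socc\, \Sigma_{g}(x|\dataset)\, \socc^\top$ (the lower-right $m\times m$ block of $\Sigma(x|\dataset)$), i.e. exactly $\kadpu$-type terms. Hence $\beta\gpsigmaB(x, \alpha'\socc^\top|\dataset)$ grows like $\alpha'$ (linearly, since $\gpsigmaB$ is the square root of a quadratic in $\alpha'$), \emph{with the same linear rate} as appears in the hypothesis \eqref{eq:suffcient_cond}. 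Concretely, writing $s(\alpha') := \beta\gpsigmaB(x,\alpha'\socc^\top|\dataset)$, one has $s(\alpha')/\alpha' \to \beta\sqrt{\socc\,\Sigma_{g}(x|\dataset)\,\socc^\top}$ as $\alpha'\to\infty$. The hypothesis \eqref{eq:suffcient_cond} at the specific value $\alpha$ says $s(\alpha) < \alpha\norm{\socc}^2$; I would use this, together with concavity/subadditivity of the map $\alpha'\mapsto s(\alpha')$ in the relevant regime (it is the square root of a quadratic with nonnegative leading term, so $s(\alpha')\le s(\alpha) + (s(\alpha')-s(\alpha))$ and the slope is decreasing), to conclude that the asymptotic slope $\beta\sqrt{\socc\,\Sigma_{g}(x|\dataset)\,\socc^\top}$ is at most $s(\alpha)/\alpha < \norm{\socc}^2$. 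Therefore for all sufficiently large $\alpha'$, $s(\alpha') < \alpha' \norm{\socc}^2 \le \widehat{\LfC}(x|\dataset) + \gamma(\cert(x)) + \alpha'\norm{\socc}^2$ (the last inequality may require $\alpha'$ large enough that the linear term dominates any negative constant $\widehat{\LfC}(x|\dataset)+\gamma(\cert(x))$), which is exactly the certifying chance constraint, establishing feasibility with witness $u = \alpha'\socc^\top$.

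The main obstacle I anticipate is making the slope comparison rigorous: one must argue carefully that the one value of $\alpha$ for which \eqref{eq:suffcient_cond} holds actually pins down the asymptotic behavior. The clean way is to note that $t\mapsto \gpsigmaB(x, t\socc^\top|\dataset)^2 = a t^2 + 2bt + c$ with $a = \socc\,\Sigma_{g}(x|\dataset)\,\socc^\top \ge 0$, $c = \kadp_{**}$-type prior term $> 0$, so $\gpsigmaB(x,t\socc^\top|\dataset) = \sqrt{at^2 + 2bt + c}$, and then $\beta\sqrt{a\alpha^2 + 2b\alpha + c} < \alpha\norm{\socc}^2$ implies (after squaring, valid since both sides positive) $\beta^2 a < \norm{\socc}^4 - (2\beta^2 b\alpha + \beta^2 c)/\alpha^2 < \norm{\socc}^4$ once $\alpha$ is in the regime where the correction is controlled — or more simply, $\beta^2(a\alpha^2+2b\alpha+c) < \alpha^2\norm{\socc}^4$ gives $\beta^2 a < \norm{\socc}^4$ directly after dividing by $\alpha^2$ only if $2b\alpha + c \ge 0$; handling possibly negative $b$ requires observing that $a\alpha^2+2b\alpha+c$ is a variance hence nonnegative and that for large $\alpha'$ the quadratic is dominated by its $a(\alpha')^2$ term. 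Once $\beta^2 a < \norm{\socc}^4$ is secured, $\beta\sqrt{a(\alpha')^2 + 2b\alpha' + c}$ is $\sqrt{a}\,\beta\,\alpha' + O(1) < \norm{\socc}^2\alpha'$ for large $\alpha'$, and adding back $\widehat{\LfC}(x|\dataset) + \gamma(\cert(x))$ on the larger side only helps, so feasibility follows. A minor point to state cleanly is that the backup/reference objective in \eqref{eq:gp-cbf-socp} is irrelevant to feasibility — only the constraint set matters — so exhibiting any feasible $u$ suffices.
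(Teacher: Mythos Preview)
Your overall strategy---show that the asymptotic slope of $\alpha'\mapsto \beta\,\sigma(x,\alpha'\socc^\top|\dataset)$ is strictly below $\|\socc\|^2$, so that the linear right-hand side of \eqref{eq:socp-cbf-constraint2} eventually dominates---is natural, but the argument breaks exactly where you flag it. Writing $\sigma^2(x,t\socc^\top|\dataset)=at^2+2bt+c_0$ with $a,c_0>0$ and $b^2<ac_0$ (positive-definiteness of $\Sigma(x|\dataset)$), the function $s(t)=\sqrt{at^2+2bt+c_0}$ is \emph{convex}, not concave: $s''(t)=(ac_0-b^2)/s(t)^3>0$. So your ``slope is decreasing'' heuristic is backwards, and the displayed inequality $s(\alpha')\le s(\alpha)+(s(\alpha')-s(\alpha))$ is a tautology. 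More importantly, the hypothesis $\beta s(\alpha)<\alpha L$ (with $L:=\|\socc\|^2$) does \emph{not} force $\beta\sqrt{a}\le L$ when $b<0$: the ratio $s(t)/t=\sqrt{a+2b/t+c_0/t^2}$ dips \emph{below} its limit $\sqrt{a}$ on the interval where $bt+c_0<0$, so one can perfectly well have $\beta s(\alpha)/\alpha<L<\beta\sqrt{a}$. In that regime $\beta s(\alpha')-\alpha' L\to+\infty$, and no ``sufficiently large $\alpha'$'' witnesses feasibility. Your fallback (``the variance is nonnegative'') only yields $b^2\le ac_0$, which does not exclude this case.

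The paper does not pass through an asymptotic-slope argument. Instead it compares $\sigma$ at scale $\alpha$ and scale $c\alpha$ directly, via an algebraic rewriting of $\sigma^2(x,c\alpha\socc^\top)$ in terms of $\sigma^2(x,\alpha\socc^\top)$ and $\Sigma(x)_{[1,1]}$ using the quadratic-form structure \eqref{eq:sigma_adp}, and from this deduces
\[
c\alpha L-\beta\,\sigma(x,c\alpha\socc^\top)\;>\;c\bigl(\alpha L-\beta\,\sigma(x,\alpha\socc^\top)\bigr)
\]
for $c>1$; then $c$ is chosen so that the right side exceeds $-\bigl(\widehat{L_f\cert}(x|\dataset)+\gamma(\cert(x))\bigr)$. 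If you want to align with the paper, the step to work on is bounding $\sigma(x,c\alpha\socc^\top)$ by $c\,\sigma(x,\alpha\socc^\top)$ via a direct manipulation of the quadratic form, rather than by squaring, dividing by $\alpha^2$, and taking a limit.
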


\begin{proof} See Appendix \ref{appendix:lemma1proof}.
\end{proof}

The main implication of the above lemma is that the feasibility of \eqref{eq:gp-cbf-socp} can be assessed by examining the size of the prediction uncertainty, $\sigma$, in just one control input direction, specifically the direction of the mean-based estimate of $\LgC(x)$, denoted as $\socc$. This direction is particularly important because according to what the mean prediction of the GP tells, it is the control input direction in which we can most effectively regulate the value of $\cert(x)$. If the prediction uncertainty is sufficiently small in this direction, by taking the control input in this direction with large enough magnitude, we can ensure \eqref{eq:gp-cbf-socp} to be feasible. % Hence, we would call this direction a \textit{feasibility-enhancing direction} of the chance certifying constraint, for the subsequent sections.

\subsection{Data Selection Objective}
We seek to design an online data selection algorithm, that selects a subset of data from the entire dataset, $\datasetonline(x) \!\subset\!\dataset$, at every sampling time at the current state $x$. Once $M$ online data points are determined, the GP-CF-SOCP in \eqref{eq:gp-cbf-socp} is solved with the online dataset $\datasetonline(x)$ in place of $\dataset$, to determine the filtered control input $u^*(x)$ which will be applied to the system next. Among the data points in the full dataset, we want to select a limited number of points that are most helpful in characterizing the control direction that secures the feasibility of the certifying chance constraint in \eqref{eq:socp-cbf-constraint}. 

We attempt to achieve this by utilizing the result of Lemma \ref{lemma:suffcient_cond}, trying to make sure that condition \eqref{eq:suffcient_cond} is met with the limited $M$ data points we are allowed to use. Adopting the approach of transduction, the goal of the data selection is to reduce the uncertainty in the direction of $\socconline^\top$, i.e., select $\datasetonline(x)$ which best reduces $\gpsigmaB\!\left(x, \alpha \socconline^\top|\mathbb{D}_M\right)$ for sufficiently large $\alpha$. However, we do not know how large $\alpha$ needs to be to render \eqref{eq:suffcient_cond} feasible prior to selecting $\datasetonline(x)$ and actually solving the SOCP. Therefore, we eliminate the dependency on the magnitude of $\alpha$ by considering the following problem:
\begin{equation}
\label{eq:data_selection_objective_uncertainty}
    \arg \min_{\datasetonline(x)} \lim_{\alpha\rightarrow \infty }\frac{1}{\alpha}\gpsigmaB\!\left(x, \alpha \socconline^\top|\datasetonline\right).
\end{equation}

Note that we drop the dependency of $\datasetonline$ on $x$ whenever it is obvious, for notational simplicity. From the expression of the variance in \eqref{eq:sigma_adp2}, we can derive the following lemma that transforms the objective function above into a form without the appearance of $\alpha$:
\begin{lemma} \label{lemma:data_select_objective}
The optimization problem \eqref{eq:data_selection_objective_uncertainty} can be equivalently expressed as
% \begin{align}
% & \arg \min_{\datasetonline} \lim_{\alpha\rightarrow \infty } \frac{1}{\alpha}\gpsigmaB\!\left(x, \alpha \socconline|\datasetonline\right) \\
% = & \arg \max_{\datasetonline\subset \dataset} \dataselectobjective(x, \socconline) \nonumber
% \end{align}
\begin{equation}
\label{eq:optimization_subset_problem}
\arg \max_{\datasetonline\subset \dataset} \dataselectobjective(x, \socconline^\top),
\end{equation}
where $\dataselectobjective(x, u):=$
{\small
\begin{align}
\label{eq:data_select_objective}
% \dataselectobjective(x, u) \!:=
\left[ \kadpu_{*1}(x, u) \; \cdots \; \kadpu_{*M}(x, u) \right] (\Kadponline + \sigma_n^2 I )^{-1}\!\begin{bmatrix} \kadpu_{*1}(x, u) \\ \vdots\\ \kadpu_{*M}(x, u) \end{bmatrix},
\end{align}}

\noindent which is the second order term in the control input $u$ of the posterior variance $\gpsigmaB^2\!\left(x, u|\datasetonline\right)$.
% \begin{align}
% \arg  & \min_{\datasetonline} \lim_{\alpha\rightarrow \infty } && \!\!\!\!\frac{1}{\alpha}\gpsigmaB\!\left(x, \alpha \socconline|\datasetonline\right) \\
% & = \arg \max_{\datasetonline} && \!\!\!\! \left[ \kadpu_{*1}(x, \soccsimple) \; \cdots \; \kadpu_{*N}(x, \soccsimple) \right] \nonumber \\
%     &\; && (\Kadponline + \sigma_n^2 I )^{-1} \begin{bmatrix} \kadpu_{*1}(x, \soccsimple) \\ \vdots\\ \kadpu_{*M}(x, \soccsimple) \end{bmatrix}. \nonumber
% \end{align}
\end{lemma}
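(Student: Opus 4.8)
The plan is to exploit the bilinearity of the control-dependent part of the ADP kernel so that, after the substitution $u \mapsto \alpha u$, the posterior variance \eqref{eq:sigma_adp2} becomes an explicit quadratic polynomial in $\alpha$; dividing by $\alpha$ and letting $\alpha\to\infty$ then leaves only its leading coefficient, which turns out to be $\kadpu_{**}(x,u)-\dataselectobjective(x,u)$. First I would record the scaling identities. Since $\kadpu\!\left((x, u), (x', u')\right) = u^\top \Diag(k_{g_1}(x, x'), \ldots, k_{g_m}(x, x')) u'$ is bilinear in its two control arguments, replacing the query control by $\alpha u$ gives $\kadpu_{*i}(x, \alpha u) = \alpha\, \kadpu_{*i}(x, u)$ and $\kadpu_{**}(x, \alpha u) = \alpha^2\, \kadpu_{**}(x, u)$. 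Using $\kadp = k_f + \kadpu$ from Definition \ref{def:adpkernel} and that $k_f$ does not involve $u$, this yields $\kadp_{*i}(x, \alpha u) = k_f(x, x_i) + \alpha\, \kadpu_{*i}(x, u)$ and $\kadp_{**}(x, \alpha u) = k_f(x, x) + \alpha^2\, \kadpu_{**}(x, u)$, while $\Kadponline + \sigma_n^2 I$ is unchanged.

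Next I would substitute these into \eqref{eq:sigma_adp2} (with $\datasetonline$ in place of $\dataset$). Writing $a$ and $b(x,u)$ for the vectors whose $i$-th entries over the online indices are $k_f(x, x_i)$ and $\kadpu_{*i}(x, u)$, and $A := (\Kadponline + \sigma_n^2 I)^{-1}$ (positive definite, as the component kernels are positive definite and $\sigma_n > 0$), expanding $(a + \alpha b)^\top A (a + \alpha b)$ gives
\[
\gpsigmaB^{2}(x, \alpha u\,|\,\datasetonline) \;=\; \alpha^2\big(\kadpu_{**}(x, u) - b^\top A b\big) \;-\; 2\alpha\, a^\top A b \;+\; \big(k_f(x, x) - a^\top A a\big).
\]
The leading coefficient is $\kadpu_{**}(x, u) - b^\top A b = \kadpu_{**}(x, u) - \dataselectobjective(x, u)$, with $\dataselectobjective(x, u) = b^\top A b$ exactly \eqref{eq:data_select_objective}. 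Reading off degrees in $u$ (at $\alpha = 1$, with $b$ linear and $\kadpu_{**}$ quadratic in $u$) shows that the constant, linear, and quadratic-in-$u$ parts of $\gpsigmaB^{2}(x, u\,|\,\datasetonline)$ are $k_f(x,x) - a^\top A a$, $-2\,a^\top A b$, and $\kadpu_{**}(x,u) - \dataselectobjective(x,u)$ respectively, so $\dataselectobjective$ is precisely the data-dependent part of the quadratic-in-$u$ (second-order) coefficient of the posterior variance.

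Finally I would take the limit and conclude. Since $\gpsigmaB^{2}(x, \alpha u\,|\,\datasetonline)$ is a posterior variance it is non-negative for every $\alpha$ (equivalently, $\Sigma(x|\datasetonline)$ is positive definite), and a non-negative quadratic in $\alpha$ must have a non-negative leading coefficient; hence $\kadpu_{**}(x, u) - \dataselectobjective(x, u) \ge 0$ and
\[
\lim_{\alpha\to\infty}\frac{1}{\alpha}\,\gpsigmaB(x, \alpha u\,|\,\datasetonline) \;=\; \lim_{\alpha\to\infty}\sqrt{\kadpu_{**}(x, u) - \dataselectobjective(x, u) - 2\, a^\top A b/\alpha + (k_f(x, x) - a^\top A a)/\alpha^2} \;=\; \sqrt{\kadpu_{**}(x, u) - \dataselectobjective(x, u)}.
\]
For the query direction $u = \socconline^\top$, the term $\kadpu_{**}(x, \socconline^\top)$ is a GP-prior quantity fixed by $x$ and $\socconline^\top$ and hence the same for every candidate subset $\datasetonline \subset \dataset$; since the square root is strictly increasing, minimizing $\sqrt{\kadpu_{**}(x, \socconline^\top) - \dataselectobjective(x, \socconline^\top)}$ over $\datasetonline$ is therefore equivalent to maximizing $\dataselectobjective(x, \socconline^\top)$, i.e.\ \eqref{eq:optimization_subset_problem}. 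The substantive work here is bookkeeping in the expansion — tracking powers of $\alpha$ (equivalently, degrees in $u$) and isolating the leading coefficient as $\kadpu_{**} - \dataselectobjective$ — and the one step needing genuine care is verifying this leading coefficient is non-negative so that the square root and its limit are well defined, which I obtain from positive-definiteness of $\Sigma(x|\datasetonline)$.
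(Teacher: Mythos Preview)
Your proof is correct and follows essentially the same route as the paper's: both exploit the bilinearity of $\kadpu$ in its control arguments to isolate the leading-$\alpha$ coefficient of $\sigma^2(x,\alpha u\,|\,\datasetonline)$ as $\kadpu_{**}(x,u)-\dataselectobjective(x,u)$, and then drop the prior term $\kadpu_{**}$ to convert the $\min$ into a $\max$. Your expansion is slightly more explicit (you carry the $\sqrt{\cdot}$ through and verify nonnegativity of the leading coefficient via positive-definiteness of $\Sigma$, whereas the paper simply passes to $\sigma^2/\alpha^2$), but note that your claim that $\kadpu_{**}(x,\socconline^\top)$ is ``the same for every candidate subset'' is not literally true since $\socconline$ depends on $\datasetonline$; the paper's proof makes the same tacit assumption, which is what Remark~\ref{remark3} addresses in practice by replacing $\socconline$ with a fixed direction.
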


\begin{proof} See Appendix \ref{appendix:lemma2proof}.
\end{proof}

Thus, we will consider $\dataselectobjective(x, \socconline^\top)$ as the \textit{objective function of the data selection algorithm}.

\begin{remark} \revision{We do not have access to $\socconline$ prior to determining $\datasetonline$. Thus, we replace $\socconline$ in \eqref{eq:optimization_subset_problem} with either $\socc$, where $\dataset$ is the entire dataset, or $\widehat{L_g \cert}(x | \datasetonline')$, where $\datasetonline'$ represents the dataset selected online at the previous time step. Note that $\socc$ is a better estimate of the groundtruth $L_g C(x)$ than $\socconline$ since it uses the entire dataset. Also, evaluating $\socc$ in \eqref{eq:mean-based-estimate} only requires the computation of $\gpmuB (x,u|\dataset)$  but not $\sigma(x, u|\dataset)$. Since $\mathbf{z}^{\top} (\Kadp + \sigma_n^2 I )^{-1}$ in \eqref{eq:mu_adp} can be precomputed offline, the time complexity of evaluating $\socc$ online is $\mathcal{O}(N)$.
However, when $N$ is very large, it may be impractical or computationally infeasible to evaluate $(\Kadp + \sigma_n^2 I)^{-1}$  offline due to the matrix inverse. In such cases, an effective approximation for $\socconline$ can still be achieved by using $\widehat{L_g \cert}(x | \datasetonline')$ instead.}
\label{remark3}
\end{remark}

\subsection{\revision{Kernel-based Alignment Measure \& Naive Approach}}
\label{subsec:naive}
Before we proceed to present the main algorithm of the paper, let's take a moment to build a better understanding of the data points we wish to include in $\datasetonline(x)$. To facilitate this discussion and simplify our thought process, consider a \revision{hypothetical} scenario where all data points in $\dataset$ are not correlated with one another, meaning that $\kadp_{ij} = \kadp\left((x_i, u_i), (x_j, u_j)\right) = 0$ for all $i\neq j$. Additionally, let's assume there is no noise in the data, so $\sigma_n=0$. \revision{Although such data would not exist in practice, it is considered to help build our intuition for understanding what data point would be included in \eqref{eq:optimization_subset_problem}.}

In this simplified case, $\Kadp + \sigma_n^2 I =\Diag(\kadp_1, \cdots, \kadp_N)$, and from \eqref{eq:data_select_objective} it holds that

\vspace{-10pt}
{\small\begin{align*}
    & \dataselectobjective (x, u) \\ 
    & = \left[ \kadpu_{*1}(x, u) \; \cdots \; \kadpu_{*M}(x, u) \right]\! \Diag\!\left(\!\frac{1}{\kadp_1}, \cdots, \frac{1}{\kadp_M}\!\right)\!\!\begin{bmatrix} \kadpu_{*1}(x, u) \\ \vdots\\ \kadpu_{*M}(x, u) \end{bmatrix} \\
    & = \left[ \frac{\kadpu_{*1}(x, u)}{\sqrt{\kadp_1}} \; \cdots \; \frac{\kadpu_{*M}(x, u)}{\sqrt{\kadp_M}} \right] \!\!\begin{bmatrix} \frac{\kadpu_{*1}(x, u)}{\sqrt{\kadp_1}} \\ \vdots\\ \frac{\kadpu_{*1}(x, u)}{\sqrt{\kadp_1}} \end{bmatrix} \!=\!\sum_{i=1}^{M} \left( \frac{\kadpu_{*i}(x, u)}{\sqrt{\kadp_i}} \right)^{\!2}.
\end{align*}}

\noindent We define a \textit{kernel-based alignment measure} as
\begin{equation}
\nqi(x, u) := \frac{|\kadpu_{*i}(x, u)|}{\sqrt{\kadp_i}} = \frac{|\kadpu\left((x, u), (\gpvar_i, u_i)\right)|}{\sqrt{\kadp
\left((\gpvar_i, u_i), (\gpvar_i, u_i)\right)}}, \label{eq:normalized_kernel_metric}
\end{equation}
which results in
\begin{equation}
\label{eq:objective_simple_case}
    \dataselectobjective\!\left(x, \socconline^\top\right)\!=\!\sum_{i=1}^{M} \nqi^2\left(x, \socconline^\top\right).
\end{equation}
Therefore, we can optimize $\dataselectobjective\!\left(x, \socconline^\top\right)$ simply by selecting $M$ points from $\dataset$ that exhibit maximum values of $\nqi\!\left(x, \socconline^\top\right)$. The time complexity of finding such points is $\mathcal{O}(N)$, which can be achieved using efficient algorithms, such as a Quickselect algorithm \cite{hoare1961algorithm}. 

Equation \eqref{eq:objective_simple_case} highlights that the alignment measure $\nqi\!\left(x, \socconline^\top\right)$ is the measure of the relevance of the data point $(x_i, u_i)$ to the feasible direction of the certifying chance constraint. Here, we offer a concise explanation of the geometric interpretation of this measure.

For kernels used in GP regression, note that the kernel value of two inputs, $k(\gpvar, \gpvar')$ can be interpreted as an inner product between the feature vectors of $\gpvar$ and $\gpvar'$, i.e. $k(\gpvar, \gpvar') = \phi(\gpvar) \cdot \phi(\gpvar')$ \cite{williams2006gaussian}. For the ADP kernel in Definition \ref{def:adpkernel}, denoting the feature vectors for individual kernels as $\phi_f$, $\phi_{g_1}, \cdots, \phi_{g_m}$, we can express the ADP kernel's feature vector as $\boldsymbol{\phi}(x, u):= [\phi_f(x)\;\phi_{g_1}(x)\;\cdots\; \phi_{g_m}(x)]\left[\!\!\begin{array}{c} 1 \\ u \\\end{array}\!\!\right]$. Consequently, we get 
\begin{equation*}
    \small
    \nqi\!\left(x, \socconline^\top\!\right)\!= \!\!\lim_{\alpha\rightarrow\infty} \!\frac{\left|\boldsymbol{\phi}(x, \alpha \socconline^\top) \cdot \boldsymbol{\phi}(x_i, u_i)\right|}{\alpha \sqrt{\boldsymbol{\phi}(x_i, u_i) \cdot \boldsymbol{\phi}(x_i, u_i)}},
    \normalsize
\end{equation*}
from \eqref{eq:normalized_kernel_metric}, where we get rid of the autonomous vector field relevant part from the numerator in \eqref{eq:normalized_kernel_metric} by taking the limit of $\alpha\!\rightarrow\!\infty$. Thus, $\nqi\!\left(x, \socconline^\top\right)$ captures how well the data point is aligned in the feature space of the ADP kernel with the feasible input direction.

In summary, the naive approach, which selects $M$ points with maximum values of $\nqi\!\left(x, \socconline^\top\right)$ from the dataset $\dataset$, optimally achieves the objective in \eqref{eq:data_selection_objective_uncertainty} under the ideal conditions of an uncorrelated dataset and absence of measurement noise. However, these assumptions are rarely satisfied in real-world datasets. In fact, data from actual systems often have a high correlation because sampled data points from trajectories are sequential and share similar properties due to their close proximity in time and space.

We use the Polysys example to highlight the failure of the naive approach in handling datasets that contain self-correlated data points. Our demonstration reveals that the naive approach may choose an unsuitable $\datasetonline$, rendering the SOCP filter infeasible. This limitation motivates the development of a more advanced data selection algorithm, which we present in the next subsection.

\vspace{1em}
\textit{\textbf{Running Example--Polysys (Cont'd):}}
As described in Section \ref{subsec:running_example_intro}, the dataset created for the Polysys example contains highly correlated data points, especially in a data cluster. Figure \ref{fig:data_selection} (a) illustrates a failure case of the naive algorithm. In the first row of Figure \ref{fig:data_selection} (a), we visualize the selected data points $\datasetonline(x)$ at a query state $x$ under various values of $M$. The second row represents the prediction uncertainty $\beta\sigma(x,u)$ in control-input space as an ellipse, and $\socconline$ as a dashed magenta line, thereby illustrating the competitive relationship between the left-hand side (ellipse) and the right-hand side (magenta line) of \eqref{eq:socp-cbf-constraint2}. Since the naive approach greedily selects the points that maximize $\nqi\!\left(x, \socconline^\top\right)$ without considering the correlation between them, the selected data points are sourced from the data cluster that is close to the query state. The effect of using such highly self-correlated data points as $\datasetonline$ is shown in the second row of the figure. It demonstrates that even after increasing the size of $M$ from 5 to 20, the uncertainty ellipse barely reduces its size, leading to the infeasibility of the SOCP. Clearly, selecting such concentrated data points barely provides additional information, which illustrates why the naive approach can fail.

\begin{figure}
\centering
\includegraphics[width=\columnwidth]{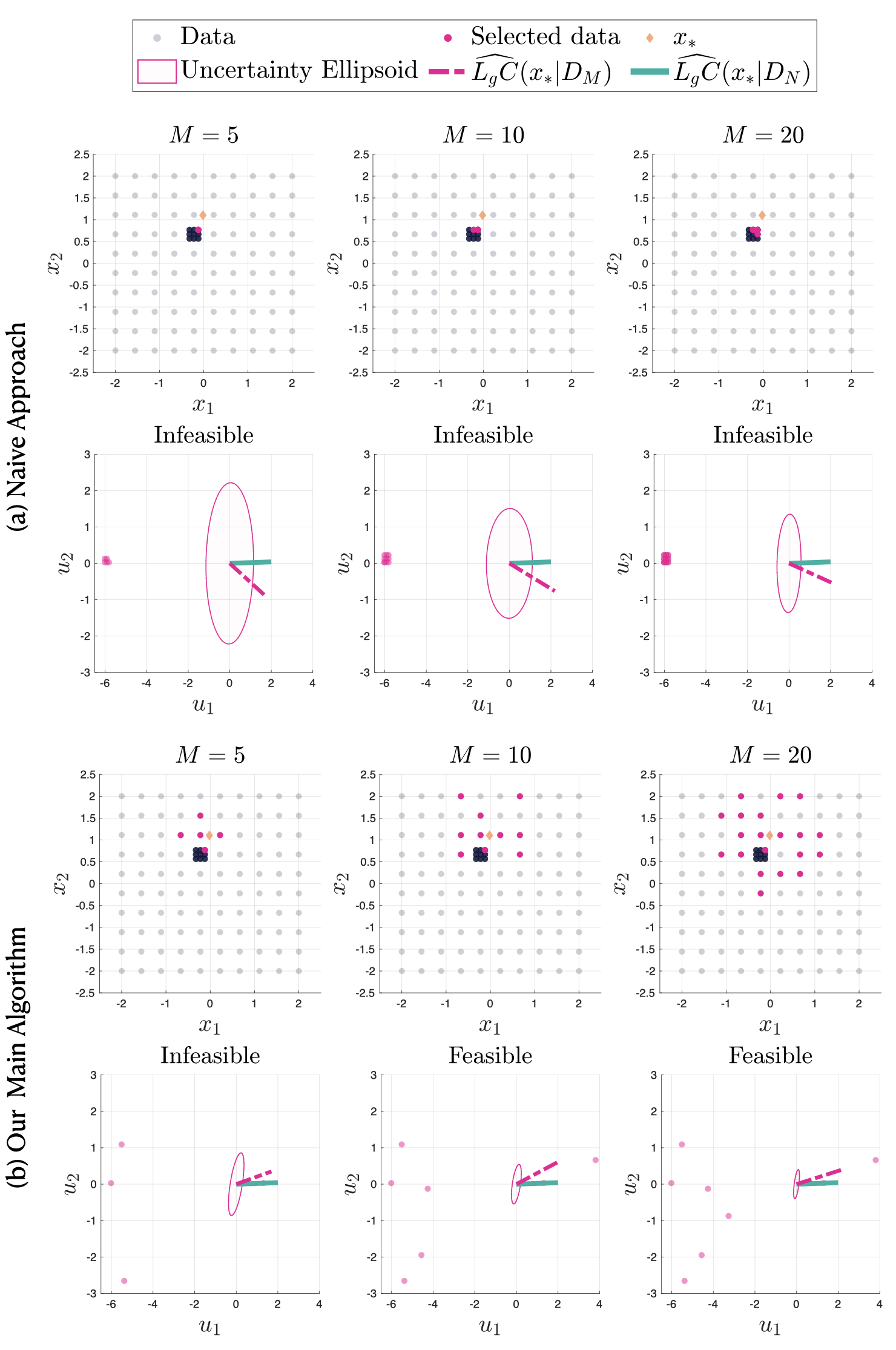}
\vspace{-1em}
\caption{Comparison between the two data selection strategies--(a) naive approach described in Section \ref{subsec:naive} and (b) our main algorithm described in Section \ref{subsec:coca}, on the Polysys running example system, with a varying number of online selected data points ($M=5, 10, 20$). In each case, the first row visualizes the entire dataset $\mathbb{D}_N$ (grey dots) projected on the state space and the data points selected online $\mathbb{D}_M$ (magenta dots) according to the data selection algorithm at the query state $x = [-0.02 \; 1.10]^\top$ (orange diamond). The second row visualizes the selected points projected on the control input space (magenta dots), and the prediction uncertainty $\beta\sigma(x, u)$'s growth in the control space as an ellipse. We also visualize $\socc$ and $\socconline$ as the dashed \textteal{green} and \textmagenta{magenta} lines, respectively. The ellipse and magenta line represent the growth of the right-hand side and left-hand side of \eqref{eq:socp-cbf-constraint2}, respectively. The feasibility of the chance certifying constraint can be 
deduced by evaluating the relative ratio of the magenta line's length to the ellipse's radial distance in the magenta line's direction. A smaller ratio suggests that a larger control input in the $\socconline$ direction is required to satisfy the chance constraint.} %When the relative ratio is greater than or equal to
% 1, the SOCP is feasible.}
\label{fig:data_selection}
\end{figure}

\subsection{Main Algorithm}
\label{subsec:coca}

Selecting the data points in the dataset $\dataset$ that maximize our objective function $\dataselectobjective(x, \socconline^\top)$ when the dataset is self-correlated is in fact a combinatorial optimization problem which is NP-hard \cite{krause2008optimizing}.
This complexity occurs from the need, as seen in \eqref{eq:data_select_objective}, to find the optimal subset of data that maximizes the correlation with the target point ($\kadp_{*j}$ in \eqref{eq:data_select_objective}), while minimizing the self-correlation within the subset (captured by $K_{\datasetonline}$). Therefore, directly optimizing for the objective function online is intractable. The result presented next, which is the main assertion of our paper, allows us to indirectly find a good candidate $\datasetonline$ by maximizing a lower bound of the objective function.

\begin{theorem}
\label{thm:main} 
For a given dataset $\mathbb{D}_M$ with $M \ge 2$, assume that there exists a constant $\epsilon  \in [0, 1)$ that satisfies 
\begin{equation}
\label{eq:correlation}
    \kadp_{ij}^2 < \epsilon^2 \kadp_{i} \kadp_{j},
\end{equation}
for all $i, j = 1, \cdots, M$ and $i\neq j$, and
\begin{equation}
\label{eq:noise_cond}
    \sigma_n^2 \le \frac{\epsilon^2 (M-1) \min_{i} \kadp_{i}}{1-\epsilon}.
\end{equation}
Then, $\dataselectobjective(x, u)$ is lower bounded by the inequality below
\begin{equation}
\label{eq:main-lower-bound}
\dataselectobjective(x, u) \ge \frac{1 - \epsilon}{1 + \epsilon(M-2)} \sum_{i=1}^{M} \nqi^2(x, u).
\end{equation}
Note that the equality is satisfied when $\epsilon = 0$.
\end{theorem}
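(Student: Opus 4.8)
The plan is to view both sides of \eqref{eq:main-lower-bound} as quadratic forms in the \emph{same} vector and to reduce the inequality to a positive-semidefinite ordering of matrices, which after a diagonal rescaling becomes a Gershgorin-type eigenvalue bound on a correlation matrix. Concretely, set $\mathbf{p} := \big[\kadpu_{*1}(x,u)\ \cdots\ \kadpu_{*M}(x,u)\big]^\top$ and $D := \Diag(\kadp_1,\dots,\kadp_M)$. Then \eqref{eq:data_select_objective} reads $\dataselectobjective(x,u) = \mathbf{p}^\top(\Kadponline+\sigma_n^2 I)^{-1}\mathbf{p}$, while \eqref{eq:normalized_kernel_metric} gives $\sum_{i=1}^M \nqi^2(x,u) = \sum_{i=1}^M \big(\kadpu_{*i}(x,u)\big)^2/\kadp_i = \mathbf{p}^\top D^{-1}\mathbf{p}$. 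Writing $c := \tfrac{1-\epsilon}{1+\epsilon(M-2)}$, the claim \eqref{eq:main-lower-bound} would follow from the matrix inequality $(\Kadponline+\sigma_n^2 I)^{-1} \succeq c\,D^{-1}$ simply by evaluating both sides on $\mathbf{p}$. Since $\Kadponline+\sigma_n^2 I$ is positive definite (a kernel Gram matrix, PSD, plus noise, and assumed invertible) and $D$ is positive definite (the individual kernels being positive definite, so $\kadp_i>0$), anti-monotonicity of the matrix inverse makes this equivalent to $\Kadponline + \sigma_n^2 I \preceq c^{-1} D$.

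The next step is to conjugate by $D^{-1/2}$: the above is equivalent to $\widetilde{K} + \sigma_n^2 D^{-1} \preceq c^{-1} I$, where $\widetilde{K} := D^{-1/2}\Kadponline D^{-1/2}$ is the normalized (``correlation'') kernel matrix, having unit diagonal and off-diagonal entries $\kadp_{ij}/\sqrt{\kadp_i\kadp_j}$. By \eqref{eq:noise_cond}, every diagonal entry of $\sigma_n^2 D^{-1}$ is at most $\sigma_n^2/\min_i\kadp_i \le \tfrac{\epsilon^2(M-1)}{1-\epsilon}$, hence $\sigma_n^2 D^{-1} \preceq \tfrac{\epsilon^2(M-1)}{1-\epsilon} I$, so it suffices to show $\widetilde{K} \preceq \big(c^{-1} - \tfrac{\epsilon^2(M-1)}{1-\epsilon}\big) I$. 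Using the factorization $1+\epsilon(M-2)-\epsilon^2(M-1) = (1-\epsilon)\big(1+(M-1)\epsilon\big)$ one computes $c^{-1} - \tfrac{\epsilon^2(M-1)}{1-\epsilon} = 1+(M-1)\epsilon$, so the goal reduces to the scalar eigenvalue bound $\lambda_{\max}(\widetilde{K}) \le 1+(M-1)\epsilon$. Writing $\widetilde{K} = I + E$ with $E$ having zero diagonal and, by \eqref{eq:correlation}, $|E_{ij}| < \epsilon$, Gershgorin's circle theorem places every eigenvalue of $\widetilde K$ within distance $\sum_{j\ne i}|E_{ij}| < (M-1)\epsilon$ of $1$, so $\lambda_{\max}(\widetilde{K}) \le 1+(M-1)\epsilon$, closing the chain of reductions. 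The $\epsilon=0$ equality case is immediate: \eqref{eq:correlation} then forces $\kadp_{ij}=0$ for $i\ne j$ and \eqref{eq:noise_cond} forces $\sigma_n=0$, so $\Kadponline+\sigma_n^2 I = D$ and $c=1$, giving $\dataselectobjective(x,u) = \mathbf{p}^\top D^{-1}\mathbf{p} = \sum_i \nqi^2(x,u)$.

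The part requiring the most care is the bookkeeping of the inequality directions across the two reductions (inverting the matrix ordering, then conjugating by $D^{-1/2}$) together with the precise arithmetic ensuring that the bound from \eqref{eq:noise_cond} on the noise term exactly cancels against $c^{-1}$ to leave the clean threshold $1+(M-1)\epsilon$ that the Gershgorin estimate can meet; in other words, one must verify that the constants in hypotheses \eqref{eq:correlation}--\eqref{eq:noise_cond} are calibrated so that every step is tight enough to propagate. Everything else is routine linear algebra.
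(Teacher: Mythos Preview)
Your proof is correct and follows the same overall reduction as the paper's: both rewrite the two sides as quadratic forms in the vector $\mathbf{p}$, reduce \eqref{eq:main-lower-bound} to the matrix ordering $\Kadponline+\sigma_n^2 I \preceq c^{-1}D$, conjugate by $D^{-1/2}$, and absorb the noise term via \eqref{eq:noise_cond} so that everything hinges on $\lambda_{\max}(\widetilde K)\le 1+(M-1)\epsilon$ for the normalized kernel matrix $\widetilde K$. The difference is in how that last eigenvalue bound is obtained. The paper splits off a matrix $\bar S$ with unit diagonal and off-diagonals $-\tfrac{1}{\epsilon(M-1)}\kadp_{ij}/\sqrt{\kadp_i\kadp_j}$, applies Perron--Frobenius to the nonnegative matrix $C$ of absolute values to get $\lambda_{\max}(C)<1$, and then invokes a separate sign-flipping lemma to transfer positive-definiteness from $I-C$ to $\bar S$; Weyl's inequality is used to glue the pieces. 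You instead apply Gershgorin's circle theorem directly to $\widetilde K$: since every diagonal entry is $1$ and every off-diagonal row sum of absolute values is strictly below $(M-1)\epsilon$ by \eqref{eq:correlation}, all eigenvalues lie in $(1-(M-1)\epsilon,\,1+(M-1)\epsilon)$. This is a genuinely cleaner route: Gershgorin already works with absolute values of the off-diagonals, so there is no need for the Perron--Frobenius detour or the auxiliary sign-flipping lemma, and the algebraic identity $c^{-1}-\tfrac{\epsilon^2(M-1)}{1-\epsilon}=1+(M-1)\epsilon$ that you check makes the calibration of the constants transparent. The paper's approach, on the other hand, isolates the role of the correlation structure in the standalone matrix $\bar S$, which is reused verbatim in the alternative bound of Theorem~\ref{thm:main_alternative}.
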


\begin{proof}
See Appendix \ref{appendix:thm1proof}.
\end{proof}

\revision{Theorem \ref{thm:main} concludes that the lower bound of the objective function can be maximized by selecting $M$ points with maximum values of $\nqi\!\left(x, \socconline^\top\right)$, under conditions \eqref{eq:correlation} and \eqref{eq:noise_cond}. The condition \eqref{eq:correlation} requires the dataset to exhibit no more than a weak correlation specified by $\epsilon$, while condition \eqref{eq:noise_cond} necessitates that the noise variance remains comparatively small with respect to $\epsilon$ and $M$.

\begin{remark} \textit{(When the noise level is high)} Highly correlated data can often be useful, especially when the measurement noise is large, since repetitive measurements at the same input can help reduce the GP posterior variance \cite{calandriello2022scaling}. We provide an alternative lower bound of $\dataselectobjective(x, u)$ in Appendix \ref{appendix:alternative-theorem}, similar to but different from Theorem \ref{thm:main}, which does not require an upper bound on $\sigma_n$ like \eqref{eq:noise_cond}, by using an alternative kernel-based alignment measure that includes the noise variance. Both lower bounds require $\epsilon\!<\!1$ since with $\epsilon\!=\!1$, our lower bound becomes trivial as the right-hand side of \eqref{eq:main-lower-bound} is zero. Setting $\epsilon\!=\!1$ might be desirable when the noise is high and there exist multiple data points at exactly the same state and input. However, in practice, any realistic dataset collected from real-world trajectories would not contain such data points. 

It is worth noting that condition \eqref{eq:noise_cond} becomes less stringent as $M$ increases, implying that if we use more data, we can tolerate larger measurement noise. Conversely, if $M$ is constrained to be small, we need to allow more correlation in data (larger $\epsilon$) so that the inference can benefit from multiple noisy measurements of similar data points. An experiment conducted in Appendix \ref{appendix:noise-experiment} shows that as the noise level increases, choosing a larger $\epsilon$ helps achieve a higher $\dataselectobjective(x, u)$.
\end{remark}
} % end of revision

Leveraging the result of Theorem \ref{thm:main}, we aim to maximize the lower bound as a proxy for the original objective function, thereby rendering the problem more tractable. The essence of our main algorithm is to condition the dataset to satisfy \eqref{eq:correlation}, ensuring that Theorem \ref{thm:main} holds, and then identify the data points for which $\sum_{i=1}^{M} \nqi^2(x, u)$ is maximized. 

We achieve this through a two-fold algorithm. First, during the offline phase, we compute a ready-to-use binary \revision{correlation indicator} matrix $B \in \R^{N \times N}$, with elements defined as follows:
\begin{equation}
\label{eq:binary_matrix}
B_{ij} := \begin{cases}
    1 & \text{if } \kadp_{ij}^2 \revision{\ge} \epsilon^2 \kadp_{i} \kadp_{j}, \\
    0 & \text{otherwise.}
\end{cases}
\end{equation}
It can be efficiently constructed by checking the matrix
\begin{equation}
{\Diag\!\left(\!\frac{1}{\sqrt{\kadp_1}}, \cdots, \frac{1}{\sqrt{\kadp_N}}\!\right)\!\!}^\top \Kadp {\Diag\!\left(\!\frac{1}{\sqrt{\kadp_1}}, \cdots, \frac{1}{\sqrt{\kadp_N}}\!\right)\!},
\end{equation}
exceeding the threshold $\epsilon$.
This operation has a time complexity of $\mathcal{O}(N^2)$ but occurs during the offline stage, so it does not impact the online time complexity.

Next, in the online phase described in Algorithm \ref{algo:sparsegp}, we first initialize a candidate dataset as the entire dataset (Line 5-6), \revision{from which we select the online data that initially starts as the empty set..} We then sequentially add to the online dataset $\datasetonline$ the data point that has the maximum value of $\nqi\!\left(x, \socconline^\top\right)$ among those in the candidate dataset (Line 8-11). As we select each point, we remove from the candidate dataset the points that have a correlation greater than $\epsilon$ relative to the selected point, by directly referring to the matrix $B$ (Line 12-13). 

Algorithm \ref{algo:sparsegp} has a time complexity of $\mathcal{O}(MN)$, as each operation in Line 6 and Line 8 inside the for loop is $\mathcal{O}(N)$. At each time step, after obtaining $\datasetonline$ from the proposed algorithm, we use this online dataset for the GP-CF-SOCP filter in \eqref{eq:gp-cbf-socp} instead of using the entire dataset. This requires evaluating the matrix inverse in \eqref{eq:mu_adp} and \eqref{eq:sigma_adp} online, which has a time complexity of $\mathcal{O}(M^3)$. Thus, with our proposed approach, obtaining the optimal filtered control input $u^*(x)$ has a total time complexity of $\mathcal{O}(NM + M^3)$, in terms of $N$ and $M$.
\revision{The time complexity of our online data selection algorithm is linear in $N$, and we choose $M << N$ in practice. Note that the time complexity of solving the SOCP does not depend on the number of data points.}

\revision{\begin{remark} \textit{(Choosing the right values of $\epsilon$ and $M$)}
Choosing the value of the correlation threshold $\epsilon$ allows users to strike a balance between the contribution of the term $\sum_{i=1}^{M} \nqi^2(x, u)$ and the impact of self-correlation on the objective function $\dataselectobjective(x, u)$.
Choosing $\epsilon$ close to one loosens the lower bound in \eqref{eq:main-lower-bound}; however, an excessively small $\epsilon$ prohibits users from using data points with even the slightest correlation, which is impractical. Ideally, we should find the optimal value of $\epsilon$ that offers the best trade-off. As shown in the experiment in Appendix \ref{appendix:noise-experiment}, this optimal value might also depend on the noise level of the data. Determining the optimal $\epsilon$ is an NP-hard problem, as it shares the same problem complexity as maximizing the data selection objective $\dataselectobjective$ directly. A practical and effective strategy is to leverage prior knowledge of the full dataset to identify an acceptable $\epsilon$ value, for instance, by evaluating the histogram of $\frac{\kadp_{ij}^2}{\kadp_{i} \kadp_{j}}$ for the dataset and selecting an $\epsilon$ that corresponds to a reasonable quantile of data satisfying \eqref{eq:correlation}. 

Similarly, careful attention is required when choosing the value of $M$, to achieve the right balance between the sufficiency of information in $\datasetonline$ and the computation time, which will increase with respect to $M$. An effective strategy would be to dynamically update $M$, by ending the for loop of Algorithm \ref{algo:sparsegp} when condition \eqref{eq:suffcient_cond} is satisfied, which ensures the feasibility of the SOCP. However, in our experiments, it was sufficient to set $M$ to fixed values.
\end{remark}}

\vspace{1em}
\textit{\textbf{Running Example--Polysys (Cont'd):}} We investigate how Algorithm \ref{algo:sparsegp} selects data online and improves the downstream objective of enhancing the feasibility of the GP-CLF-SOCP through its self-correlation remedy in the Polysys example. \revision{We chose $\epsilon=0.95$ as the correlation threshold in this example. The correlation within the data cluster is greater than this value; thus, using this value prevents our main algorithm from selecting more than one point per data cluster.} The first row of Figure \ref{fig:data_selection} (b) displays that our main algorithm selects at most one data from each concentrated data cluster even as $M$ increases. This correlation-aware behavior resulting from upper-bounding the maximum self-correlation of the selected data points induces the algorithm to select diverse data. Consequently, the prediction uncertainty, illustrated as the ellipse in the second row of the image, is reduced as $M$ increases in all directions of $u$, but more importantly, it is primarily reduced in the direction of $\socc$. Moreover, in the case of $M=20$, it is notable that the algorithm prioritizes selecting data points whose control input values are well aligned in the direction of $\socconline$. % This shows that $\nqi\!\left(x, \socconline^\top\right)$, the measure we use to select the data points, does capture the relevance of the data with respect to $\socconline$, the feasible direction of the certifying chance constraint.
As a result, the GP-CLF-SOCP controller utilizing the online dataset constructed by our main algorithm attains feasibility even in the small online dataset size as shown for $M=10, 20$ in Figure \ref{fig:data_selection}.

% \begin{algorithm}[t]
% \small
% \DontPrintSemicolon % Some LaTeX compilers require you to use 
% \textbf{Input} Current state $x$, Entire dataset $\dataset$, % Size of online dataset $M$, 
% % \\ \hspace*{\algorithmicindent} 
% $B$ defined in \eqref{eq:binary_matrix}

% $k \gets [~]$, $\datasetonline \gets \emptyset$\;
% $u_* \gets \socc$\;
% \For{\textbf{each} $(x_i, u_i, z_i) \in \dataset$}
% {
%     $k[i] \gets n_i(x_*, u_*)$ from the \eqref{eq:normalized_kernel_metric}\;
% }
% % \Endfor

% \For{$i=0; ~i<M; ~i=i+1$}
% {
%     $l_* \gets \argmax_{l}~ k[l]$\;
%     $\datasetonline \gets \datasetonline \cup (x_{l_*}, u_{l_*}, z_{l_*}) $\;
%     \For{$j=0;~j<N;~j=j+1$}
%     {
%         \If{$B_{\epsilon}[l_*, j] == 0$}
%         {
%             $k[j] \gets -1$\;
%         }% \Endif
%     }
%     % \Endfor
% }
% % \Endfor
% \caption{Certifciate-Oriented \& Correlation-Aware Online Data Selection (\ours)}
% \label{algo:sparsegp}
% \end{algorithm}
\begin{algorithm}[t]
\small
\DontPrintSemicolon % Some LaTeX compilers require you to use 
\textbf{Input: } Current state $x$, entire dataset $\dataset$, % Size of online dataset $M$, 
% \\ \hspace*{\algorithmicindent} 
$B$ defined in \eqref{eq:binary_matrix}
\\ \textbf{Output: } Online dataset $\datasetonline$ 
\\
\revision{// Initialize online dataset}
\\
$\datasetonline \gets \emptyset$\;
\revision{// Index set of candidate data \;}
$\mathcal{I}_{\text{candidate}}\gets\{1, 2, \cdots, N\}$ $\quad$ \;
\For{$k=0; ~k<M; ~k=k+1$}
{
    \revision{// Find the index of data that maximizes \eqref{eq:normalized_kernel_metric} \;}
    $i_* \gets \argmax_{i\in \mathcal{I}_{\text{candidate}}}~ n_i(x, \socconline^\top)$\;
    \revision{// Add  $i_*^{\text{th}}$ data to the online dataset \;}
    $\datasetonline \gets \datasetonline \cup (x_{i_*}, u_{i_*}, z_{i_*}) $\;
    \revision{// Remove data correlated with $i_*^{\text{th}}$ data from candidates \;}
$\mathcal{I}_{\text{candidate}}\gets\mathcal{I}_{\text{candidate}}\setminus \{j\in \mathcal{I}_{\text{candidate}} \; | \; \revision{B_{i_* j} ==1}\}$\;
    % \Endfor
}
% \Endfor
\caption{Online Data Selection for GP-CF-SOCP}
\label{algo:sparsegp}
\end{algorithm}

\subsection{Related Data Selection Methods}
The point at issue of this paper is very related to the information-theoretic data subset selection \cite{daszykowski2002representative, wei2015submodularity} and sensor placement \cite{currin1991bayesian, krause2008optimizing} problems, which are known to be NP-hard for many different objective functions, such as mutual information and conditional entropy \cite{ko1995exact, krause2008near}. While our focus is on optimizing a particular certification-oriented measure \eqref{eq:data_select_objective} that differs from the information-theoretic objective functions, our optimization problem still suffers from the same combinatorial challenges, and solving \eqref{eq:optimization_subset_problem} to optimality would be intractable for large datasets. 

A reasonable alternative to our approach would be to form the online dataset $\datasetonline$ by greedily selecting, one at a time, the data points that maximize \eqref{eq:optimization_subset_problem}. This idea was applied to the sensor placement in \cite{krause2008near} and has been used for data-driven control in \cite{umlauft2020smart, lederer2021impact}. To approximately solve \eqref{eq:optimization_subset_problem}, this greedy selection method can be implemented with an asymptotic time complexity of $\mathcal{O}(NM^3)$. While this asymptotic complexity is only slightly worse than the $\mathcal{O}(NM)$ complexity of Algorithm \ref{algo:sparsegp}, in practice we observe that the greedy method is too slow to perform the data selection online, even when using the locality and lazy evaluation speedups proposed in \cite{krause2008optimizing}. 

Another simpler approach would be to choose the $k$-nearest neighbors (k-NN) at each query state-action pair. However, given the control-affine structure of the target function $\Delta$, it is not immediately clear which distance metric should be used for the k-NN to capture the most relevant information. The authors in \cite{yu2002kernel} propose to use the \emph{kernel distance} \cite{hein2005hilbertian,phillips2011gentle}, which is the Euclidean distance in the kernel feature space. Although simple, these k-NN selection approaches suffer from similar problems as our naive selection algorithm, as they do not consider the self-correlation of the dataset.
\section{Results}
\label{sec:results}

In this section, we apply our method to three specific examples, consisting of two numerical simulations and one hardware experiment. We refer to the GP-CF-SOCP filter using the full dataset as \textit{\textbf{GP-CF-SOCP (Full)}}, the GP-CF-SOCP using the online data constructed by the naive approach in Section \ref{subsec:naive} as \textit{\textbf{GP-CF-SOCP (Naive)}}, and the GP-CF-SOCP using the online data constructed by our main data selection algorithm as \textbf{\textit{GP-CF-SOCP (Ours)}}.

\revision{Finding the right values of $\beta$ and kernel hyperparameters that satisfy Definition \ref{asump:well-calibrated} is a non-trivial task, as noted by existing works \cite{berkenkamp2023bayesian, capone2022gaussian}. In our experiments, we initialize the hyperparameters by maximizing the log marginal likelihood \cite{williams2006gaussian}. Then, we decide the value of $\beta$ and fine-tune the parameters by evaluating the satisfaction of \eqref{eq:well_calibrated_assumption} on the training data, with the probability threshold $\delta = 0.01$.}

\vspace{-0.5em}
\subsection{Running Example: Polynomial System (Cont'd)}
\begin{figure}
\centering
\includegraphics[width=.8\columnwidth]{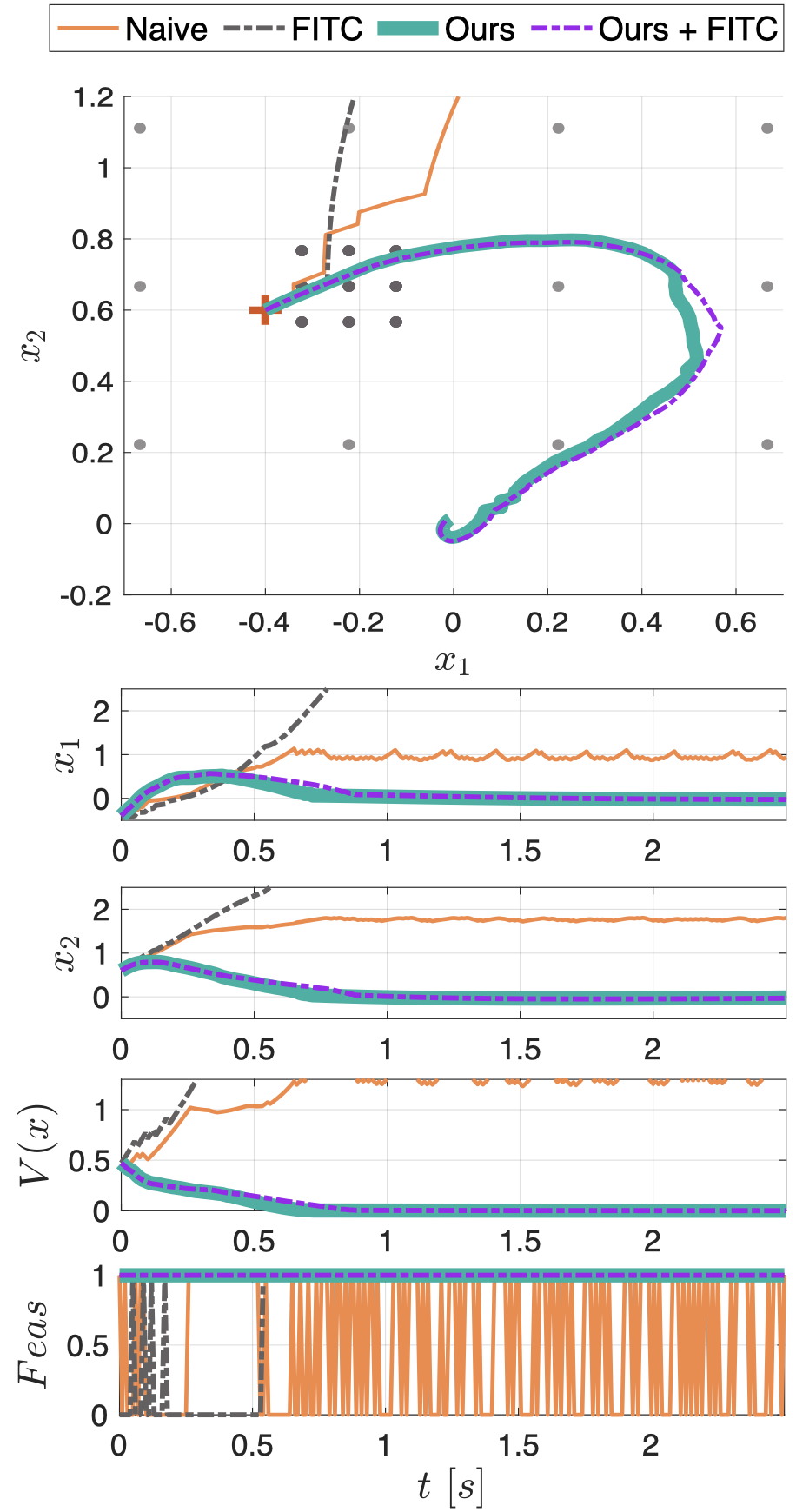}
\caption{\revision{The simulation result of the Polysys example under the GP-CLF-SOCP controllers using:} naive data selection (\textorange{orange}) discussed in Sec. \ref{subsec:naive}, FITC \cite{snelson2005sparse} whose inducing points are evenly spaced and fixed (\textgrey{grey}), dataset selected from Algorithm \ref{algo:sparsegp} (\textteal{green}) discussed in Sec. \ref{subsec:coca}, and FITC whose inducing points are selected by Algorithm \ref{algo:sparsegp} (\textpurple{purple}). All controllers use the same number of online data or the number of inducing points, $M=20$.
While the naive approaches often face infeasibility and fail to stabilize the system close to the origin, our approaches effectively select an online dataset that secures the feasibility of the SOCP.}
\vspace{-1em}
\label{fig:polysys_sim_result_gp}
\end{figure}

% Writing starts here.
The simulation results of our method on the Polysys example are evaluated, extending the analysis in Section \ref{subsec:running_example_intro}. \revision{In addition to GP-CLF-SOCP (Ours) and GP-CLF-SOCP (Naive), we consider SOCP controllers whose GP inference is approximated by FITC \cite{snelson2005sparse}, a standard Sparse GP method that uses inducing points to construct the approximation of the posterior GP. We consider two variations of FITC: first, where the inducing points are selected as fixed grid points across the state space, and second, where the inducing points are selected by our data selection algorithm online.
To ensure a fair comparison, all controllers select 20 data points ($M$, or number of inducing points) from the full dataset constructed in Section \ref{subsec:running_example_intro}.
The results presented in Figure \ref{fig:polysys_sim_result_gp}, show that our method, both used in selecting $\datasetonline$ or in selecting inducing points for FITC, successfully stabilizes the closed-loop system. 
In contrast, the GP-CLF-SOCP (Naive) and FITC with naive inducing points fail to do so, and the SOCP is infeasible very frequently with these approaches. Note that when the SOCP is infeasible, the backup controller in \eqref{eq:backup_controller} is deployed.
} % end of revision

The Polysys example is devised to provide a detailed walk-through of our method; thus, we do not benchmark the computation time in this example. Given the relatively small number of data points in this example, the computational efficiency gained from our method would not be easily noticeable.

%% Bipedal walking demonstration
\vspace{-0.5em}

\subsection{High-dimensional System in Simulation: Five-link Walker}
\label{subsec:walking robot}

% Simulation Introduction
We explore the performance of our algorithm in a high-dimensional system, RABBIT \cite{chevallereau2003rabbit}, a planar five-link bipedal robot consisting of ten state variables. We demonstrate the effectiveness of our algorithm in achieving stable walking. The significance of our algorithm in reducing the computational demands of executing the certifying filter is highlighted.

% The RABBIT system explanation.
RABBIT is a testbed system developed to study bipedal robot locomotion \cite{chevallereau2003rabbit}. As depicted in Figure \ref{fig:systems} (a), its configuration is represented by the generalized coordinate vector $q=[q_1, q_2, q_3, q_4, q_5]^\top$ consisting of the robot's joint angle variables. We adopt the mathematical model for RABBIT locomotion in \cite{chevallereau2003rabbit} to design the simulation model of this system, where the state is defined as $x = [q, \dot q] \in \R^{10}$, and the control input is defined as $u \in \R^4$, consisting of the hip and knee motor torques for both legs. The torque saturation is set at \qty{150}{\N\m}. The hybrid system description of the robot's walking process consists of a single-support swing phase under a Lagrangian dynamics and a reset map defined by the rigid impact model, which switches the robot's state to the post-impact state upon the swing foot's impact with the ground.

% The certificate and task explanation
The objective of the certifying filter is to achieve an exponentially stabilizing periodic gait for RABBIT, despite the effect of the impacts. To accomplish this, we employ a Rapidly Exponentially Stabilizing Control Lyapunov Function (RES-CLF) \cite{ames2014rapidly} as our certificate function. We also set $\uref(x) \equiv 0$ since this naturally captures the objective of minimizing the energy spent to produce the motor torques. In order to construct RES-CLFs, we first input-output (IO) linearize the continuous dynamics of the system by defining the output functions, $y(q) = q_{2:5} - y_d(\theta(q))$,
% \begin{equation}
% \label{eq:output_function}
% y(q) = q_{2:5} - y_d(\theta(q)),
% \end{equation}
where $\theta(q)$ is a variable that defines the phase along the gait, which monotonically increases within each walking step, and $y_d(\cdot)$ is a desired gait represented by a Bezier polynomial, generated offline using the Fast Robot Optimization and Simulation Toolkit (FROST) \cite{hereid2017frost}. We can then decompose the state of the system into the transverse coordinates $\xi\!=\![y \; \dot y]^\top\!\in\!\R^8$, and the zero coordinates $\eta\!=\![\theta(q) \; \dot \theta(q)]\!\in\!\R^2$. After applying the IO linearization, we can represent the transverse dynamics as:
\begin{equation}
\label{eq:iolinearizedsys}
\dot \xi = f(\xi, \eta) + g(\xi, \eta)\mu,
\end{equation}
where $\mu$ is the virtual input. By stabilizing $\xi$ to zero, we enforce the joint trajectory to converge to the desired stable walking gait defined by $y_d(\theta(q))$.

% Model uncertainty and its impact
Model uncertainty is introduced in the simulation by scaling the mass and inertia values of the robot by a factor of 2, which poses a challenge for the controller to maintain stability during walking. Note that a payload is one of the most common sources of model uncertainty for legged robots in practical applications. As illustrated in Figure \ref{fig:rabbit_sim_result}, while the oracle CLF-QP (\textblue{blue}), which assumes access to the true plant dynamics, successfully completes fifteen steps, the nominal model-based CLF-QP (\textmagenta{magenta}), which is unaware of the change in mass and inertia, fails to stabilize the robot and it eventually falls down during the fourteenth step. This observation motivates the use of the GP-CLF-SOCP controller.

\begin{figure}[!t]
\centering
\subfloat[]
    \centering
\includegraphics[width=1.2in]{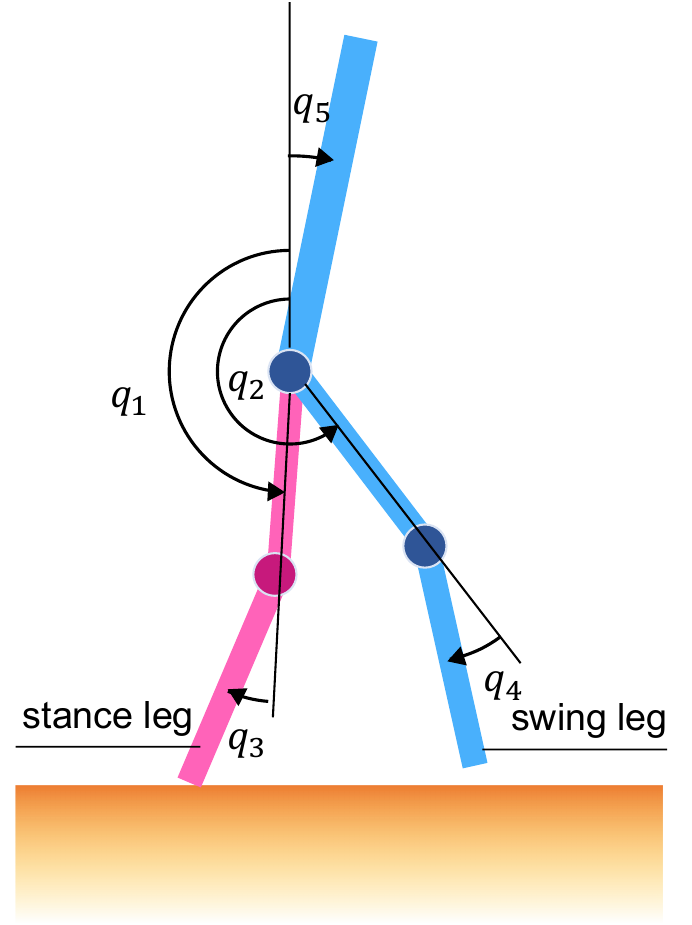}
    \label{fig:rabbit_state}
\subfloat[]
    \centering
\includegraphics[width=1.5in]{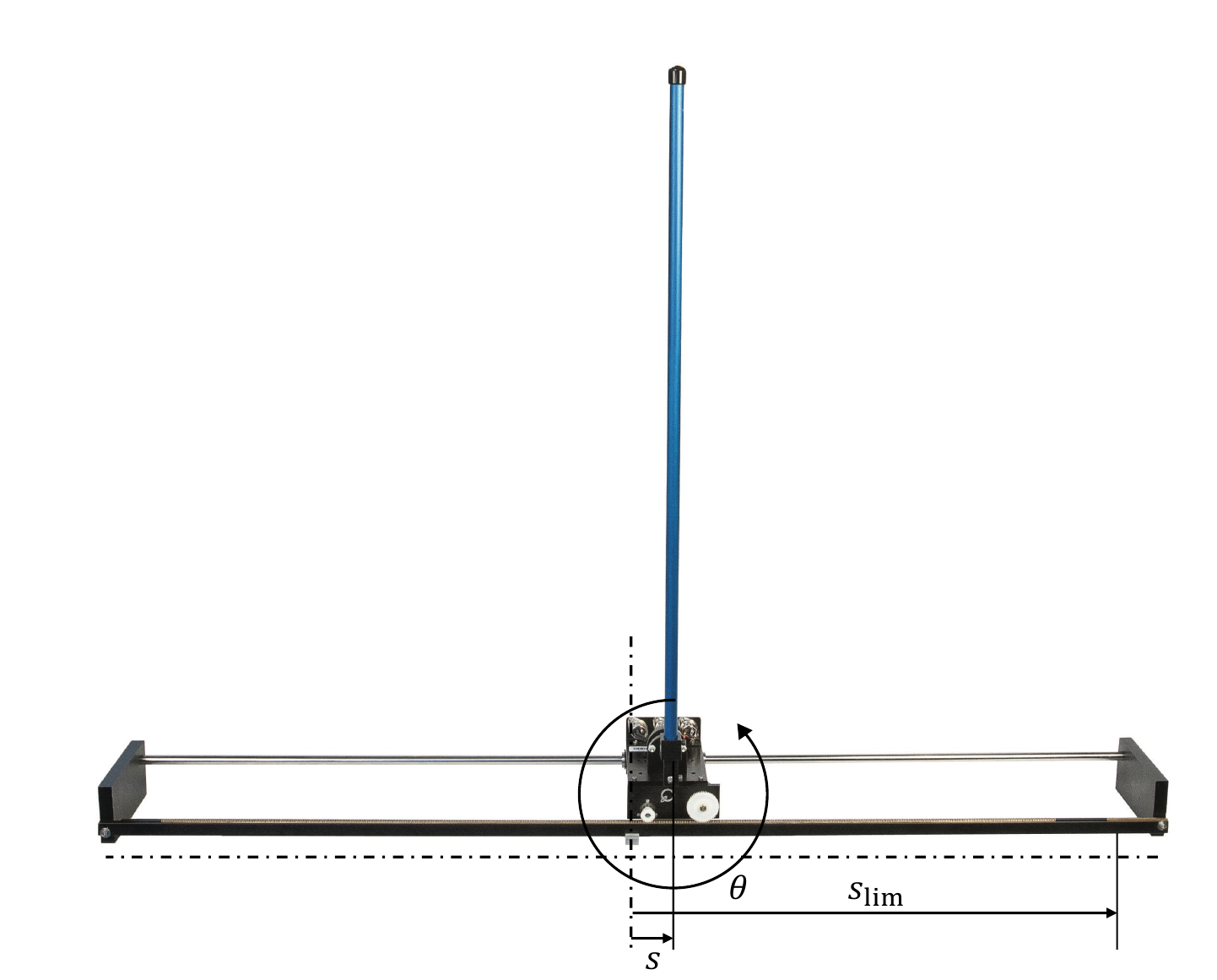}    \label{fig:cart_pole_diagram}
\caption{(a) The configuration of the planar five-link bipedal robot RABBIT \cite{chevallereau2003rabbit} (b) Cart-pole experiment setup based on Quanser Linear Servo Base Unit with Inverted Pendulum \cite{quanser_products_2021}.}
\vspace{-0.5em}
\label{fig:systems}
\end{figure}

\begin{figure*}
\centering
\includegraphics[width=.9\textwidth]{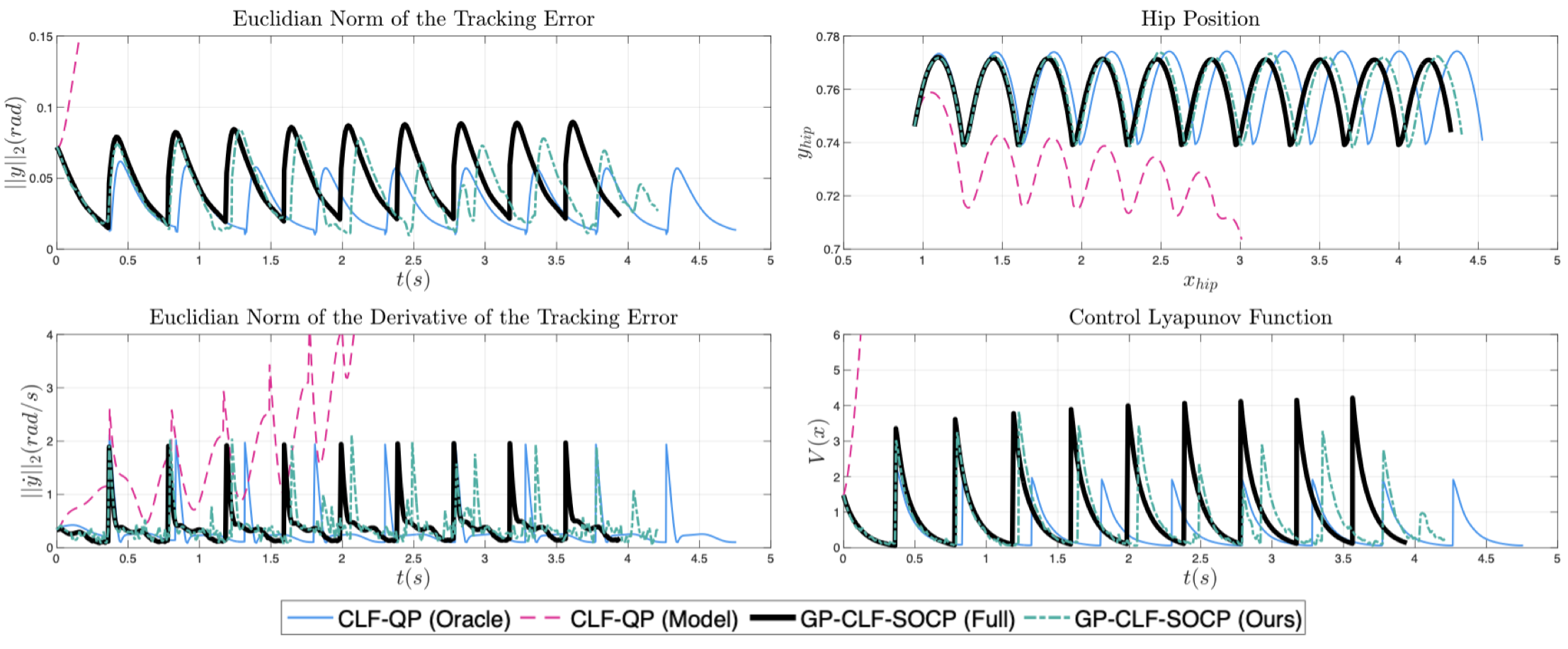}
\vspace{-0.5em}
\caption{Simulation results of RABBIT achieving stable walking under various controllers: the nominal model-based CLF-QP (\textmagenta{magenta}), the oracle CLF-QP (\textblue{blue}), GP-CLF-SOCP (Full) (black), and GP-CLF-SOCP (Ours) (\textteal{green}). The left column depicts histories of the Euclidean norm of the tracking error $y$ and its time derivative $\dot{y}$ with respect to the reference gait. %, $y$, and the derivative of the tracking error with respect to the $\dot y$. 
The right column shows the evolution of the hip's vertical position from the ground and the value of the CLF $V(x)$.}
\vspace{-0.5em}
\label{fig:rabbit_sim_result}
\end{figure*}

% Gaussian Process Setting & Training Implementation
We collect data points represented as $\gpvaraug_j\!=\!([\xi_j, \eta_j], \mu_j)$ since we aim to learn the effect of model uncertainty in the transverse dynamics \eqref{eq:iolinearizedsys}. The dataset is collected in an episodic learning fashion. The nominal model-based CLF-QP is run in the first episode to create an initial dataset. Following this, the GP-CLF-SOCP is executed, and the data collected from the new trajectory is iteratively added to the dataset. For the GP-CLF-SOCP, we initially use the full dataset; however, when the execution time of the SOCP controller approaches the limit of the target sampling time, we activate the data selection algorithm. As data is inherently more scarce in high-dimensional systems, to mitigate the out-of-distribution problem, we introduce perturbations to the reset map and create variations in the control policies executed in each episode, for example, by altering the number of $M$, in order to enhance the dataset's coverage. As a result, we obtain a comprehensive dataset comprising 12,765 ($N$) data points. 

When assuming the ability to deploy the GP-CLF-SOCP (Full) at a sampling rate of \qty{40}{\hertz} (\qty{25}{\ms}), it can achieve fifteen successful steps without falling, as shown in Figure \ref{fig:rabbit_sim_result} (black). However, this would not be achievable in reality, as the average execution time of the controller using the full dataset is \qty{238.3}{\ms}, which significantly exceeds the target sampling time. Instead, we employed our main data selection algorithm to choose 30 ($M$) data points from the full dataset. As demonstrated in Table \ref{table:results}, this algorithm significantly reduced the execution time to an average of \qty{22.9}{\ms}. 

As shown in Figure \ref{fig:rabbit_sim_result}, the GP-CLF-SOCP (Ours) (\textteal{green}) enables the robot to successfully complete fifteen steps. This is further evidenced by the CLF and tracking error plot, where the controller consistently and exponentially stabilizes the tracking error close to zero after the repeated state resets.
It is worth noting that the resulting walking gait of the GP-CLF-SOCP controller differs from the oracle controller, as the SOCP controller chooses control inputs that are robust to the prediction uncertainty. Consequently, the controller behaves more conservatively; in this case, it leads to a slightly faster walking gait than that of the oracle CLF-QP controller.

\vspace{-0.5em}
\subsection{Hardware Experiment: Cart-pole System}

The importance of the method presented in this work is most notable for real hardware systems, as we can use the data collected from the real system to account for the inevitable inaccuracies that even our best possible mathematical description of its dynamics might suffer from. This is precisely what is observed in the experiment we conducted on a Quanser Linear Servo Base Unit with Inverted Pendulum \cite{quanser_products_2021} hardware (Figure \ref{fig:systems} (b)). This cart-pole system consists of a linearly-actuated cart and an unactuated pendulum. The state of the system can be described as $x= [s, \dot{s}, \theta, \dot{\theta}] \in \R^4$, where $s$ and $\dot{s}$ are the cart's position and velocity, and $\theta$ and $\dot{\theta}$ correspond to the pole's relative angle with respect to the upright position and its angular velocity. The control input $u \in \R$ is the voltage applied to the linear actuator of the cart.

The control objective of this experiment is to swing-up the pole to the upright position and balance it at the top, while respecting a safety constraint on the cart's position, given as $|s| \le s_{\text{lim}}\!=\!0.35$\qty{}{\meter}. In particular, this constraint is placed to avoid the cart from colliding against the limits of the linear guide. The CBF we designed, which is then used as the certificate function, is based on the exponential CBF design methods for high relative-degree constraints \cite{nguyen2016exponential}; in our case, the original cart position constraint has a relative degree of two. This results in a CBF expressed as
\begin{equation}
    C(x) = -2 s \dot{s} + k (s^2_{\text{lim}} - s^2),
\end{equation}
whose zero-level set is depicted in red in Figure \ref{fig:cart_pole_results} left.

\begin{table}[t]
\caption{Total execution time (data selection, GP inference, numerical optimization) of the GP-CF-SOCP controller with different datasets in the RABBIT simulation and the Cart-pole experiment. Mean and standard deviations are in milliseconds.
}
\label{table:results}
\centering
\begin{tabular}{|c|crr|crr|}
\hline
\multicolumn{1}{|r|}{\textbf{}} & \multicolumn{3}{c|}{\textbf{GP-CF-SOCP (\ours)}}                                           & \multicolumn{3}{c|}{\textbf{GP-CF-SOCP (Full)}}                                             \\ \hline
System                          & \multicolumn{1}{c|}{mean} & \multicolumn{1}{c|}{stdev} & \multicolumn{1}{c|}{M} & \multicolumn{1}{c|}{mean} & \multicolumn{1}{c|}{stdev} & \multicolumn{1}{c|}{N} \\ \hline
RABBIT                          & \multicolumn{1}{r|}{22.9}       & \multicolumn{1}{r|}{4.4}        & 30                     & \multicolumn{1}{r|}{238.3}        & \multicolumn{1}{r|}{9.4}         & 12765                  \\ \hline
Cart-Pole                       & \multicolumn{1}{r|}{11.8}       & \multicolumn{1}{r|}{0.75}        & 40                     & \multicolumn{1}{r|}{60.4}      & \multicolumn{1}{r|}{4.1}        & 6957                   \\ \hline
\end{tabular}
\vspace{-1em}
\end{table}

For the swing-up task, we design a reference policy $\uref$, which is a hybrid controller that switches between an energy-based feedback controller and a stabilizing controller to which the system switches at the vicinity of the equilibrium.

\begin{figure}
\centering
\adjustbox{trim=20pt 20pt 60pt 20pt,clip,max width=\columnwidth}{
    \includegraphics{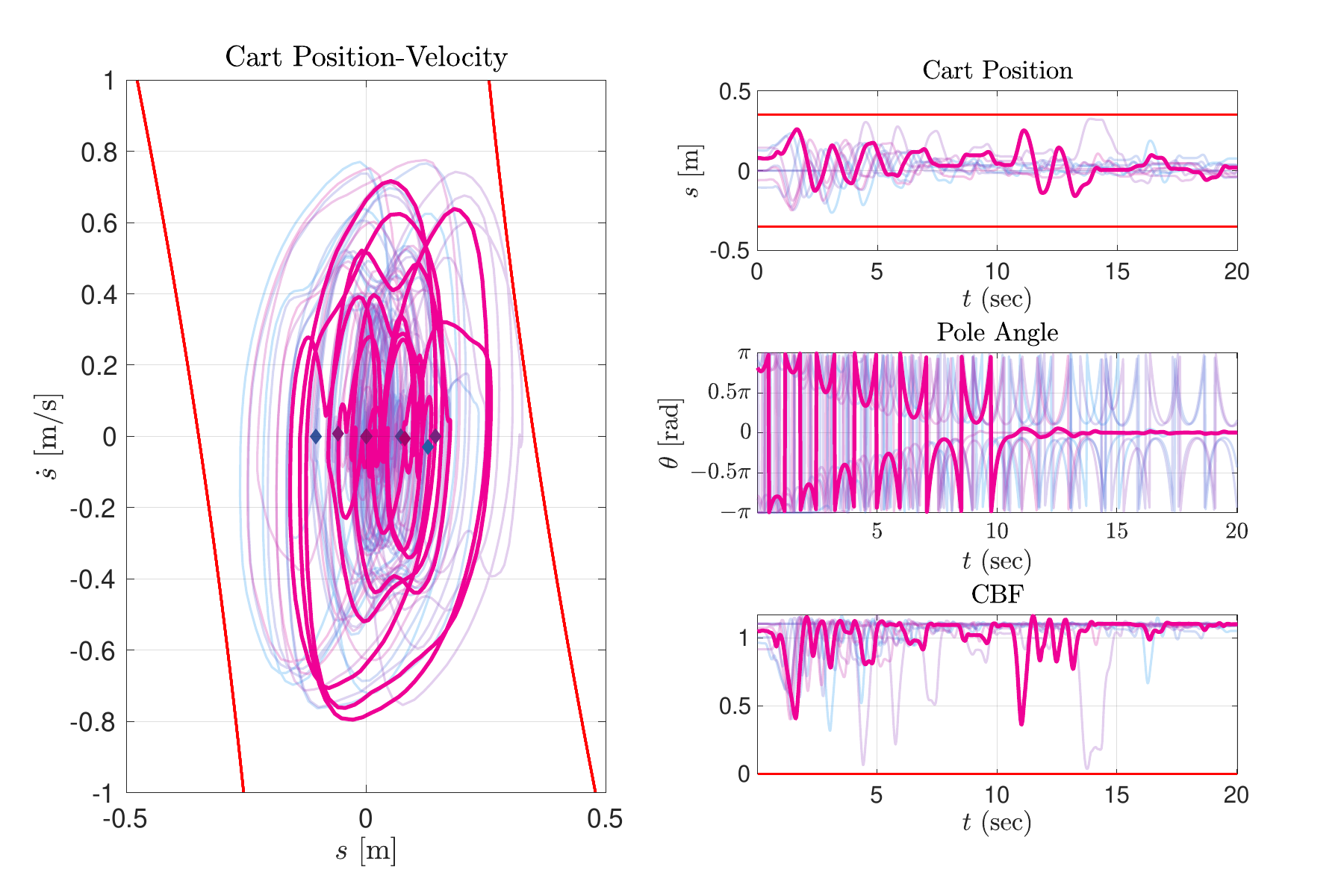}
}
\vspace{-1em}
\caption{10 episodes of the cart-pole experiment under GP-CBF-SOCP (Ours). We highlight one of the ten trajectories in \textmagenta{magenta} and the rest in transparent curves. 
On the left is the phase plot of the trajectories in the cart position and velocity space ($s\;\text{vs}\;\dot{s}$). The region between the red curves indicates the zero-super level set of the CBF. The diamond markers indicate the initial states of the trajectories. No trajectory exits the zero-super level set of the CBF. On the right are the plots of cart position($s$), pole angle ($\theta$), and CBF value of the trajectories in time. The highlighted trajectory successfully swings up the pole while maintaining the safety constraint.}
\label{fig:cart_pole_results}
\vspace{-1em}
\end{figure}

We apply the reference policy filtered by the nominal model-based CBF-QP certifying filter. We use the high-fidelity dynamics model provided by the manufacturer \cite{quanser_products_2021} as the nominal model. The computed filtered input is applied to the system every 25ms, which is the target sampling time for the real-time execution of the controller.
% This controller should prioritize safety over the objective of the reference controller (swing-up), so the cart position should not get past the safety limits even if this was favorable for the swing-up motion. 
We observe that the nominal model-based CBF-QP fails to satisfy the safety constraints at several trials and the CBF becomes negative. 

This motivates us to employ the GP-CBF-SOCP certifying filter. In order to maintain feasibility of the GP-CBF-SOCP filter, the dataset must sufficiently cover the state and control input space where the system operates. We collect these data points in an episodic fashion. As more data points are aggregated, the GP inference takes longer, eventually exceeding the 25ms limit of our sampling time. Thus, we conduct the episodic procedure twice, each collecting nine trajectories, and then combine the two datasets into the full dataset. With the full dataset comprising 6957 ($N$) data points, we observe that the GP-CBF-SOCP controller takes too long to perform the inference, causing an average 60.4ms execution time (Table \ref{table:results}), which does not meet the target sampling rate requirement. This effect is evident in the experiment, as the cart-pole fails to swing up properly due to the delay.

On the contrary, using our data selection algorithm with 40 ($M$) points selected online, the total execution time becomes much smaller, resulting in an average of 11.8ms. Over 10 experiments using our method, we achieve $100 \%$ constraint satisfaction. These trajectories are shown in Figure \ref{fig:cart_pole_results}. Although not all of these experiments result in a successful balance at the upright position within the allocated 20 seconds (achieved in 6 out of 10 experiments), the GP-CBF-SOCP successfully prioritizes safety over performance, ensuring the cart never exits the defined limits imposed by the CBF-based certifying constraint. In the \href{https://youtu.be/MFgGRcuL--8}{video}\footnote{\href{https://youtu.be/MFgGRcuL--8}{https://youtu.be/MFgGRcuL-\!\;-8}} showcasing the results, it is clear that the learned certifying constraint forces the cart to drop the pole when it approaches the position limit. Moreover, we demonstrate that even when an external user introduces disturbances by pushing the pole, the system remains safe.

% % \input{sections/07discussions}
\vspace{-0.25em}
\section{Conclusion}
\label{sec:conclusion}
In this study, we introduce a runtime-efficient data-driven certifying filter approach applicable to real-time, complex robotic systems with uncertain dynamics that typically require large datasets for learning certified control laws. We achieved this by creating a nonparametric learning-based SOCP filter with significantly improved time complexity, transitioning from quadratic to linear with respect to dataset size, utilizing a novel online data selection algorithm. This algorithm generates a dataset most relevant to the desired certification of the system, \revision{grounded in a theoretical analysis that confirms its optimality with respect to the lower bound under reasonable dataset assumptions}. The effectiveness of our algorithm is demonstrated in securing the safety of a real-world cart-pole swing-up task and maintaining stable locomotion for a five-link bipedal walker under significant mass uncertainty, as exhibited in the RABBIT simulation.

Our investigation into the quantification of information from individual data points for the certifying filter establishes a foundation for a more profound understanding of the relationship between data and certifying control laws. % This research contributes significantly to the field, opening a wide range of opportunities for future work. 
Potential future research directions include incorporating data selection objectives during the data collection process for sample-efficient learning and expanding our framework to address systems with multiple certifying constraints. 

\vspace{-0.25em}
\section*{Acknowledgment}
We thank Andrew J. Taylor, Victor D.
Dorobantu, and Alonso Marco for insightful discussions. This work is supported by DARPA Assured Autonomy Grant No. FA8750-18-C-0101, the NASA ULI Grant No. 80NSSC20M0163, and the NSF
Grant CMMI-1944722. Any opinions, findings, and conclusions or recommendations expressed in this material do not necessarily reflect the views of any aforementioned organizations. \revision{We thank the reviewers for their constructive comments, including the simplified proof of Lemma \ref{lemma:flippingsigns}.}
% \section*{Appendix}
\vspace{-0.25em}
\appendix
\vspace{-0.5em}
\subsection{Proof of Lemma \ref{lemma:suffcient_cond}}
\label{appendix:lemma1proof}
For notational convenience, we will drop $(\cdot|\dataset)$. From \eqref{eq:socp-cbf-constraint2}, GP-CF-SOCP is feasible under $u\!= \!\alpha' \soccsimple^\top$ if
\vspace{-0.5em}

{\small \begin{equation*}
    c \alpha \norm{\soccsimple}^2 - \beta \gpsigmaB \!\left(x, c\alpha \soccsimple^\top\right) \ge -\left(\widehat{\LfC}(x)+\gamma(\cert(x))\right),
\end{equation*}}

% \begin{equation*}
%     \gpsigmaB^2 \!\left(x, \alpha \soccsimple\right)
%     = [1 \;\; \alpha \soccsimple] \gpGramB(x)\begin{bmatrix}1 \\ \alpha \soccsimple^\top \end{bmatrix}
% \end{equation*}
\noindent where $c:=\alpha'/\alpha > 1$.
First, we compare $\gpsigmaB \!\left(x, c\alpha \soccsimple^\top\right)$ and $\gpsigmaB \!\left(x, \alpha \soccsimple^\top\right)$ as below:
\vspace{-0.5em}

{\small \begin{align*}
    & \gpsigmaB^2 \!\left(x, c\alpha \soccsimple^\top\right) = [1 \;\; c\alpha \soccsimple] \gpGramB(x)\begin{bmatrix}1 \\ c\alpha \soccsimple^\top \end{bmatrix} \\
    & = c^2 [1/c \;\; \alpha \soccsimple] \gpGramB(x)\begin{bmatrix}1/c \\ \alpha \soccsimple^\top \end{bmatrix} \\
    & = c^2 \! \left( \! [1 \;\; \alpha \soccsimple] \gpGramB(x)\!\begin{bmatrix}1 \\ \alpha \soccsimple^\top \end{bmatrix} \!-\! \left[1\!\!-\!\!\frac{1}{c^2} \;\; 0\right] \!\gpGramB(x)\!\begin{bmatrix}1 - \frac{1}{c^2} \\ 0 \end{bmatrix} \right) \\
    & = c^2 \left( \gpsigmaB^2 \!\left(x, \alpha \soccsimple\right) - \left(1-\frac{1}{c^2} \right)^2 \gpGramB(x)_{[1, 1]} \right)
\end{align*}}
% Thus,
% {\footnotesize\begin{align*}
% \gpsigmaB^2 \!\left(x, c\alpha \soccsimple^\top\right) \!= \!c^2\!\left(\! \gpsigmaB^2 \!\left(x, \alpha \soccsimple^\top\right)\!-\!\left(1-\frac{1}{c^2} \right)^2\! \gpGramB(x)_{[1, 1]}\! \right) 
% \end{align*}}

\noindent Using this expression, we can check that
\vspace{-0.5em}

{\footnotesize \begin{align*}
    & \frac{c \alpha \norm{\soccsimple}^2 - \beta \gpsigmaB \!\left(x, c\alpha \soccsimple^\top\right)}{c \left( \alpha \norm{\soccsimple}^2 - \beta \gpsigmaB \!\left(x, \alpha \soccsimple^\top\right) \right)} = \\
    & \frac{\alpha \norm{\soccsimple}^2 \!-\! \beta \sqrt{\gpsigmaB^2 \!\left(x, \alpha \soccsimple^\top\right) \!-\!\left(1-\frac{1}{c^2} \right)^2 \gpGramB(x)_{[1, 1]}}}{\alpha \norm{\soccsimple}^2 - \beta \gpsigmaB \!\left(x, \alpha \soccsimple^\top\right)} > 1.
\end{align*}}

\noindent Finally, since $\alpha \norm{\soccsimple}^2 \!-\! \beta \gpsigmaB \!\left(x, \alpha \soccsimple^\top\right)$ is strictly positive from \eqref{eq:suffcient_cond}, by taking $c$ satisfying
\begin{equation*}
    c \ge \frac{-\left(\widehat{\LfC}(x)+\gamma(\cert(x))\right)}{\alpha \norm{\soccsimple}^2 - \beta \gpsigmaB \!\left(x, \alpha \soccsimple^\top\right)},
\end{equation*}
we get
\begin{align*}
    & c \alpha \norm{\soccsimple}^2 - \beta \gpsigmaB \!\left(x, c\alpha \soccsimple^\top\right) \\
    & >c \left( \alpha \norm{\soccsimple}^2 - \beta \gpsigmaB \!\left(x, \alpha \soccsimple^\top\right) \right) \\
    & \ge -\left(\widehat{\LfC}(x)+\gamma(\cert(x))\right),
\end{align*}
which completes the proof.
\hfill \qedsymbol

\subsection{Proof of Lemma \ref{lemma:data_select_objective}}
\label{appendix:lemma2proof}
For notational convenience, we will drop $(\cdot|\datasetonline)$. We begin the proof by noting that
{\small \begin{align}
    & \lim_{\alpha\rightarrow \infty } \frac{1}{\alpha} \begin{bmatrix} \kadp_{*1}(x, \alpha\soccsimple^\top) \\ \vdots \\ \kadp_{*M}(x, \alpha\soccsimple^\top) \end{bmatrix} \nonumber \\
    & = \lim_{\alpha\rightarrow \infty } \frac{1}{\alpha} \begin{bmatrix} k_f(x_1, x_1) + \kadpu_{*1}(x, \alpha\soccsimple^\top) \\ \vdots\\k_f(x_M, x_M) + \kadpu_{*M}(x, \alpha\soccsimple^\top) \end{bmatrix} \nonumber \\
    & = \begin{bmatrix} \kadpu_{*1}(x, \soccsimple^\top) \\ \vdots\\ \kadpu_{*M}(x, \soccsimple^\top) \end{bmatrix}.
    \label{eq:prooflemma2eq}
\end{align}}
Thus,
{\small \begin{align*}
\arg & \min_{\datasetonline} \lim_{\alpha\rightarrow \infty }  \frac{1}{\alpha}\gpsigmaB\!\left(\!x, \alpha \soccsimple^\top\!\right) \!= \!\arg \min_{\datasetonline} \frac{1}{\alpha^2}\gpsigmaB^2 \!\left(\!x, \alpha \soccsimple^\top\!\right) \\
    = & \arg \max_{\datasetonline}\!\lim_{\alpha\rightarrow \infty }\!\frac{1}{\alpha^2} \!\left[\kadp_{*1}\!(x, \alpha\soccsimple^\top\!) \cdots \kadp_{*M}(x, \alpha\soccsimple^\top\!) \!\right] \\
    & \hspace{20pt} (\Kadponline + \sigma_n^2 I )^{-1} \begin{bmatrix} \kadp_{*1}(x, \alpha\soccsimple^\top) \\ \vdots \\ \kadp_{*M}(x, \alpha\soccsimple^\top) \end{bmatrix} \;\;(\text{from \eqref{eq:sigma_adp2}}) \\
    = & \arg \max_{\datasetonline} \left[ \kadpu_{*1}(x, \soccsimple^\top) \; \cdots \; \kadpu_{*M}(x, \soccsimple^\top) \right] \\
    & \hspace{25pt} (\Kadponline + \sigma_n^2 I )^{-1} \begin{bmatrix} \kadpu_{*1}(x, \soccsimple^\top) \\ \vdots\\ \kadpu_{*M}(x, \soccsimple^\top) \end{bmatrix} \;\;(\text{from \eqref{eq:prooflemma2eq}}),
\end{align*}}
\noindent which is precisely the objective function in the Lemma.
\hfill \qedsymbol

\subsection{Proof of Theorem \ref{thm:main}}
\label{appendix:thm1proof}

For notational convenience, we use the subscript $ij$ to indicate the $(i, j)$-th element of a matrix. We first present a few lemmas that will be used in the proof.

\begin{lemma} 
\label{lemma:perronfrobenius}
Let $C = \left(c_{ij}\right) \in \R^{n \times n}$ be a non-negative matrix. Then, the maximal eigenvalue of $C$ is upper bounded by its maximal row sum, that is,
% \begin{equation}
%     \min_{i} \sum_{j=1}^{n} c_{ij} \le \lambda_{\max}(C) \le \max_{i} \sum_{j=1}^{n} c_{ij},
% \end{equation}
\begin{equation}
\lambda_{\max}(C) \le \max_{i} \sum_{j=1}^{n} c_{ij}.
\end{equation}
\end{lemma}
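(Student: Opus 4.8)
The plan is to establish the stronger statement that $|\lambda| \le \max_i \sum_{j=1}^n c_{ij}$ for \emph{every} eigenvalue $\lambda$ of $C$, which immediately yields the claimed bound on the maximal eigenvalue. First I would fix an arbitrary eigenvalue $\lambda$ of $C$ together with a nonzero eigenvector $v = (v_1, \dots, v_n)$, and choose an index $k$ at which the modulus is largest, i.e.\ $|v_k| = \max_{1 \le i \le n} |v_i|$; since $v \neq 0$, we have $|v_k| > 0$. Reading off the $k$-th component of the eigenvalue identity $Cv = \lambda v$ gives $\lambda v_k = \sum_{j=1}^n c_{kj} v_j$.

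Next I would take absolute values and apply the triangle inequality, using the hypothesis $c_{kj} \ge 0$ at each step, to obtain
\begin{equation*}
|\lambda|\,|v_k| = \Bigl| \sum_{j=1}^n c_{kj} v_j \Bigr| \le \sum_{j=1}^n c_{kj} |v_j| \le \Bigl( \sum_{j=1}^n c_{kj} \Bigr) |v_k| \le \Bigl( \max_{1 \le i \le n} \sum_{j=1}^n c_{ij} \Bigr) |v_k|,
\end{equation*}
where the second inequality replaces each $|v_j|$ by the maximum $|v_k|$ and is valid precisely because the coefficients $c_{kj}$ are non-negative. Dividing through by $|v_k| > 0$ gives $|\lambda| \le \max_i \sum_j c_{ij}$. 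Taking $\lambda$ to be the eigenvalue of largest modulus then yields $\lambda_{\max}(C) \le |\lambda_{\max}(C)| \le \max_i \sum_j c_{ij}$, which is the assertion. When $C$ is non-negative, Perron--Frobenius theory moreover guarantees that $\lambda_{\max}(C)$ is itself a non-negative real eigenvalue, so the outer modulus is superfluous, but the argument above does not rely on this.

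There is essentially no obstacle here: the only points requiring care are that the chosen coordinate satisfies $|v_k| > 0$ (immediate from $v \neq 0$) and that every inequality step invokes only $c_{kj} \ge 0$ rather than $|c_{kj}|$. An alternative would be to cite Gershgorin's circle theorem --- every eigenvalue lies in $\bigcup_i \{ z : |z - c_{ii}| \le \sum_{j \neq i} c_{ij} \}$, so $|\lambda| \le c_{ii} + \sum_{j \neq i} c_{ij} = \sum_j c_{ij}$ for some $i$ --- or to note that $\lambda_{\max}(C) = \rho(C) \le \|C\|_\infty = \max_i \sum_j c_{ij}$ since the induced $\infty$-norm dominates the spectral radius; but the self-contained eigenvector estimate above is the shortest route and avoids invoking external results.
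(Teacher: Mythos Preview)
Your argument is correct and fully self-contained. The paper, by contrast, does not give a proof at all: it simply states that the lemma is a corollary of the Perron--Frobenius theorem for non-negative matrices and cites a reference. Your eigenvector-coordinate estimate (equivalently, the $\|C\|_\infty$ bound on the spectral radius, or Gershgorin) is the standard elementary route and has the advantage of being complete without invoking the full Perron--Frobenius machinery, which is considerably deeper than what is needed here. The paper's citation buys brevity and connects the bound to a named result the reader may already know; your approach buys a proof that stands on its own and makes clear exactly where non-negativity of the entries is used.
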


\begin{proof}
This is a corollary of Perron-Frobenius Theorem for nonnegative matrices \cite[Ch.8]{meyer2000matrix}.
% \url{https://math.stackexchange.com/questions/3207313/upper-bound-for-largest-eigenvalue-of-non-negative-matrix}
\end{proof}

\begin{lemma}[Weyl's Inequality] 
\label{lemma:weyl}
Let $A, B \in \R^{n \times n}$ be symmetric matrices. Then,
\begin{equation*}
    \lambda_{\min}(A + B) \ge \lambda_{\min}(A) + \lambda_{\min}(B).
\end{equation*}

\end{lemma}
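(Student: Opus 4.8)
The plan is to derive this directly from the Rayleigh--Ritz (Courant--Fischer) variational characterization of the smallest eigenvalue: for any symmetric $M \in \R^{n \times n}$, $\lambda_{\min}(M) = \min_{\|v\| = 1} v^\top M v$, with the minimum attained at a unit eigenvector associated with $\lambda_{\min}(M)$. If a self-contained justification is wanted instead of a citation, I would obtain it from the spectral theorem: writing $M = Q\Lambda Q^\top$ with $Q$ orthogonal and $\Lambda = \Diag(\lambda_1, \dots, \lambda_n)$ and substituting $w = Q^\top v$ (so $\|w\| = \|v\|$), one has $v^\top M v = \sum_k \lambda_k w_k^2 \ge \lambda_{\min}(M) \sum_k w_k^2 = \lambda_{\min}(M)$, with equality when $w$ selects a coordinate achieving $\min_k \lambda_k$.

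With that in place, the main step is a pointwise bound on the unit sphere followed by a minimization. First I would note that $A + B$ is symmetric because $A$ and $B$ are, so the characterization applies to it. For an arbitrary unit vector $v$, linearity of $M \mapsto v^\top M v$ gives $v^\top(A+B)v = v^\top A v + v^\top B v \ge \lambda_{\min}(A) + \lambda_{\min}(B)$, each term bounded below by its minimum over $\{v : \|v\| = 1\}$. Since the right-hand side is a fixed constant independent of $v$, taking the infimum over unit vectors on the left yields $\lambda_{\min}(A+B) = \min_{\|v\|=1} v^\top(A+B)v \ge \lambda_{\min}(A) + \lambda_{\min}(B)$, which is the claim. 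I would also record the companion bound $\lambda_{\max}(A+B) \le \lambda_{\max}(A) + \lambda_{\max}(B)$ used later in the proof of Theorem \ref{thm:main}, which follows by applying the inequality just proved to $-A$ and $-B$ and using $\lambda_{\max}(M) = -\lambda_{\min}(-M)$.

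I do not anticipate a genuine obstacle here; the only points requiring care are that $A$ and $B$ act on the same space (so that $A+B$ is defined) and that both are symmetric (so that the variational formula is valid), and both are hypotheses of the lemma. An alternative would be to fix a unit eigenvector $v_0$ of $A+B$ for its smallest eigenvalue and expand $\lambda_{\min}(A+B) = v_0^\top A v_0 + v_0^\top B v_0 \ge \lambda_{\min}(A) + \lambda_{\min}(B)$, but this is the same computation in slightly less symmetric form and saves nothing.
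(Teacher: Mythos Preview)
Your proof is correct and is the standard Rayleigh--Ritz argument for this special case of Weyl's inequality. The paper does not actually supply a proof of this lemma: it states it as the named result ``Weyl's Inequality'' and uses it as a black box in the proof of Theorem~\ref{thm:main}, so your derivation strictly adds content rather than diverging from an existing argument. One minor remark: the companion $\lambda_{\max}$ bound you record is not invoked directly in the paper; the $\lambda_{\max}$ estimate there comes from Lemma~\ref{lemma:perronfrobenius}, and Weyl is applied only in the form $\lambda_{\min}(I-C) \ge 1 + \lambda_{\min}(-C) = 1 - \lambda_{\max}(C)$.
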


\begin{lemma}
\label{lemma:flippingsigns}
Let $S = (s_{ij}) \in \R^{n \times n}$ have diagonal entities satisfying $s_{ii} = 1$, and off-diagonal entities satisfying \revision{$s_{ij} \le 0$} for all $i \neq j$. Let $\bar{S} = (\bar{s}_{ij})\in \R^{n \times n}$ be a matrix whose diagonal entities are all one, and whose off-diagonal entities are $\bar{s}_{ij} = \pm s_{ij}$, where the signs can be arbitrary. Then, if $S$ is positive definite, $\bar{S}$ is also positive definite.
\end{lemma}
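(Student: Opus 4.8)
The plan is to show that $\bar{S}$ is obtained from $S$ by a congruence transformation $\bar{S} = D S D$ with $D$ a diagonal sign matrix, which immediately preserves positive definiteness --- but this only works when the sign pattern of the off-diagonal entries is itself realizable by a rank-one sign flip, which is not automatic for arbitrary sign choices. So instead I would take the more robust route: bound the smallest eigenvalue of $\bar{S}$ from below by the smallest eigenvalue of $S$ using the nonnegativity of the entries and a comparison argument.

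\textbf{Step 1: Reduce to a comparison of quadratic forms.} Write $S = I - A$ where $A = (a_{ij})$ has $a_{ii} = 0$ and $a_{ij} = -s_{ij} \ge 0$ for $i \neq j$, so $A$ is a nonnegative symmetric matrix. Similarly write $\bar{S} = I - \bar{A}$ where $\bar{A}$ has zero diagonal and $\bar{a}_{ij} = \mp s_{ij} = \pm a_{ij}$, so $|\bar{a}_{ij}| = a_{ij}$ entrywise. The claim $\bar{S} \succ 0$ is equivalent to $\lambda_{\max}(\bar{A}) < 1$.

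\textbf{Step 2: Compare spectral radii.} For any symmetric matrix $M$, $\lambda_{\max}(M) \le \rho(M) = \rho(|M|)$ is false in general, but here the key fact is that for a symmetric matrix $\bar{A}$ with $|\bar{A}| \le A$ entrywise (and $A$ nonnegative symmetric), one has $\rho(\bar{A}) \le \rho(A)$. This is a standard consequence of the Perron--Frobenius theory (cf. Lemma~\ref{lemma:perronfrobenius} and the monotonicity of the spectral radius on nonnegative matrices): indeed $|\bar{A}^k| \le A^k$ entrywise for every $k$, hence $\|\bar{A}^k\|_{F} \le \|A^k\|_{F}$ wait --- more carefully, $\rho(\bar A) = \lim_k \|\bar A^k\|^{1/k}$ and entrywise domination gives $\|\bar A^k\| \le \| |\bar A|^k \| \le \|A^k\|$ in, say, the max-norm, so $\rho(\bar A) \le \rho(A)$. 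Since $S = I - A \succ 0$ gives $\lambda_{\max}(A) < 1$, and $A$ nonnegative symmetric means $\rho(A) = \lambda_{\max}(A) < 1$, we conclude $\lambda_{\max}(\bar A) \le \rho(\bar A) \le \rho(A) < 1$, hence $\bar S = I - \bar A \succ 0$.

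\textbf{The main obstacle} is justifying $\rho(\bar A) \le \rho(A)$ cleanly: the entrywise-domination argument for spectral radii is classical but one must be careful that $\bar A$ is not nonnegative (it has mixed signs), so $\rho(\bar A)$ need not equal $\lambda_{\max}(\bar A)$ and the inequality must go through $|\bar A^k| \le A^k$ and a submultiplicative norm. An alternative, perhaps cleaner, route the authors may have in mind: since the reviewers contributed a simplified proof, they likely observed that it suffices to flip signs one off-diagonal pair at a time, reducing to the case where $\bar S$ and $S$ differ in exactly one symmetric pair of entries $(i,j), (j,i)$; then $\bar S = S + (s_{ij}-\bar s_{ij})(e_i e_j^\top + e_j e_i^\top)$, and whether $\bar s_{ij} = s_{ij}$ (nothing to do) or $\bar s_{ij} = -s_{ij}$, one checks directly via the $2\times 2$ principal structure and Weyl's inequality (Lemma~\ref{lemma:weyl}) that positive definiteness is maintained --- I would develop this single-flip induction as the primary argument, since it avoids Perron--Frobenius subtleties and uses only the lemmas already stated. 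The base case (all signs unchanged) is trivial, and each inductive step changes one pair; after at most $\binom{n}{2}$ steps we reach $\bar S$.
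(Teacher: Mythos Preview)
Your spectral-radius argument in Steps 1--2 is correct and complete: writing $S = I - A$ with $A$ entrywise nonnegative and symmetric, the hypothesis $S \succ 0$ gives $\lambda_{\max}(A) = \rho(A) < 1$; since $|\bar A| = A$ entrywise, the classical domination result $\rho(\bar A) \le \rho(|\bar A|) = \rho(A)$ (via $|\bar A^k| \le A^k$ and Gelfand's formula, as you sketch) yields $\lambda_{\max}(\bar A) \le \rho(\bar A) < 1$, hence $\bar S \succ 0$.

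The paper's proof takes a different and more elementary route, avoiding spectral radii entirely. It works directly with quadratic forms: writing $T = S - I$ (so $|T| = -T$), positive definiteness of $S$ gives $|\hat v|^\top |T|\,|\hat v| < 1$ for every unit vector $\hat v$, and then
\[
\hat v^\top \bar S \hat v \;=\; 1 + \sum_{i\neq j} \bar s_{ij} v_i v_j \;\ge\; 1 - \sum_{i\neq j} |s_{ij}|\,|v_i|\,|v_j| \;=\; 1 - |\hat v|^\top |T|\,|\hat v| \;>\; 0.
\]
This is essentially the same comparison you are making, but carried out at the level of a single quadratic form rather than through the limit defining the spectral radius; it uses no Perron--Frobenius machinery and no Gelfand formula, which is presumably why the reviewers' version was singled out as ``simplified.''

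One caution: the single-flip induction you propose as your \emph{primary} argument does not go through via Weyl's inequality as stated. Flipping the sign of the pair $(i,j)$ perturbs $S$ by $-2s_{ij}(e_ie_j^\top + e_je_i^\top)$, whose eigenvalues are $\pm 2|s_{ij}|$, so Weyl only gives $\lambda_{\min}(\bar S) \ge \lambda_{\min}(S) - 2|s_{ij}|$, which can be negative. Nor is a single flip realizable as a congruence $DSD$ with diagonal $D$ (that would flip an entire row and column simultaneously). Stick with your Step~2 argument; the induction is a dead end.
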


\begin{proof} \revision{We denote $|\cdot|$ as the elementwise absolute value operation of a vector or a matrix. Define $T:=S-I$, and $\bar{T}:=\bar{S}-I$. Since $S$ is positive definite, for all unit vector $\hat{v} \in \R^n$ (s.t. $||\hat{v}||_2=1$), we have
\begin{equation*}
    \hat{v}^\top S \hat{v} = \hat{v}^\top (1 + T) \hat{v} = 1 + \hat{v}^\top T \hat{v} > 0,
\end{equation*}
Since $|T| = -T$ because $s_{ij} \le 0$ for $i \neq j$, $\hat{v}^\top |T| \hat{v} < 1$ for all $\hat{v} \in \R^n$ s.t. $||\hat{v}||_2=1$. It also holds that 
\begin{equation*}
    |\hat{v}|^\top |T| |\hat{v}| < 1,
\end{equation*}
since $\norm{|\hat{v}|}_2=1$. Then, for all unit vector $\hat{v} \in \R^n$, we have
\begin{align*}
\hat{v}^\top \bar{S} \hat{v} & = \hat{v}^\top (I + \bar{T}) \hat{v} = 1 + \hat{v}^\top \bar{T} \hat{v} = 1 + \sum_{i\neq j} \bar{s}_{ij} v_i v_j \\
& \ge 1 - \sum_{i\neq j} |s_{ij} v_i v_j| = 1 - |\hat{v}|^\top |T| |\hat{v}| > 0.
\end{align*} 
}% end of revision
\end{proof}

% \begin{lemma}
% \label{lemma:eigval_ineq} Let $A, B \in \R^{n \times n}$ be two positive semidefinite matrices. Then, the following holds:
% \begin{equation}
%      \lambda_{\min} (AB) \ge \lambda_{\min} (A)\lambda_{\min} (B)
% \end{equation}
% \end{lemma}
% \begin{proof}
% \cite{marshall1979inequalities}
% \end{proof}

Presented next is the main Proof of Theorem \ref{thm:main}. We will drop $(x, u)$ from $\kadpu_{*i}$ and $\nqi$, and $(\cdot|\datasetonline)$ for notational convenience.
We want to prove

{\small
% \begin{align*}
% & \left[ \kadpu_{*1} \; \cdots \; \kadpu_{*M} \right] (\Kadponline + \sigma_n^2 I )^{-1}\!\begin{bmatrix} \kadpu_{*1} \\ \vdots \\ \kadpu_{*M} \end{bmatrix} \ge \frac{1 - \epsilon}{1 + \epsilon(M-2)} \sum_{i=1}^{M} \nqi^2
% \end{align*}
\begin{align*}
\eqref{eq:main-lower-bound} & \Leftrightarrow
\; \left[ \kadpu_{*1} \; \cdots \; \kadpu_{*M} \right] (\Kadponline + \sigma_n^2 I )^{-1} \begin{bmatrix} \kadpu_{*1} \\ \vdots \\ \kadpu_{*M} \end{bmatrix} \\
& \ge \frac{1 - \epsilon}{1 + \epsilon(M-2)} \left[ \kadpu_{*1} \; \cdots \; \kadpu_{*M} \right] \Diag\!\left(\!\left[\!\frac{1}{\kadp_{1}} \cdots \frac{1}{\kadp_{M}}\!\right]\!\right)\!\begin{bmatrix} \kadpu_{*1} \\ \vdots \\ \kadpu_{*M} \end{bmatrix}. \nonumber
\end{align*}}
It is sufficient to prove that 
\begin{equation*}
    (\Kadponline + \sigma_n^2 I )^{-1} \succeq \frac{1 - \epsilon}{1 + \epsilon(M-2)} \Diag\left(\left[\frac{1}{\kadp_{1}} \cdots \frac{1}{\kadp_{M}}\right]\right),
\end{equation*}
and this is equivalent to
\begin{equation}
\label{eq:proof-step1}
    \frac{1 + \epsilon(M-2)}{1 - \epsilon} \Diag\left(\left[\kadp_{1} \;\cdots\; \kadp_{M}\right]\right) - (\Kadponline + \sigma_n^2 I) \succeq 0.
\end{equation}
We have

{\small \begin{align*}
    & \frac{1 + \epsilon(M-2)}{1 - \epsilon} \Diag\left(\left[\kadp_{1} \;\cdots\; \kadp_{M}\right]\right) - (\Kadponline + \sigma_n^2 I) \nonumber \\
    = & \begin{bmatrix} 
    \frac{\epsilon (M-1)}{1-\epsilon}\kadp_{1} - \sigma_n^2 & & - \kadp_{ij} \\
     & \ddots &  \\
    -\kadp_{ji} & & \frac{\epsilon (M-1)}{1-\epsilon}\kadp_{M} - \sigma_n^2
     \end{bmatrix} \nonumber \\
     = & \Diag\!\left(\!\sqrt{\kadp_{1}} ,\cdots,\sqrt{\kadp_{M}}\right)\! A \;\Diag\!\left(\!\sqrt{\kadp_{1}} \;\cdots\; \sqrt{\kadp_{M}}\right), \label{eq:proof1} % \\
     % = & \frac{\epsilon (N-1)}{1-\epsilon}  \Diag\!\left(\!\sqrt{\kadp_{1}} ,\cdots,\sqrt{\kadp_{N}}\right)\! B \; \Diag\!\left(\!\sqrt{\kadp_{1}} \;\cdots\; \sqrt{\kadp_{N}}\right)
\end{align*}}
where 
\begin{equation*}
A: = \begin{bmatrix} 
    \frac{\epsilon (M-1)}{1-\epsilon} - \frac{\sigma_n^2}{\kadp_1} \!& & \! - \frac{\kadp_{ij}}{\sqrt{\kadp_{i}\kadp_{j}}} \\
    & \ddots & \\
    -\frac{\kadp_{ji}}{\sqrt{\kadp_{j}\kadp_{i}}} \!&  &\! \frac{\epsilon (M-1)}{1-\epsilon} -\frac{\sigma_n^2}{\kadp_M}
     \end{bmatrix}.
\end{equation*}

\noindent Thus, it is sufficient to prove that $A$ is positive semidefinite. By Lemma \ref{lemma:weyl},
\begin{align*}
\lambda_{\min}(A) \ge \lambda_{\min}(\epsilon(M-1)\bar{S}) + \lambda_{\min}\left(A - \epsilon(M-1) \bar{S}\right),
\end{align*}
where
\begin{equation*}
    \bar{S}:= \begin{bmatrix} 
    1 \!& & \! - \frac{1}{\epsilon (M-1)} \frac{\kadp_{ij}}{\sqrt{\kadp_{i}\kadp_{j}}} \\
    & \ddots & \\
    -\frac{1}{\epsilon (M-1)}\frac{\kadp_{ji}}{\sqrt{\kadp_{j}\kadp_{i}}} \!&  &\! 1
     \end{bmatrix}.
\end{equation*}
Note that
{\footnotesize \begin{align*}
    A \!- \!\epsilon(M-1) \bar{S} \!=
    \!\Diag\left(\frac{\epsilon^2 (M - 1)}{1-\epsilon} - \frac{\sigma_n^2}{\kadp_1}, \cdots, \frac{\epsilon^2 (M - 1)}{1-\epsilon} - \frac{\sigma_n^2}{\kadp_M} \right),
\end{align*}}

\noindent and from \eqref{eq:noise_cond}, $\frac{\epsilon^2 (M - 1)}{1-\epsilon} - \frac{\sigma_n^2}{\kadp_i} \ge 0$ for all $i=1,\cdots, M$, thus, $A - \epsilon(M-1) \bar{S}$ is positive semidefinite. Therefore, it is now sufficient to prove that $\bar{S}$ is positive semidefinite.

We define
\begin{equation*}
    C:= \begin{bmatrix}
    0 & & \frac{1}{\epsilon(M-1)} \frac{|\kadp_{ij}|}{\sqrt{\kadp_i}\bar{k_{j}}} \\
    & \ddots & \\
    \frac{1}{\epsilon(M-1)} \frac{|\kadp_{ji}|}{\sqrt{\kadp_j}\bar{k_{i}}} & & 0
    \end{bmatrix}.
\end{equation*}
By applying Lemma \ref{lemma:perronfrobenius} to $C$ which is non-negative, and by using condition \eqref{eq:correlation}:
\vspace{-0.5em}

\small
\begin{equation*}
    \lambda_{\max}(C) \!\le\!\max_{i}\!\!\sum_{j=1, j\neq i}^{M} \frac{1}{\epsilon(M-1)} \frac{|\kadp_{ij}|}{\sqrt{\kadp_{i}\kadp_{j}}} < \frac{1}{\epsilon(M-1)} \; \epsilon (M-1)\!=\!1.
\end{equation*}
\normalsize

\vspace{-0.5em}
Thus, we have $\lambda_{\max}(C) < 1$. By Lemma \ref{lemma:weyl}, we have
\begin{equation*}
    \lambda_{\min}(I - C) \ge \lambda_{\min}(I) + \lambda_{\min}(-C) = 1 - \lambda_{\max}(C) > 0.
\end{equation*}
Thus, $S:=I - C$ is positive definite. Note that $\bar{S}$ and $S$ satisfy the conditions in Lemma \ref{lemma:flippingsigns} since $0 \le \frac{1}{\epsilon(M-1)}\frac{|\kadp_{ij}|}{\sqrt{\kadp_{i}\kadp_{j}}}$. Thus, by Lemma \ref{lemma:flippingsigns}, $\bar{S}$ is positive definite. 
\hfill
\qedsymbol
% From Lemma \ref{lemma:eigval_ineq}, we have
% \begin{align}
%  & \lambda_{\min}\left( \frac{1 + \epsilon(N-2)}{1 - \epsilon} \Diag\left(\left[\kadp_{1} \;\cdots\; \kadp_{N}\right]\right) - \Kadp
% \right) \\
% \ge & \frac{\epsilon (N-1)}{1-\epsilon} \min_{i} \kadp_i \lambda_{\min}\left( B \right)
% \end{align}

% \begin{align}
%     & \frac{1 + \epsilon(N-2)}{1 - \epsilon} Diag\left(\left[\kadp_{1} \;\cdots\; \kadp_{N}\right]\right) - \Kadp \\
%     = & \begin{bmatrix} 
%     \frac{\epsilon (N-1)}{1-\epsilon}\kadp_{1} & - \kadp_{12} && \cdots & - \kadp_{1N} \\
%     -\kadp_{21} & \frac{\epsilon (N-1)}{1-\epsilon}\kadp_{2} & & \cdots & \vdots \\
%     \vdots & & & \ddots & -\kadp_{(N-1)N} \\
%     -\kadp_{N1} & \cdots & & -\kadp_{N(N-1)} & \frac{\epsilon (N-1)}{1-\epsilon}\kadp_{N}
%      \end{bmatrix} \\
%      = & Diag\left(\left[\sqrt{\kadp_{1}} \;\cdots\; \sqrt{\kadp_{N}}\right]\right) \begin{bmatrix} 
%     \frac{\epsilon (N-1)}{1-\epsilon} & - \frac{\kadp_{12}}{\sqrt{\kadp_{1}\kadp_{2}}} && \cdots & - \frac{\kadp_{1N}}{\kadp_{1}\kadp_{N}} \\
%     -\frac{\kadp_{21}}{\sqrt{\kadp_{2}\kadp_{1}}} & \frac{\epsilon (N-1)}{1-\epsilon} & & \cdots & \vdots \\
%     \vdots & & & \ddots & -\frac{\kadp_{(N-1)N}}{\sqrt{\kadp_{N-1}\kadp_{N}}} \\
%     -\frac{\kadp_{N1}}{\sqrt{\kadp_{N}\kadp_{1}}} & \cdots & & -\frac{\kadp_{N(N-1)}}{\sqrt{\kadp_{N}\kadp_{N-1}}} & \frac{\epsilon (N-1)}{1-\epsilon}
%      \end{bmatrix} Diag\left(\left[\sqrt{\kadp_{1}} \;\cdots\; \sqrt{\kadp_{N}}\right]\right)
% \end{align}

\subsection{Alternative lower bound of $\dataselectobjective(x, u)$}
\label{appendix:alternative-theorem}

\noindent 

\revision{
\begin{theorem}
\label{thm:main_alternative} We define \textit{modified kernel-based alignment measure} as
\begin{equation}
\nnqi(x, u) := \frac{|\kadpu_{*i}(x, u)|}{\sqrt{\kadp_i + \frac{1 - \epsilon}{1 + \epsilon(M-2)} \sigma_n^2}}. \label{eq:normalized_kernel_metric2}
\end{equation}
For a given dataset $\mathbb{D}_M$ with $M \ge 2$, assume that there exists a constant $\epsilon  \in [0, 1)$ that satisfies 
\begin{equation}
    \kadp_{ij}^2 < \epsilon^2 \kadp_{i} \kadp_{j},
\end{equation}
for all $i, j = 1, \cdots, M$ and $i\neq j$. Then, $\dataselectobjective(x, u)$ is lower bounded by the inequality below
\begin{equation}
\label{eq:main-lower-bound2}
\dataselectobjective(x, u) \ge \frac{1 - \epsilon}{1 + \epsilon(M-2)} \sum_{i=1}^{M} \nnqi^2(x, u).
\end{equation}
Note that the equality is satisfied when $\epsilon = 0$.
\end{theorem}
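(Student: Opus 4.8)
The plan is to follow the proof of Theorem~\ref{thm:main} almost line by line; the only new ingredient is that the extra $\sigma_n^2$ inside the square root in \eqref{eq:normalized_kernel_metric2} is calibrated precisely so that the noise terms cancel out of the governing matrix inequality, which is why the hypothesis \eqref{eq:noise_cond} becomes unnecessary here. Write $\rho := \tfrac{1-\epsilon}{1+\epsilon(M-2)}\in(0,1]$. Exactly as in the opening of the proof of Theorem~\ref{thm:main}, both $\dataselectobjective(x,u)$ and the right-hand side of \eqref{eq:main-lower-bound2} are quadratic forms in the vector $[\kadpu_{*1}(x,u)\ \cdots\ \kadpu_{*M}(x,u)]^\top$, sandwiched respectively by $(\Kadponline+\sigma_n^2 I)^{-1}$ and by $\rho\,\Diag\!\bigl(\tfrac{1}{\kadp_1+\rho\sigma_n^2},\ \cdots,\ \tfrac{1}{\kadp_M+\rho\sigma_n^2}\bigr)$. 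Hence it suffices to establish the Loewner inequality
\[
(\Kadponline + \sigma_n^2 I)^{-1} \succeq \rho\,\Diag\!\left(\tfrac{1}{\kadp_1+\rho\sigma_n^2},\ \cdots,\ \tfrac{1}{\kadp_M+\rho\sigma_n^2}\right).
\]

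Next I would invert both sides: both matrices are positive definite --- the left one because $\Kadponline\succeq 0$ and $\sigma_n>0$, the right one trivially --- so, by antitonicity of matrix inversion on the positive-definite cone, the goal becomes $\tfrac1\rho\,\Diag(\kadp_1+\rho\sigma_n^2,\cdots,\kadp_M+\rho\sigma_n^2) \succeq \Kadponline + \sigma_n^2 I$. The key observation is the identity $\tfrac1\rho\,\Diag(\kadp_i+\rho\sigma_n^2) = \tfrac1\rho\,\Diag(\kadp_i) + \sigma_n^2 I$: the two $\sigma_n^2 I$ summands cancel, and the claim collapses to the noise-free inequality
\[
\frac{1+\epsilon(M-2)}{1-\epsilon}\,\Diag(\kadp_1,\cdots,\kadp_M) - \Kadponline \succeq 0,
\]
which is precisely \eqref{eq:proof-step1} in the proof of Theorem~\ref{thm:main} with $\sigma_n$ set to $0$. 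Note that when $\sigma_n=0$ the hypothesis \eqref{eq:noise_cond} holds automatically (its right-hand side is nonnegative), so no condition on the noise variance is required in this theorem.

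Finally I would reproduce, with $\sigma_n^2=0$ throughout, the positive-semidefiniteness argument from the remainder of the proof of Theorem~\ref{thm:main}: factor the left-hand side as $\Diag(\sqrt{\kadp_1},\cdots,\sqrt{\kadp_M})\,A'\,\Diag(\sqrt{\kadp_1},\cdots,\sqrt{\kadp_M})$, where $A'$ has every diagonal entry equal to $\tfrac{\epsilon(M-1)}{1-\epsilon}$ and off-diagonal entries $-\kadp_{ij}/\sqrt{\kadp_i\kadp_j}$; split $A' = \epsilon(M-1)\,\bar S + \Diag\!\bigl(\tfrac{\epsilon^2(M-1)}{1-\epsilon},\cdots,\tfrac{\epsilon^2(M-1)}{1-\epsilon}\bigr)$, the diagonal remainder being obviously positive semidefinite; and show $\bar S\succ 0$ by combining Lemma~\ref{lemma:perronfrobenius} (to bound the largest eigenvalue of the associated nonnegative matrix $C$ strictly below $1$, using \eqref{eq:correlation}), Lemma~\ref{lemma:weyl} (to get $\lambda_{\min}(I-C)>0$), and Lemma~\ref{lemma:flippingsigns} (to transfer positive definiteness from $I-C$ to $\bar S$). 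Since essentially every step after the cancellation is already carried out in the proof of Theorem~\ref{thm:main}, the only genuinely new work --- and the one place deserving care --- is verifying the $\sigma_n^2 I$ cancellation and checking that the definition \eqref{eq:normalized_kernel_metric2} of $\nnqi$ is exactly what produces the diagonal $\Diag(1/(\kadp_i+\rho\sigma_n^2))$ that makes it happen.
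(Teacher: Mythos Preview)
Your proposal is correct and follows essentially the same route as the paper: reduce to the Loewner inequality, observe that the choice of denominator in $\nnqi$ makes the $\sigma_n^2 I$ terms cancel so that the task becomes the noise-free version of \eqref{eq:proof-step1}, and then reuse the $A'=\epsilon(M-1)\bar S+\tfrac{\epsilon^2(M-1)}{1-\epsilon}I$ decomposition together with the positive definiteness of $\bar S$ already established in the proof of Theorem~\ref{thm:main}. If anything, your write-up is more explicit than the paper's about why the $\sigma_n^2$ cancellation works.
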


\noindent Compared to Theorem \ref{thm:main}, this lower bound does not require any condition on the noise variance $\sigma_n^2$. However, since $\nqi(x, u) \ge \nnqi(x, u)$, this is a looser bound compared to Theorem \ref{thm:main}.

\begin{proof}
The proof is very similar to the proof of Theorem \ref{thm:main} in Appendix \ref{appendix:thm1proof}. By following similar steps, it is sufficient to prove that

\begin{equation*}
% \label{eq:proof2-step1}
    \frac{1 + \epsilon(M-2)}{1 - \epsilon} \Diag\left(\left[\kadp_{1} \;\cdots\; \kadp_{M}\right]\right) - \Kadponline \succeq 0.
\end{equation*}
The left hand side is equal to
\begin{equation*}
\Diag\!\left(\!\sqrt{\kadp_{1}} ,\cdots,\sqrt{\kadp_{M}}\right)\! A'\;\Diag\!\left(\!\sqrt{\kadp_{1}} \;\cdots\; \sqrt{\kadp_{M}}\right)    
\end{equation*}
where
\begin{equation*}
A'\!:=\!\begin{bmatrix} 
    \frac{\epsilon (M-1)}{1-\epsilon}\!& & \! - \frac{\kadp_{ij}}{\sqrt{\kadp_{i}\kadp_{j}}} \\
    & \ddots & \\
    -\frac{\kadp_{ji}}{\sqrt{\kadp_{j}\kadp_{i}}} \!&  &\! \frac{\epsilon (M-1)}{1-\epsilon}
     \end{bmatrix} \!= \epsilon(M-1) \bar{S} + \frac{\epsilon^2 (M - 1)}{1-\epsilon} I.
\end{equation*}
Since $\bar{S}$ is positive semidefinite from the proof of Theorem \ref{thm:main}, it is easily shown that $A'$ is positive semidefinite.
\end{proof}
} % end of revision

\subsection{Additional Experiment: Impact of Noise on Algorithm}
\label{appendix:noise-experiment}

\revision{
In this experiment, we examine how measurement noise impacts our data selection algorithm, particularly focusing on how prior knowledge of this noise can guide the choice of the maximum self-correlation threshold $\epsilon$. We use our running example in Section \ref{subsec:running_example_intro} to conduct this study.

We first define the metric to quantify the quality of the online dataset $\datasetonline$ given the state $x$. Recall that the objective of our data selection algorithm is to (indirectly) maximize the data selection objective \eqref{eq:optimization_subset_problem}.
Hence, we define the normalized true data selection objective as the quality metric, given as
% Hence, to assess the quality of the constructed online dataset $\datasetonline$, we define the metric $m(\datasetonline)$, which evaluates the expectation of the normalized true data selection objective over the state space:
\begin{align}
\label{eq:data_quality_metric}
I(\datasetonline|x) &= {\dataselectobjective(x, \socc) \over \kadp_{**}(x, \socc)},
\end{align}
which is the information ratio that $\datasetonline$ contains to characterize the target function $\Delta(x, \socc)$.
% We further note that $\socc$ is used instead of $\socconline$ as per Remark \ref{remark3}.
We further define the metric to evaluate the quality of the data selection algorithm with the threshold $\epsilon$ as
\begin{align}
\label{eq:data_quality_metric_algo}
\zeta(\epsilon) = \mathbb E_{x}\left[I\left(\datasetonline^{\epsilon}(x)|x\right)\right]
\end{align}
\noindent where $\datasetonline^{\epsilon}(x)$ is the online dataset constructed using Algorithm \ref{algo:sparsegp} with hyperparameter $\epsilon$ at the state $x$, and the expectation is taken over the state space empirically.

Figure \ref{fig:noise-experiment} illustrates the evaluated metric with varying noise levels $\sigma_{n, max}$ and the data selection hyperparameter $\epsilon$.
Artificial noise sampled from a uniform distribution $\mathcal{U}_{[-\sigma_{n, max}, \sigma_{n, max}]}$ is added to the dataset $\dataset$, and the GP is trained for each noise scale. 
The GP kernel hyperparameter $\sigma_{n}$ is set to $\sigma_{n, max}$. 
For each noise level, we construct the online dataset using the data selection algorithm with varying hyperparameters $\epsilon$. We evaluate the metric $\zeta(\epsilon)$ by uniformly sampling the state $x$ from the domain $[-1, 0] \times [0, 1]$ and taking the empirical mean. We employ $M=40$. As measurement noise increases, choosing a larger $\epsilon$ tends to maximize the metric. 
This suggests that allowing higher correlation between data points is beneficial in high-noise scenarios, but might be harmful in low-noise scenarios.
}

\begin{figure}
\centering
\includegraphics[width=0.9\columnwidth]{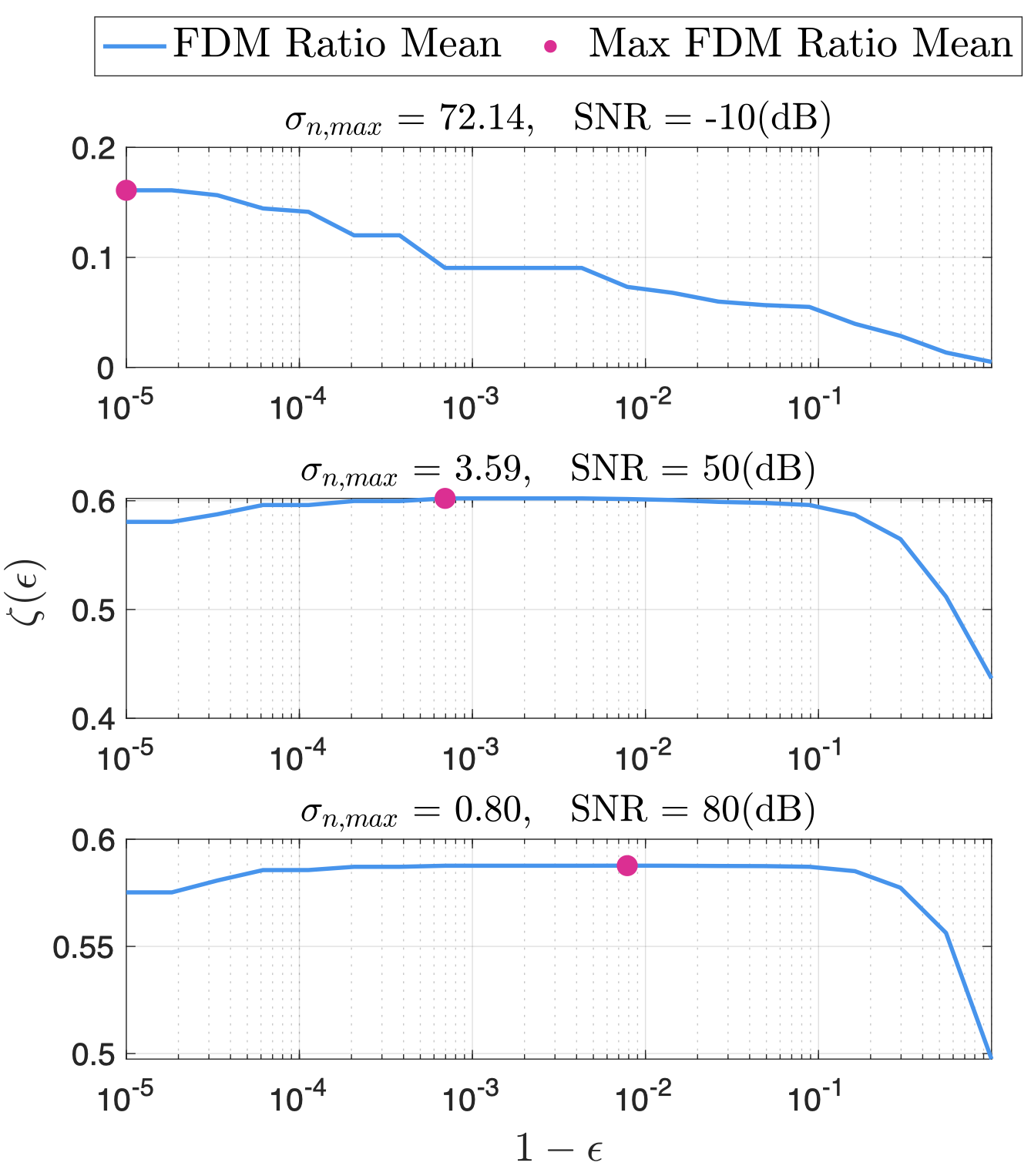}
\vspace{-0.5em}
\caption{
\revision{The trend of the data quality metric $\zeta(\epsilon)$, defined in \eqref{eq:data_quality_metric_algo}, in relation to the data selection hyperparameter $\epsilon$ at various levels of measurement noise. 
The hyperparameter $\epsilon$ represents the maximum allowable correlation in the constructed online dataset as per Algorithm \ref{algo:sparsegp}. 
Each row corresponds to a different noise scale $\sigma_{n, max}$, with the associated signal-to-noise ratio (SNR) indicated.
The x-axis of the plot is on a logarithmic scale.
The magenta dot indicates the optimal $\epsilon$ that attains the highest data quality metric for each noise scale.
}
}
\label{fig:noise-experiment}
\end{figure}

\bibliographystyle{IEEEtran}
\bibliography{reference.bib}{}

\begin{IEEEbiography}
[{\includegraphics[width=1in,height=1.25in,clip,keepaspectratio]{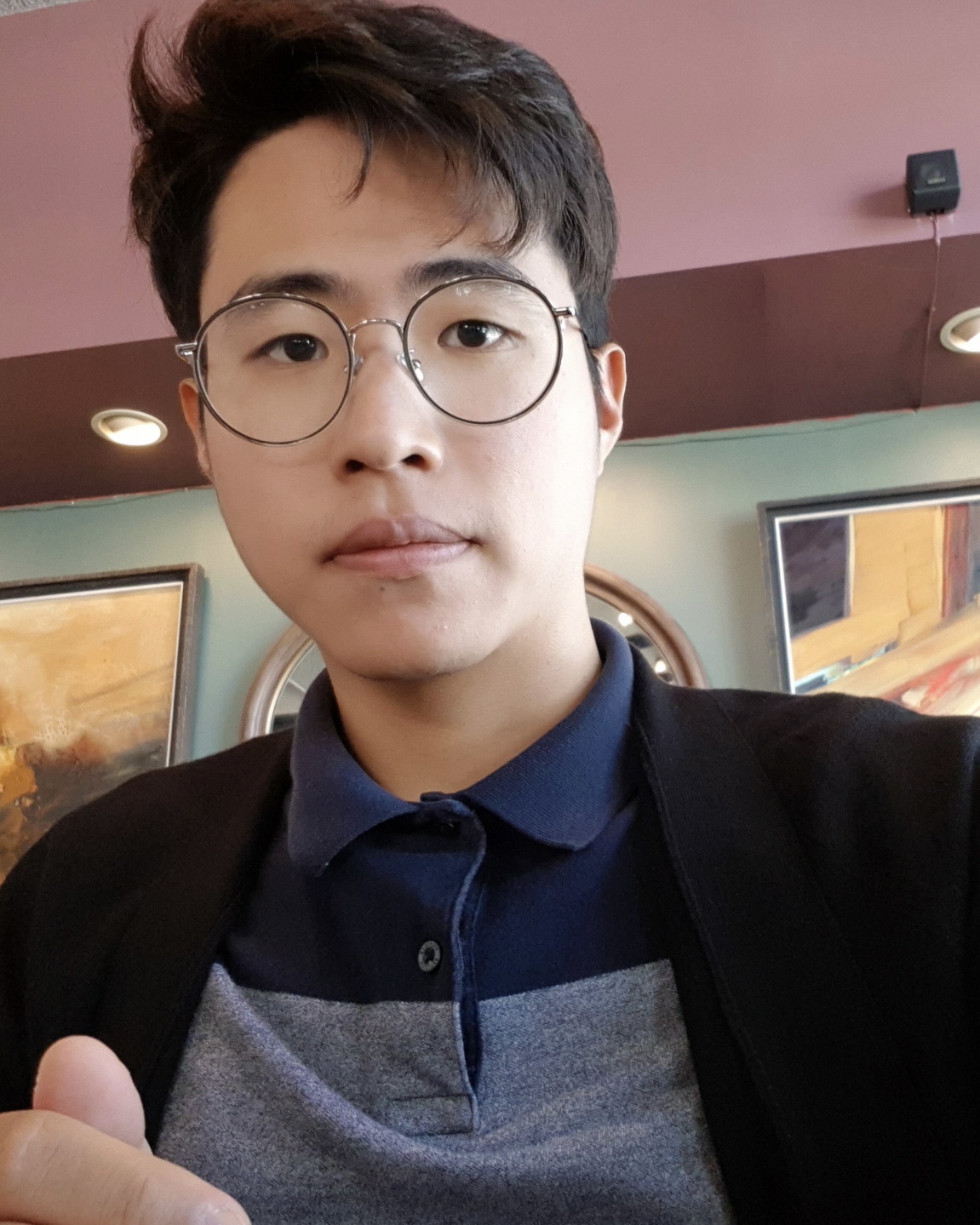}}]{Jason J. Choi} (Student Member, IEEE) received the B.S. degree in mechanical engineering from Seoul National University in 2019. He is currently pursuing a Ph.D. degree at University of California Berkeley in mechanical engineering. His research interests center on optimal control theories for nonlinear and hybrid systems, data-driven methods for safe control, and their applications to robotics and autonomous mobility.    
\end{IEEEbiography}

\begin{IEEEbiography}[{\includegraphics[width=1in,height=1.25in,clip,keepaspectratio]{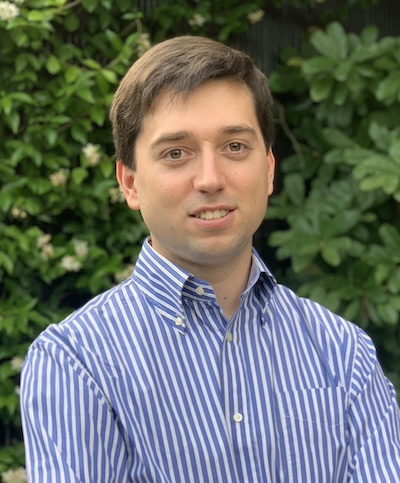}}]{Fernando Castañeda} received a Ph.D. degree in mechanical engineering from the University of California, Berkeley in 2023. He is a Rafael del Pino Foundation Fellow (2020) and a ``la Caixa" Foundation Fellow (2017). His research interests lie at the intersection of nonlinear control and data-driven methods, with a particular emphasis on ensuring the safe operation of high-dimensional systems in the real world. 
\end{IEEEbiography}

\begin{IEEEbiography}
[{\includegraphics[width=1in,height=1.25in,clip,keepaspectratio]{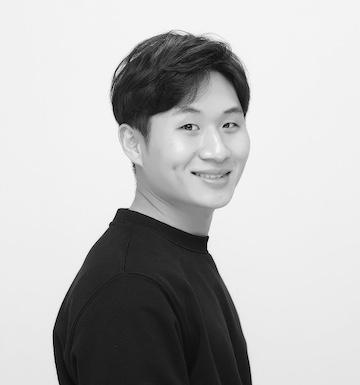}}]{Wonsuhk Jung} (Student Member, IEEE) received the B.S. degree in Mechanical Engineering and Artificial Intelligence from Seoul National University in 2022. He is currently pursuing a Ph.D. degree at Georgia Institute of Technology in Robotics. His research focuses on leveraging optimal control, planning, and data-driven methodologies for the safe operation of contact-rich robotics platforms.
\end{IEEEbiography}

\begin{IEEEbiography}
[{\includegraphics[width=1in,height=1.25in,clip,keepaspectratio]{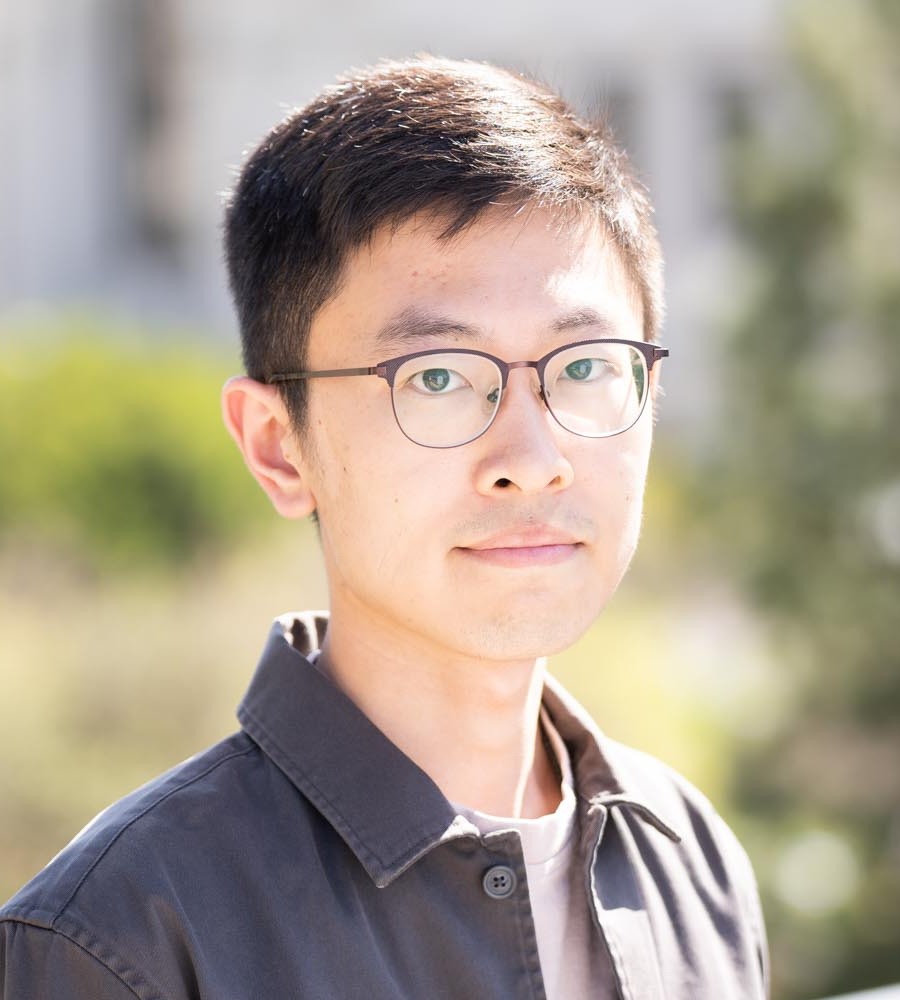}}]{Bike Zhang} (Student Member, IEEE) received the B.Eng. degree in electrical engineering and automation from Huazhong University of Science and Technology in 2017. He is currently working toward the Ph.D. degree in mechanical engineering at University of California Berkeley. His current research interests include predictive control and reinforcement learning with application to legged robotics.
\end{IEEEbiography}

\begin{IEEEbiography}
[{\includegraphics[width=1in,height=1.25in,clip,keepaspectratio]{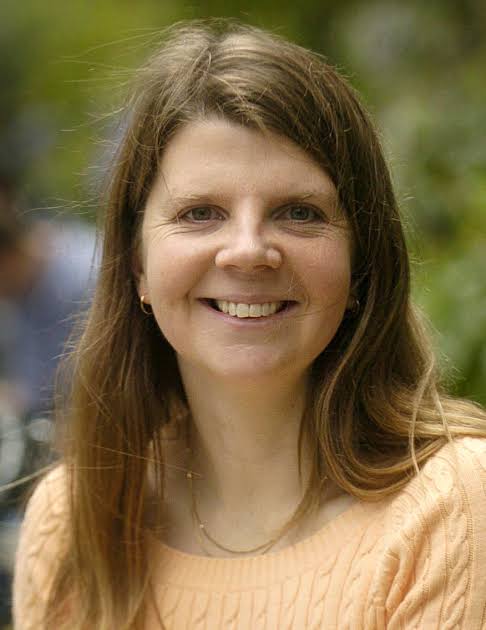}}]
{{C}laire J. Tomlin}{\,}(Fellow, IEEE) is the James and Katherine Lau Professor of Engineering and professor and chair of the Department of Electrical Engineering and Computer Sciences (EECS) at UC Berkeley. She was an assistant, associate, and full professor in aeronautics and astronautics at Stanford University from 1998 to 2007, and in 2005, she joined UC Berkeley. She works in the area of control theory and hybrid systems, with applications to air traffic management, UAV systems, energy, robotics, and systems biology. She is a MacArthur Foundation Fellow (2006), an IEEE Fellow (2010), and in 2017, she was awarded the IEEE Transportation Technologies Award. In 2019, Claire was elected to the National Academy of Engineering and the American Academy of Arts and Sciences.
\end{IEEEbiography}

\begin{IEEEbiography}
[{\includegraphics[width=1in,height=1.25in,clip,keepaspectratio]{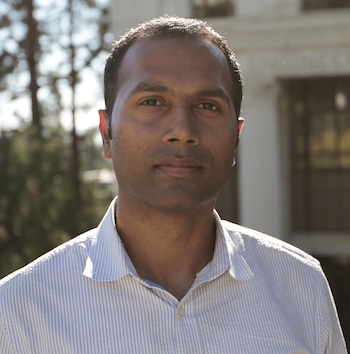}}]
{{K}oushil Sreenath}{\,}(Member, IEEE) is an associate professor of mechanical engineering, at UC Berkeley. He received a Ph.D. degree in electrical engineering and computer science and a M.S. degree in applied mathematics from the University of Michigan at Ann Arbor, MI, in 2011. He was a postdoctoral scholar at the GRASP Lab at University of Pennsylvania from 2011 to 2013 and an assistant professor at Carnegie Mellon University from 2013 to 2017. His research interest lies at the intersection of highly dynamic robotics and applied nonlinear control. He received the NSF CAREER, Hellman Fellow, Best Paper Award at the Robotics: Science and Systems (RSS), and the Google Faculty Research Award in Robotics.
\end{IEEEbiography}
\end{document}